\begin{document}
\mainmatter              
\title{Understanding NTK Variance in Implicit Neural Representations}
\titlerunning{NTK Variance in INRs}  
%
\author{Chengguang Ou \and Yixin Zhuang}
\authorrunning{Chengguang Ou et al.} 
%
\tocauthor{Chengguang Ou and Yixin Zhuang}
\institute{Fuzhou University
\\
\email{231020043@fzu.edu.cn, yixin.zhuang@gmail.com}
}

\maketitle              

\begin{abstract}
Implicit Neural Representations (INRs) often converge slowly and struggle to recover high-frequency details due to spectral bias. While prior work links this behavior to the Neural Tangent Kernel (NTK), how specific architectural choices affect NTK conditioning remains unclear. We show that many INR mechanisms can be understood through their impact on a small set of pairwise similarity factors and scaling terms that jointly determine NTK eigenvalue variance. For standard coordinate MLPs, limited input–feature interactions induce large eigenvalue dispersion and poor conditioning. We derive closed-form variance decompositions for common INR components and show that positional encoding reshapes input similarity, spherical normalization reduces variance via layerwise scaling, and Hadamard modulation introduces additional similarity factors strictly below one, yielding multiplicative variance reduction. This unified view explains how diverse INR architectures mitigate spectral bias by improving NTK conditioning. Experiments across multiple tasks confirm the predicted variance reductions and demonstrate faster, more stable convergence with improved reconstruction quality.
\keywords{Implicit Neural Representations, Neural Tangent Kernel}
\end{abstract}

\section{Introduction}
\label{sec:intro}

\begin{figure}[t]
\centering
\includegraphics[width=0.99\linewidth]{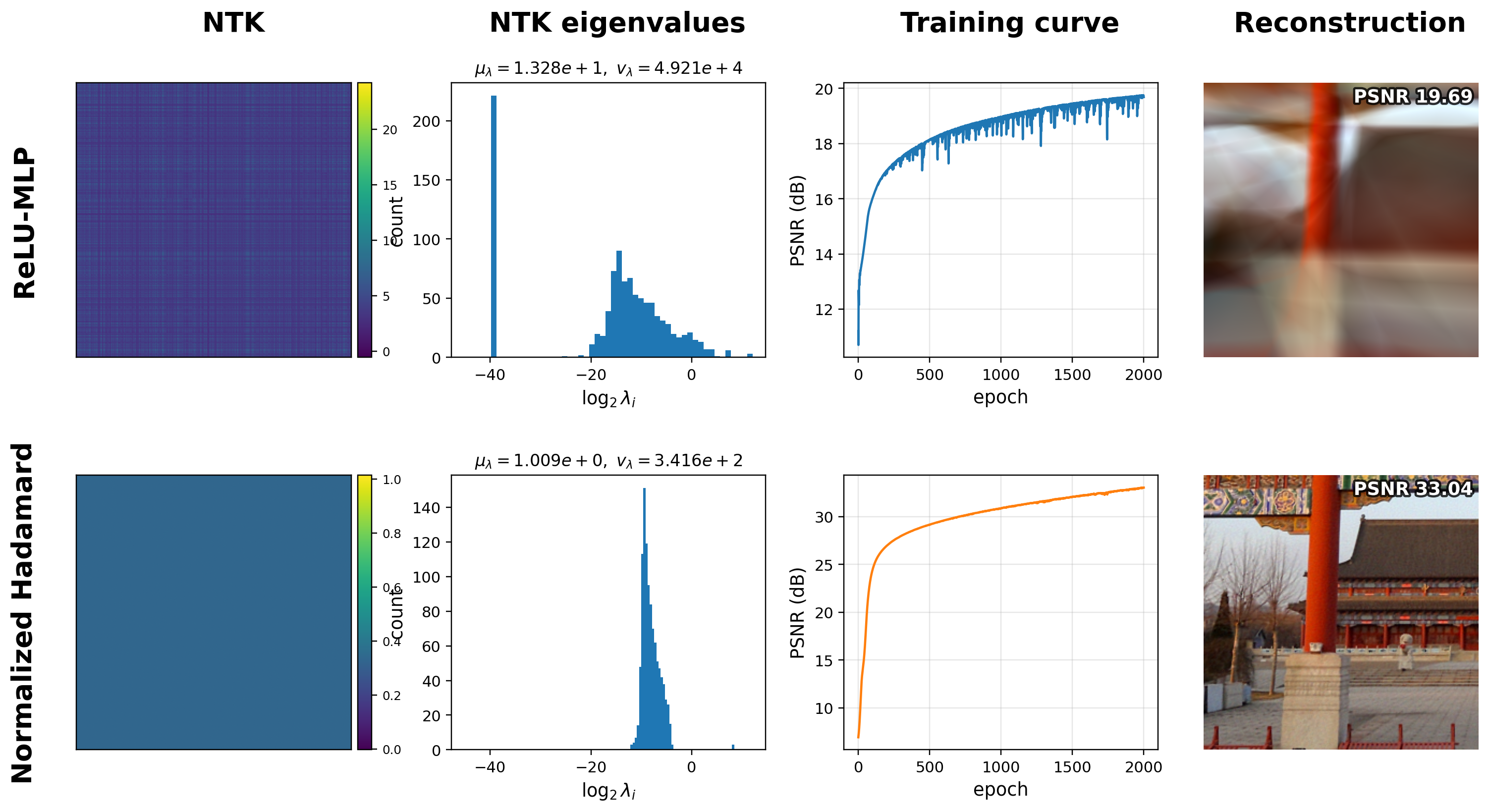}
\caption{Architectural operations reshape NTK conditioning and thus INR convergence. Panels (left to right) show NTK heatmaps, eigenvalue spectra, training curves, and reconstructions. \emph{Top}: A standard ReLU MLP exhibits strong hidden-feature alignment, visible as bright off-diagonal NTK bands, leading to concentrated similarity mass, heavy-tailed eigenvalues, and slow high-frequency convergence. \emph{Bottom}: Normalization and coordinate-dependent Hadamard modulation reduce feature alignment via reshaped similarity and bounded scaling, producing a more uniform NTK, a flatter spectrum, and more stable optimization with improved reconstruction fidelity.}
\label{fig:teaser}
\end{figure}

Implicit Neural Representations (INRs) parameterize continuous signals such as images and shapes using coordinate-based multilayer perceptrons (MLPs)~\cite{ref3,ref4,ref5,ref6}. Their resolution-agnostic formulation has enabled strong results in reconstruction, rendering, and signal compression~\cite{ref7,ref8,ref9,ref10,ref11,ref12}. However, INRs often converge slowly and struggle to recover high-frequency details due to spectral bias, a phenomenon linked to the conditioning of the Neural Tangent Kernel (NTK)~\cite{ref4,ref15,ref16,ref17,ref19}, whose spectrum governs gradient-descent dynamics in over-parameterized networks~\cite{ref20,ref14}.

While empirical studies show that INR architectures differ substantially in NTK conditioning, a clear architectural explanation remains missing. In particular, how design choices such as positional encoding, normalization, or modulation modify the NTK to improve convergence is not well understood. Existing approaches—including SIREN and Fourier feature networks—improve high-frequency learning but are typically motivated by bandwidth or initialization heuristics rather than a unified kernel-level view.

We provide such a view by showing that many INR operations act through a small set of pairwise similarity factors and scaling coefficients that jointly determine NTK eigenvalue variance. In standard coordinate MLPs, similarities between encoded inputs and correlations among hidden features induce large eigenvalue dispersion and poor conditioning.

Our analysis reveals that architectural components modify these variance-controlling factors in complementary ways. Positional encoding reshapes input similarity geometry, normalization introduces contractive layerwise scaling without altering similarity patterns, and coordinate-dependent Hadamard modulation adds new similarity terms strictly bounded in $(0,1)$, yielding multiplicative variance reduction and a flatter NTK spectrum. This perspective explains why diverse INR architectures mitigate spectral bias and suggests a simple design principle: reshaping similarity geometry or injecting bounded modulation systematically improves NTK conditioning.

We validate our theory on CT reconstruction~\cite{ref61} and image super-resolution~\cite{ref63}, showing that normalization and modulation consistently reduce empirical NTK variance, accelerate convergence, and improve reconstruction fidelity.

Our main contributions are summarized as follows:
\begin{itemize}
\item We derive an NTK variance decomposition for INRs in terms of pairwise similarities and scaling factors.
\item We prove that normalization contracts NTK variance and that Hadamard modulation introduces bounded similarity terms yielding multiplicative variance reduction.
\item We empirically demonstrate more stable optimization, improved reconstruction fidelity, and reduced spectral bias across multiple INR tasks.
\end{itemize}

\section{Related Work}

\subsection{Implicit Neural Representations}

Implicit Neural Representations model signals as continuous coordinate-to-attribute functions and are widely used in reconstruction, neural rendering, and 3D modeling~\cite{ref8,ref22,ref23,ref24,ref25}. Despite their expressivity, coordinate-based MLPs suffer from spectral bias, motivating numerous architectural modifications.

One prominent direction enhances input representations. Fourier Feature Networks~\cite{ref23} and deterministic sinusoidal encodings as in NeRF~\cite{ref5} reshape input geometry to make high-frequency structure more accessible and are now standard in many INR designs. Other approaches modify intermediate representations: SIREN~\cite{ref22} employs sinusoidal activations to preserve gradients, multi-scale models such as BACON~\cite{ref24} decompose signals by frequency, and modulation-based architectures (e.g., MFN~\cite{refMFN}, FINN~\cite{ref25}) apply coordinate-dependent feature reweighting. Additional extensions include residual connections, coordinate warping, and hybrid representations.

Although these methods consistently reduce spectral bias and improve convergence, their effectiveness is typically justified by architectural intuition rather than a unified analytical explanation of their optimization behavior.

\subsection{NTK-Based Analyses of INRs}

The Neural Tangent Kernel framework~\cite{ref15,ref16,ref17,ref32,ref34,ref35} provides a principled view of gradient descent in wide networks and has been used to study spectral bias in INRs. Prior work links slow high-frequency learning to rapidly decaying NTK eigenvalue spectra~\cite{ref13,ref36,ref41} and shows that architectural choices—such as ReLU, sinusoidal, or kernelized designs—affect kernel smoothness and eigenvalue dispersion~\cite{ref37,ref38,ref39,ref40}.

Existing analyses, however, typically consider individual components in isolation. Positional encodings reshape pairwise input similarity~\cite{ref23,ref5}, normalization layers stabilize gradients~\cite{ref51,ref52}, and modulation-based models empirically accelerate high-frequency convergence, yet their precise and comparative effects on NTK eigenvalue statistics remain underexplored.

Overall, prior work offers fragmented insights into how specific mechanisms influence spectral bias. Our work unifies these perspectives by showing that positional encoding, normalization, and modulation all act through a common pathway—modifying pairwise similarity and scaling factors that control NTK eigenvalue variance—and thereby directly shape INR convergence dynamics.

\section{Preliminaries}
\label{sec:baseline-ntk}

We introduce the baseline coordinate MLP and its NTK, and show that NTK eigenvalue variance—governing spectral bias—admits a compact decomposition in terms of input similarity, hidden-feature similarity, and an energy scale. This decomposition motivates later architectural design principles.

\subsection{Baseline Coordinate MLP and NTK}

We consider a two-layer overparameterized ReLU network of width $m$ with output scale $a>0$,
\begin{equation}\label{eq:bl-two-layer}
f(\mathbf W;\mathbf x)
=\frac{a}{\sqrt m}\sum_{r=1}^m \sigma(\mathbf w_r^\top\mathbf x),
\end{equation}
where $\mathbf x\in\mathbb R^{d_0}$ and $\mathbf w_r(0)\sim\mathcal N(\mathbf 0,\kappa^2\mathbf I)$.  
Given samples $\{(\mathbf x_i,y_i)\}_{i=1}^n$, we minimize the quadratic loss
$\Phi(\mathbf W)=\tfrac12\|\mathbf y-\mathbf u\|_2^2$. Under standard NTK assumptions~\cite{ref15,ref32,ref35}, early-time dynamics satisfy
\[
\dot{\mathbf u}(t)
=-\mathbf H(\mathbf W(0))(\mathbf u(t)-\mathbf y),
\]
where $\mathbf H$ is the NTK at initialization with entries
$
H_{ij}
=\sum_{r=1}^m
\left\langle
\frac{\partial f(\mathbf w;\mathbf x_i)}{\partial \mathbf w_r},
\frac{\partial f(\mathbf w;\mathbf x_j)}{\partial \mathbf w_r}
\right\rangle.
$
Differentiating~\eqref{eq:bl-two-layer} with respect to the first-layer weights $\mathbf w_r$ for a data point $\mathbf x_i$ yields
$
\frac{\partial f(\mathbf w;\mathbf x_i)}{\partial \mathbf w_r}
=\frac{a}{\sqrt m}\,\mathbb I_{r,i}\,\mathbf x_i,
\quad
\mathbb I_{r,i}=\mathbb I\{\mathbf w_r^\top\mathbf x_i\ge 0\},
$
which implies the NTK entry
$
H_{ij}
=\frac{a^2}{m}\,\rho_{ij}\sum_{r=1}^m \mathbb I_{r,i}\mathbb I_{r,j},
\quad
\rho_{ij}=\mathbf x_i^\top\mathbf x_j.
$ 
Let $\mathbf s_i^{\mathrm{bl}}=(\mathbb I_{r,i})_{r=1}^m$, giving
\begin{equation}\label{eq:bl-NTK}
H_{ij}
=\frac{a^2}{m}\,\rho_{ij}\,
\langle \mathbf s_i^{\mathrm{bl}},\mathbf s_j^{\mathrm{bl}}\rangle .
\end{equation}

\subsection{Spectral Statistics and Bias}

Let $\{\lambda_i,\mathbf v_i\}$ be eigenpairs of $\mathbf H$.  
The error decomposes as~\cite{ref32}
\[
\|\mathbf u(t)-\mathbf y\|_2^2
\approx \sqrt{\sum_{i=1}^{n}(1-\eta\lambda_i)^{2k}\,(\mathbf{v}_i^\top\mathbf{y})^2},
\]
so convergence is controlled by the NTK spectrum. Large eigenvalue dispersion causes spectral bias.

We measure dispersion using
\begin{equation}\label{eq:def-mu-v}
\mu_\lambda=\tfrac1n\operatorname{Tr}(\mathbf H),
\qquad
v_\lambda=\tfrac1n\operatorname{Tr}(\mathbf H^2)-\mu_\lambda^2.
\end{equation}

\begin{proposition}[Baseline spectral statistics]\label{prop:bl-spectrum}
For the NTK in~\eqref{eq:bl-NTK},
\begin{align}
\mu_\lambda
&=\frac{a^2}{nm}\sum_i \rho_{ii}\|\mathbf s_i^{\mathrm{bl}}\|_2^2,
\label{eq:bl-mu-q}
\end{align}
$$
\frac1n\operatorname{Tr}(\mathbf H^2)
=\frac{a^4}{nm^2}\sum_{i,j}
(\rho_{ii}\rho_{jj}\tau^{\mathrm{bl}}_{x,ij})
(\|\mathbf s_i^{\mathrm{bl}}\|_2^4\|\mathbf s_j^{\mathrm{bl}}\|_2^4
(\tau^{\mathrm{bl}}_{s,ij})^2),
$$
where
\[
\tau^{\mathrm{bl}}_{x,ij}
=\frac{\rho_{ij}^2}{\rho_{ii}\rho_{jj}},
\qquad
\tau^{\mathrm{bl}}_{s,ij}
=\frac{\|\mathbf s_i^{\mathrm{bl}}\odot\mathbf s_j^{\mathrm{bl}}\|_2^2}
{\|\mathbf s_i^{\mathrm{bl}}\|_2^2\|\mathbf s_j^{\mathrm{bl}}\|_2^2}.
\]
\end{proposition}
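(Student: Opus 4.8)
The plan is to compute both traces directly from the rank-one-per-neuron structure of the baseline NTK in~\eqref{eq:bl-NTK} and then reorganize the resulting double sum into the advertised product of input and hidden-feature similarity factors. Nothing probabilistic enters: the identities hold entry-wise for the kernel at initialization, so the whole argument is algebraic bookkeeping once one observation about indicator vectors is in place.

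First I would handle $\mu_\lambda$. Since $H_{ii}=\tfrac{a^2}{m}\rho_{ii}\langle\mathbf s_i^{\mathrm{bl}},\mathbf s_i^{\mathrm{bl}}\rangle=\tfrac{a^2}{m}\rho_{ii}\|\mathbf s_i^{\mathrm{bl}}\|_2^2$, summing over $i$ and dividing by $n$ gives~\eqref{eq:bl-mu-q} immediately. For the second moment, $\tfrac1n\operatorname{Tr}(\mathbf H^2)=\tfrac1n\sum_{i,j}H_{ij}^2$, and substituting~\eqref{eq:bl-NTK} yields $\tfrac{a^4}{nm^2}\sum_{i,j}\rho_{ij}^2\,\langle\mathbf s_i^{\mathrm{bl}},\mathbf s_j^{\mathrm{bl}}\rangle^2$. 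The key observation is that $\mathbf s_i^{\mathrm{bl}}=(\mathbb I\{\mathbf w_r^\top\mathbf x_i\ge0\})_r$ is a $0/1$ vector, so $\mathbb I_{r,i}^2=\mathbb I_{r,i}$, and hence $\|\mathbf s_i^{\mathrm{bl}}\odot\mathbf s_j^{\mathrm{bl}}\|_2^2=\sum_r(\mathbb I_{r,i}\mathbb I_{r,j})^2=\sum_r\mathbb I_{r,i}\mathbb I_{r,j}=\langle\mathbf s_i^{\mathrm{bl}},\mathbf s_j^{\mathrm{bl}}\rangle$. Squaring gives $\langle\mathbf s_i^{\mathrm{bl}},\mathbf s_j^{\mathrm{bl}}\rangle^2=\|\mathbf s_i^{\mathrm{bl}}\odot\mathbf s_j^{\mathrm{bl}}\|_2^4$, so that $\tfrac1n\operatorname{Tr}(\mathbf H^2)=\tfrac{a^4}{nm^2}\sum_{i,j}\rho_{ij}^2\,\|\mathbf s_i^{\mathrm{bl}}\odot\mathbf s_j^{\mathrm{bl}}\|_2^4$.

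The final step is to write this in the claimed factored form by inserting and cancelling norm factors: $\rho_{ij}^2=(\rho_{ii}\rho_{jj})\cdot\tfrac{\rho_{ij}^2}{\rho_{ii}\rho_{jj}}=\rho_{ii}\rho_{jj}\,\tau^{\mathrm{bl}}_{x,ij}$, and likewise $\|\mathbf s_i^{\mathrm{bl}}\odot\mathbf s_j^{\mathrm{bl}}\|_2^4=\|\mathbf s_i^{\mathrm{bl}}\|_2^4\|\mathbf s_j^{\mathrm{bl}}\|_2^4\cdot\big(\tau^{\mathrm{bl}}_{s,ij}\big)^2$. Collecting the two groups reproduces exactly the stated expression for $\tfrac1n\operatorname{Tr}(\mathbf H^2)$, and subtracting $\mu_\lambda^2$ then yields $v_\lambda$ per~\eqref{eq:def-mu-v}.

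The proof is essentially routine; the only point worth flagging is the treatment of degenerate indices where $\rho_{ii}=0$ or $\|\mathbf s_i^{\mathrm{bl}}\|_2=0$, since the ratios defining $\tau^{\mathrm{bl}}_{x,ij}$ and $\tau^{\mathrm{bl}}_{s,ij}$ are then $0/0$. Such terms contribute zero to both the trace and to the factored right-hand side (a vanishing $\rho_{ij}$ or $\langle\mathbf s_i^{\mathrm{bl}},\mathbf s_j^{\mathrm{bl}}\rangle$ kills the summand), so one either restricts the double sum to indices with nonzero denominators or declares the corresponding $\tau$-ratios to be $0$; the identity is unaffected either way. Beyond this, recognizing the idempotency $\mathbb I_{r,i}^2=\mathbb I_{r,i}$ is the one substantive move, after which the whole statement follows by direct expansion.
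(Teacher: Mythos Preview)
Your proof is correct and follows essentially the same route as the paper's own argument in Appendix~\ref{sec:appA}: both compute $H_{ii}$ directly, exploit the idempotency of the binary gate vectors to identify $\langle\mathbf s_i^{\mathrm{bl}},\mathbf s_j^{\mathrm{bl}}\rangle=\|\mathbf s_i^{\mathrm{bl}}\odot\mathbf s_j^{\mathrm{bl}}\|_2^2$, square, and then insert the norm ratios defining $\tau^{\mathrm{bl}}_{x,ij}$ and $\tau^{\mathrm{bl}}_{s,ij}$. Your explicit remark on the degenerate $0/0$ case is a nice addition that the paper leaves implicit.
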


\begin{theorem}[Baseline variance proxy]\label{thm:baseline-MLP-varianve}
Approximating $\rho_{ii}\!\approx\!R_x^2$ and
$\|\mathbf s_i^{\mathrm{bl}}\|_2^2\!\approx\!\overline S_{\mathrm{bl}}$ yields
\begin{equation}\label{eq:bl-v-compact}
v_\lambda
\approx
\frac{a^4 R_x^4 \overline S_{\mathrm{bl}}^{\,2}}{n m^2}
\sum_{i\ne j}\tau^{\mathrm{bl}}_{x,ij}\tau^{\mathrm{bl}}_{s,ij},
\end{equation}
where
$\overline S_{\mathrm{bl}}=\tfrac1n\sum_i\|\mathbf s_i^{\mathrm{bl}}\|_2^2
\approx m/2$ at Gaussian initialization.
\end{theorem}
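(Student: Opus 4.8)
The plan is to start from the two exact expressions in Proposition~\ref{prop:bl-spectrum} and substitute the homogeneity approximations $\rho_{ii}\approx R_x^2$ and $\|\mathbf s_i^{\mathrm{bl}}\|_2^2\approx\overline S_{\mathrm{bl}}$ into both $\mu_\lambda$ and $\tfrac1n\operatorname{Tr}(\mathbf H^2)$, then form $v_\lambda=\tfrac1n\operatorname{Tr}(\mathbf H^2)-\mu_\lambda^2$ and simplify. First I would handle $\mu_\lambda$: replacing $\rho_{ii}$ by $R_x^2$ and $\|\mathbf s_i^{\mathrm{bl}}\|_2^2$ by $\overline S_{\mathrm{bl}}$ in~\eqref{eq:bl-mu-q} collapses the sum over $i$ to a factor of $n$, giving $\mu_\lambda\approx \tfrac{a^2}{m}R_x^2\overline S_{\mathrm{bl}}$, hence $\mu_\lambda^2\approx \tfrac{a^4}{m^2}R_x^4\overline S_{\mathrm{bl}}^{\,2}$.

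Next I would treat the second-moment term. In the displayed formula for $\tfrac1n\operatorname{Tr}(\mathbf H^2)$, the factors $\rho_{ii}\rho_{jj}$ become $R_x^4$ and $\|\mathbf s_i^{\mathrm{bl}}\|_2^4\|\mathbf s_j^{\mathrm{bl}}\|_2^4$ become $\overline S_{\mathrm{bl}}^{\,4}$; but I must be careful about the exponents, because $\tau^{\mathrm{bl}}_{x,ij}$ and $\tau^{\mathrm{bl}}_{s,ij}$ themselves carry hidden copies of $\rho$ and $\|\mathbf s\|$ in their denominators. The cleanest route is to rewrite $\rho_{ii}\rho_{jj}\tau^{\mathrm{bl}}_{x,ij}=\rho_{ij}^2$ and $\|\mathbf s_i^{\mathrm{bl}}\|_2^4\|\mathbf s_j^{\mathrm{bl}}\|_2^4(\tau^{\mathrm{bl}}_{s,ij})^2=\|\mathbf s_i^{\mathrm{bl}}\|_2^2\|\mathbf s_j^{\mathrm{bl}}\|_2^2\,\|\mathbf s_i^{\mathrm{bl}}\odot\mathbf s_j^{\mathrm{bl}}\|_2^4$, so that after substitution the surviving power of $R_x$ is $R_x^4$ and the surviving power of $\overline S_{\mathrm{bl}}$ is $\overline S_{\mathrm{bl}}^{\,2}$ (the remaining $\|\mathbf s_i\odot\mathbf s_j\|_2^4$ stays inside $\tau^{\mathrm{bl}}_{s,ij}$). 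This yields $\tfrac1n\operatorname{Tr}(\mathbf H^2)\approx \tfrac{a^4 R_x^4\overline S_{\mathrm{bl}}^{\,2}}{nm^2}\sum_{i,j}\tau^{\mathrm{bl}}_{x,ij}\tau^{\mathrm{bl}}_{s,ij}$.

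Finally I would subtract. Splitting the double sum into diagonal and off-diagonal parts, on the diagonal $\tau^{\mathrm{bl}}_{x,ii}=1$ and $\tau^{\mathrm{bl}}_{s,ii}=1$, so the diagonal contributes $\tfrac{a^4 R_x^4\overline S_{\mathrm{bl}}^{\,2}}{nm^2}\cdot n=\tfrac{a^4 R_x^4\overline S_{\mathrm{bl}}^{\,2}}{m^2}$, which is exactly $\mu_\lambda^2$ under the same approximation and therefore cancels against the $-\mu_\lambda^2$ term. What remains is the off-diagonal sum $\sum_{i\ne j}\tau^{\mathrm{bl}}_{x,ij}\tau^{\mathrm{bl}}_{s,ij}$, giving~\eqref{eq:bl-v-compact}. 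The $\overline S_{\mathrm{bl}}\approx m/2$ claim at Gaussian initialization is immediate: $\|\mathbf s_i^{\mathrm{bl}}\|_2^2=\sum_r \mathbb I\{\mathbf w_r^\top\mathbf x_i\ge 0\}$ is a sum of $m$ indicators each equal to $1$ with probability $1/2$ by symmetry of the Gaussian, so its expectation is $m/2$. The main obstacle — really the only subtle point — is bookkeeping the powers of $R_x$ and $\overline S_{\mathrm{bl}}$ correctly when the $\tau$ factors absorb some of them, and justifying that the mean-of-product factorizes as product-of-means (i.e.\ treating $\tau^{\mathrm{bl}}_{x,ij}$, $\tau^{\mathrm{bl}}_{s,ij}$, and the energy scales as effectively decoupled), which is exactly the ``proxy'' nature of the statement rather than an exact identity.
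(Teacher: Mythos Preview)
Your overall strategy---substitute the homogeneity approximations into both moments and subtract---matches the paper's, but the bookkeeping in the second-moment step contains a genuine algebraic error that hides an additional approximation the paper makes explicitly.

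The identity you write,
\[
\|\mathbf s_i^{\mathrm{bl}}\|_2^4\|\mathbf s_j^{\mathrm{bl}}\|_2^4(\tau^{\mathrm{bl}}_{s,ij})^2
=\|\mathbf s_i^{\mathrm{bl}}\|_2^2\|\mathbf s_j^{\mathrm{bl}}\|_2^2\,\|\mathbf s_i^{\mathrm{bl}}\odot\mathbf s_j^{\mathrm{bl}}\|_2^4,
\]
is false: the left-hand side equals $\|\mathbf s_i^{\mathrm{bl}}\odot\mathbf s_j^{\mathrm{bl}}\|_2^4$ with \emph{no} extra norm factors, so you have spuriously inserted $\|\mathbf s_i\|_2^2\|\mathbf s_j\|_2^2\approx\overline S_{\mathrm{bl}}^{\,2}$. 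Correctly substituting $\rho_{ii}\approx R_x^2$ and $\|\mathbf s_i\|_2^2\approx\overline S_{\mathrm{bl}}$ into Proposition~\ref{prop:bl-spectrum} gives
\[
\tfrac1n\operatorname{Tr}(\mathbf H^2)
\approx
\frac{a^4 R_x^4\,\overline S_{\mathrm{bl}}^{\,4}}{n\,m^2}
\sum_{i,j}\tau^{\mathrm{bl}}_{x,ij}\big(\tau^{\mathrm{bl}}_{s,ij}\big)^2,
\]
with $\overline S_{\mathrm{bl}}^{\,4}$ (not $\overline S_{\mathrm{bl}}^{\,2}$) and $(\tau^{\mathrm{bl}}_{s,ij})^2$ (not $\tau^{\mathrm{bl}}_{s,ij}$). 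Two consequences follow. First, the diagonal contribution is $\tfrac{a^4 R_x^4\overline S_{\mathrm{bl}}^{\,4}}{m^2}$, which does \emph{not} cancel exactly against $\mu_\lambda^2\approx\tfrac{a^4 R_x^4\overline S_{\mathrm{bl}}^{\,2}}{m^2}$; the paper treats this discrepancy as an $O(1)$ multiplicative constant absorbed into the proxy. Second, to pass from $(\tau^{\mathrm{bl}}_{s,ij})^2$ to $\tau^{\mathrm{bl}}_{s,ij}$ the paper makes an explicit mean-field linearization $(\tau^{\mathrm{bl}}_{s,ij})^2\approx\tau^{\mathrm{bl}}_{s,ij}\cdot\overline\tau_s$ and folds the typical value $\overline\tau_s$ into the prefactor. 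Your ``clean'' cancellation and the direct appearance of $\tau^{\mathrm{bl}}_{s,ij}$ to the first power are artifacts of the bookkeeping error; the honest derivation requires these two additional heuristic steps, which is precisely why the result is called a variance \emph{proxy} rather than an identity.
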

The proofs of Proposition~\eqref{prop:bl-spectrum} and Theorem~\eqref{thm:baseline-MLP-varianve} are provided in the appendix.

Eq.~\eqref{eq:bl-v-compact} shows that NTK variance—and spectral bias—is governed by
(i) input similarity $\tau_x$,
(ii) hidden-feature similarity $\tau_s$, and
(iii) the energy scale $\overline S$.
Reducing any factor improves conditioning.
This motivates three architectural levers explored later:
reshaping input geometry (positional encoding),
controlling feature energy (normalization),
and introducing bounded modulation pathways that further suppress similarity.

\section{Architectural Effects on NTK Variance}
\label{sec:hada-arch-ntk}

We analyze how common INR components reshape the NTK, focusing on their effect on eigenvalue variance. We first unify positional encoding, spherical normalization, and Hadamard modulation in a multi-layer architecture. To obtain analytic insight, we then extract a two-layer normalized Hadamard model with a closed-form NTK. This enables explicit spectral statistics that isolate how input reshaping, normalization, and multiplicative gating improve conditioning and mitigate spectral bias.

\subsection{Architecture Components}

We extend the baseline two-layer model~\eqref{eq:bl-two-layer} to a practical
multi-layer INR by integrating three common components: positional encoding,
spherical normalization, and Hadamard modulation.  
First, inputs are mapped through random Fourier feature positional encoding~\cite{ref4}:
$
  \tilde{\mathbf{x}}
  = \gamma(\mathbf{x})
  = \sqrt{\tfrac{2}{d}}
    \big[\,\cos(2\pi \mathbf{B}\mathbf{x})\ ;\ \sin(2\pi \mathbf{B}\mathbf{x})\,\big],
$
where $\mathbf B\!\sim\!\mathcal N(0,\varsigma^2)^{(d/2)\times d_0}$.
Each hidden layer applies a linear map, ReLU activation, spherical normalization,
and Hadamard modulation:
$$
\mathbf y_\ell
= \frac{\sigma(\mathbf W_\ell \mathbf y_{\ell-1})}
       {\|\sigma(\mathbf W_\ell \mathbf y_{\ell-1})\|_2}
  \odot \mathbf p_\ell,
\qquad \ell=1,\dots,L,
$$
with $\mathbf y_0=\tilde{\mathbf x}$. A final linear readout produces
$f(\mathbf x)=\mathbf a^\top \mathbf y_L$.

This architecture enforces controlled similarity and stabilized energy across
layers. Since a closed-form NTK for the full depth-$L$ model is intractable, we
extract an analytically tractable two-layer variant that isolates these effects.

\paragraph{Two-layer normalized Hadamard model.}
We define
\begin{equation}\label{eq:hada-two-layer-final}
f(\mathbf W,\mathbf c;\tilde{\mathbf x})
= \frac{1}{\sqrt m}\sum_{r=1}^m
c_r\,\frac{\sigma(\mathbf w_r^\top \tilde{\mathbf x})}{\sqrt{S(\tilde{\mathbf x})}},
\qquad
S(\tilde{\mathbf x})
= \sum_{r=1}^m \sigma(\mathbf w_r^\top \tilde{\mathbf x})^2,
\end{equation}
where $\mathbf w_r$ denote the first-layer weights and $\mathbf c$ are fixed modulation coefficients, with $c_r = a_r p_r$ and $a_r \in {\pm a}$ drawn at initialization.

\paragraph{Top-$K$ spherical normalization (TopK-SP).}
As a practical refinement, we consider an energy-aware variant that normalizes
only the top-$K$ hidden coordinates. Let $\mathbf 1_{K_i}$ mask the $K$ largest
entries of $\mathbf y_i$ in magnitude. We define
\begin{equation}\label{eq:topk-sp-def-main}
\mathbf s_i^{\mathrm{tk}}
=
\frac{\mathbf y_i\odot \mathbf 1_{K_i}}
     {\|\mathbf y_i\odot \mathbf 1_{K_i}\|_2}.
\end{equation}
TopK-SP preserves dominant similarity structure while further reducing the
effective energy term in the NTK variance. Throughout the analysis, standard SP
serves as the theoretical model, with TopK-SP viewed as a practical enhancement.

\subsection{NTK Training Dynamics}

We analyze optimization in the two-layer normalized Hadamard model using the NTK framework from the Preliminaries. The encoded input $\tilde{\mathbf x} = \gamma(\mathbf x)$ and the modulation vector $\mathbf c$ are fixed, and only the first-layer weights $\mathbf W = (\mathbf w_r)_{r=1}^m$ are trained.

Differentiating~\eqref{eq:hada-two-layer-final} with respect to $\mathbf w_r$
gives
\begin{equation}\label{eq:hada-grad-two-layer-final}
\frac{\partial f(\tilde{\mathbf x})}{\partial \mathbf w_r}
=
\frac{c_r}{\sqrt m}\,
\frac{\mathbb I_r\,\beta_r}{\sqrt{S(\tilde{\mathbf x})}}\,
\tilde{\mathbf x},
\end{equation}
where $\mathbb I_r=\mathbb I\{\mathbf w_r^\top\tilde{\mathbf x}\ge0\}$ and
$\beta_r=1-\sigma(\mathbf w_r^\top\tilde{\mathbf x})^2/S(\tilde{\mathbf x})$
captures the normalization correction.

Substituting into the NTK definition yields
\begin{equation}\label{eq:hada-ntk-two-layer-final}
H_{ij}
=
\frac{\tilde{\mathbf x}_i^\top\tilde{\mathbf x}_j}{m}
\sum_{r=1}^m
c_r^2\,
\frac{\mathbb I_{r,i}\beta_{r,i}}{\sqrt{S_i}}\,
\frac{\mathbb I_{r,j}\beta_{r,j}}{\sqrt{S_j}},
\end{equation}
which explicitly reveals the roles of ReLU gating, normalization, and Hadamard
modulation in shaping pairwise similarity.

\paragraph{Assumptions.}
We adopt standard NTK conditions:
\begin{itemize}
\item[\textbf{A1}] \emph{Bounded data:}\label{assump:A-data} $\|\tilde{\mathbf x}_i\|_2\le R_x$,
$|y_i|\le C$.
\item[\textbf{A2}] \label{assump:A-init} \emph{Random initialization:}
$\mathbf w_r(0)\sim\mathcal N(0,\kappa^2\mathbf I_d)$ i.i.d.,
$a_r\sim\mathrm{Unif}\{\pm a\}$, fixed $\mathbf c$.
\item[\textbf{A3}] \label{assump:A-pd} \emph{Non-degenerate NTK:}
$\mathbf H^\infty=\lim_{m\to\infty}\mathbb E[\mathbf H(\mathbf W(0))]$ exists and
$\lambda_{\min}(\mathbf H^\infty)=\lambda_0>0$.
\end{itemize}

We now state the convergence guarantees of our structure in the NTK regime, based on the results in \cite{ref35,ref32,ref67,ref16}. Complete proofs, including quantitative stability bounds for the NTK, are deferred to the appendix.

\begin{theorem}[Exponential decay under gradient flow]\label{thm:ct}
Under \textbf{A1}--\textbf{A3}, if
$\|\mathbf H(\mathbf W(0))-\mathbf H^\infty\|_2\le\tfrac12\lambda_0$, then
\[
\|\mathbf u(t)-\mathbf y\|_2^2
\le
e^{-\lambda_0 t}\,\|\mathbf u(0)-\mathbf y\|_2^2,
\qquad t\ge0.
\]
\end{theorem}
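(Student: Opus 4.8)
The plan is to prove Theorem~\ref{thm:ct} by the standard NTK energy argument: track the evolution of the residual $\mathbf r(t) := \mathbf u(t) - \mathbf y$ under gradient flow and show its squared norm obeys a differential inequality $\tfrac{d}{dt}\|\mathbf r(t)\|_2^2 \le -\lambda_0 \|\mathbf r(t)\|_2^2$, from which Gr\"onwall's inequality gives the claimed exponential decay. First I would note that since only the first-layer weights $\mathbf W$ are trained and $\Phi(\mathbf W) = \tfrac12\|\mathbf r\|_2^2$, gradient flow reads $\dot{\mathbf W} = -\nabla_{\mathbf W}\Phi$, and the chain rule yields $\dot{\mathbf u}(t) = -\mathbf H(\mathbf W(t))\,\mathbf r(t)$, where $\mathbf H(\mathbf W(t))$ is the (time-dependent) empirical NTK with entries as in~\eqref{eq:hada-ntk-two-layer-final} evaluated at $\mathbf W(t)$. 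Differentiating the energy then gives $\tfrac{d}{dt}\|\mathbf r(t)\|_2^2 = 2\mathbf r(t)^\top \dot{\mathbf u}(t) = -2\,\mathbf r(t)^\top \mathbf H(\mathbf W(t))\,\mathbf r(t)$.

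The core step is to lower-bound the quadratic form $\mathbf r^\top \mathbf H(\mathbf W(t))\,\mathbf r \ge \lambda_{\min}(\mathbf H(\mathbf W(t)))\,\|\mathbf r\|_2^2$ and then to show $\lambda_{\min}(\mathbf H(\mathbf W(t))) \ge \tfrac12\lambda_0$ for all $t \ge 0$. At $t=0$ this follows from the hypothesis $\|\mathbf H(\mathbf W(0)) - \mathbf H^\infty\|_2 \le \tfrac12\lambda_0$ together with \textbf{A3} ($\lambda_{\min}(\mathbf H^\infty) = \lambda_0$) and Weyl's inequality: $\lambda_{\min}(\mathbf H(\mathbf W(0))) \ge \lambda_0 - \|\mathbf H(\mathbf W(0)) - \mathbf H^\infty\|_2 \ge \tfrac12\lambda_0$. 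To propagate this bound to all $t>0$, I would invoke the stability of the NTK along the trajectory: using \textbf{A1}--\textbf{A2} and the explicit gradient formula~\eqref{eq:hada-grad-two-layer-final}, the per-neuron gradient norm is bounded (the factors $\mathbb I_r$, $\beta_r \in [0,1]$, $c_r$ fixed, $\|\tilde{\mathbf x}\|_2 \le R_x$, and $S(\tilde{\mathbf x})$ bounded below with high probability), so $\|\mathbf w_r(t) - \mathbf w_r(0)\|_2$ stays small as long as $\|\mathbf r(t)\|_2$ does not blow up; a Lipschitz estimate on $\mathbf W \mapsto \mathbf H(\mathbf W)$ then yields $\|\mathbf H(\mathbf W(t)) - \mathbf H(\mathbf W(0))\|_2 \le \tfrac14\lambda_0$, hence $\lambda_{\min}(\mathbf H(\mathbf W(t))) \ge \tfrac14\lambda_0$ — and more carefully, a continuity/bootstrap argument closes the loop so that the residual decays, the weights barely move, and the kernel stays well-conditioned. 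Combining, $\tfrac{d}{dt}\|\mathbf r(t)\|_2^2 \le -\lambda_0\|\mathbf r(t)\|_2^2$ (absorbing the constant into the statement's $\lambda_0$, or tracking the $\tfrac12$ explicitly), and integrating gives $\|\mathbf u(t)-\mathbf y\|_2^2 \le e^{-\lambda_0 t}\|\mathbf u(0)-\mathbf y\|_2^2$.

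The main obstacle is the bootstrap: rigorously showing the empirical NTK remains within $\tfrac12\lambda_0$ of $\mathbf H^\infty$ for \emph{all} $t\ge0$, not just near $t=0$. This requires a self-consistent argument — define $T^\star = \sup\{t : \|\mathbf H(\mathbf W(s)) - \mathbf H(\mathbf W(0))\|_2 \le \tfrac14\lambda_0 \ \forall s\le t\}$, show that on $[0,T^\star)$ the energy decays exponentially, integrate to bound the total weight displacement $\int_0^t \|\dot{\mathbf w}_r\|_2\,ds \lesssim \|\mathbf r(0)\|_2/\lambda_0$ uniformly in $m$ (using the $1/\sqrt m$ scaling in~\eqref{eq:hada-grad-two-layer-final} and summing over $r$), and conclude via the Hessian/Lipschitz bound that the NTK perturbation never actually reaches $\tfrac14\lambda_0$, forcing $T^\star = \infty$. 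The technical subtlety specific to this architecture is controlling the normalization correction $\beta_r$ and the denominator $\sqrt{S(\tilde{\mathbf x})}$ — one needs a high-probability lower bound $S(\tilde{\mathbf x}) \gtrsim m R_x^2$ at initialization (concentration of $\sum_r \sigma(\mathbf w_r^\top\tilde{\mathbf x})^2$ around its mean $\tfrac{m\kappa^2 R_x^2}{2}$) and to ensure this bound persists under the small weight perturbation, so that $\mathbf H(\mathbf W(t))$ inherits the required Lipschitz-in-$\mathbf W$ modulus. I would relegate these concentration and Lipschitz estimates to the appendix as indicated, citing~\cite{ref35,ref32,ref16,ref67} for the template, and present only the energy-inequality skeleton in the main text.
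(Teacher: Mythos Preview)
Your energy-inequality skeleton is correct, but you are solving a harder problem than the paper does. The paper's proof (both the main-text sketch and the full version in Appendix~\ref{sec:appE}) works entirely in the \emph{frozen-kernel} model: it sets $\mathbf H_0 \triangleq \mathbf H(\mathbf W(0))$ and analyzes the linear ODE $\dot{\mathbf e}(t) = -\mathbf H_0\,\mathbf e(t)$ directly, with no time-dependent kernel and hence no bootstrap. The hypothesis $\|\mathbf H_0-\mathbf H^\infty\|_2\le\tfrac12\lambda_0$ plus Weyl gives $\lambda_{\min}(\mathbf H_0)\ge\tfrac12\lambda_0$, the energy derivative is $\dot\phi(t)=-2\,\mathbf e^\top\mathbf H_0\,\mathbf e\le-\lambda_0\phi(t)$, and Gr\"onwall finishes in two lines. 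The kernel-drift issue you identify as the ``main obstacle'' is deliberately deferred: the paper handles it separately, and only in discrete time, via the residual term $\epsilon(k)=O(m^{-1/2})$ in Theorem~\ref{thm:gd} and Lemmas~\ref{lem:E-ntk-stab}--\ref{lem:E-kernel-drift}.

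Your route---tracking $\mathbf H(\mathbf W(t))$ along the flow and closing a self-consistent loop on $[0,T^\star)$---is the Du--Zhai--Arora style argument and is genuinely stronger: it yields convergence of the \emph{actual} network rather than its linearization, at the cost of the architecture-specific Lipschitz and lower-bound estimates on $S(\tilde{\mathbf x})$ that you correctly flag. The paper's approach buys brevity and a clean separation of concerns (spectral gap $\Rightarrow$ linear decay, then stability as a perturbation); yours buys a self-contained statement about the nonlinear dynamics. One minor wobble: after your bootstrap you land at $\lambda_{\min}\ge\tfrac14\lambda_0$, which gives rate $\tfrac12\lambda_0$, not $\lambda_0$---the paper avoids this loss because the frozen kernel never moves, so $\lambda_{\min}(\mathbf H_0)\ge\tfrac12\lambda_0$ holds exactly and the factor of $2$ in $\dot\phi=-2\mathbf e^\top\mathbf H_0\mathbf e$ recovers the full $\lambda_0$.
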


\begin{proof}
Finite-width concentration and Weyl’s inequality give
$\lambda_{\min}(\mathbf H(\mathbf W(0)))\ge\tfrac12\lambda_0$. The error
$\mathbf e(t)=\mathbf u(t)-\mathbf y$ then satisfies
$\dot{\mathbf e}(t)=-\mathbf H(\mathbf W(0))\mathbf e(t)$, yielding exponential
decay by a standard energy estimate.
\end{proof}

\begin{theorem}[Linear-rate convergence of gradient descent]\label{thm:gd}
Under \textbf{A1}--\textbf{A3}, assume
$\|\mathbf H(\mathbf W(0))-\mathbf H^\infty\|_2\le\tfrac12\lambda_0$ and choose
$0<\eta\le\|\mathbf H(\mathbf W(0))\|_2^{-1}$. Writing
$\mathbf H(\mathbf W(0))=\sum_{i=1}^n\lambda_i\mathbf v_i\mathbf v_i^\top$, we have
\[
\|\mathbf u(k)-\mathbf y\|_2
=
\sqrt{\sum_{i=1}^n(1-\eta\lambda_i)^{2k}
(\mathbf v_i^\top(\mathbf u(0)-\mathbf y))^2}
\ +\ \epsilon(k),
\]
where $\sup_{k\le T}\epsilon(k)=O(m^{-1/2})$ for any polynomially bounded $T$.
\end{theorem}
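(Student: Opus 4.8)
The plan is to compare the true gradient-descent residual $\mathbf e(k)=\mathbf u(k)-\mathbf y$ with the idealized \emph{linear} trajectory $\hat{\mathbf e}(k)=(\mathbf I-\eta\mathbf H(\mathbf W(0)))^{k}\mathbf e(0)$. Since $\mathbf H(\mathbf W(0))=\sum_i\lambda_i\mathbf v_i\mathbf v_i^\top$ is symmetric, $\|\hat{\mathbf e}(k)\|_2=\sqrt{\sum_{i=1}^n(1-\eta\lambda_i)^{2k}(\mathbf v_i^\top\mathbf e(0))^2}$ exactly, so by the reverse triangle inequality it suffices to show $\|\mathbf e(k)-\hat{\mathbf e}(k)\|_2=O(m^{-1/2})$ uniformly over $k\le T$; this quantity is then the $\epsilon(k)$ in the statement.

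Write $\mathbf J(\mathbf W)$ for the Jacobian of $\mathbf u$, so its neuron-blocks are the gradients \eqref{eq:hada-grad-two-layer-final} and $\mathbf H(\mathbf W)=\mathbf J(\mathbf W)\mathbf J(\mathbf W)^\top$ is the NTK \eqref{eq:hada-ntk-two-layer-final}. Expanding $f$ to first order along the update $\mathbf W(k{+}1)=\mathbf W(k)-\eta\,\mathbf J(\mathbf W(k))^\top\mathbf e(k)$ gives $\mathbf e(k{+}1)=(\mathbf I-\eta\mathbf H(\mathbf W(k)))\mathbf e(k)+\boldsymbol\zeta(k)$, where $\boldsymbol\zeta(k)$ collects the second-order remainder; for this model that remainder is driven by the few neurons whose pre-activation sign flips within a step together with the curvature of $\tilde{\mathbf x}\mapsto S(\tilde{\mathbf x})^{-1/2}$ and of $\beta_r$. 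Subtracting the idealized recursion and setting $\boldsymbol\delta(k)=\mathbf e(k)-\hat{\mathbf e}(k)$ ($\boldsymbol\delta(0)=\mathbf 0$),
\begin{equation*}
\boldsymbol\delta(k{+}1)=(\mathbf I-\eta\mathbf H(\mathbf W(0)))\boldsymbol\delta(k)+\eta(\mathbf H(\mathbf W(0))-\mathbf H(\mathbf W(k)))\mathbf e(k)+\boldsymbol\zeta(k).
\end{equation*}
Since $0\preceq\eta\mathbf H(\mathbf W(0))\preceq\mathbf I$ by the choice $\eta\le\|\mathbf H(\mathbf W(0))\|_2^{-1}$, the operator $\mathbf I-\eta\mathbf H(\mathbf W(0))$ is a weak contraction, so $\|\boldsymbol\delta(k)\|_2\le\sum_{j<k}\big(\eta\|\mathbf H(\mathbf W(0))-\mathbf H(\mathbf W(j))\|_2\,\|\mathbf e(j)\|_2+\|\boldsymbol\zeta(j)\|_2\big)$.

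The core step is a bootstrap over $k\le T$ that simultaneously controls (a) the residual, (b) the weight displacement, and (c) the NTK drift. Weyl's inequality with the hypothesis $\|\mathbf H(\mathbf W(0))-\mathbf H^\infty\|_2\le\tfrac12\lambda_0$ gives $\lambda_{\min}(\mathbf H(\mathbf W(0)))\ge\tfrac12\lambda_0$, and as long as the drift in (c) keeps $\lambda_{\min}(\mathbf H(\mathbf W(j)))\ge\tfrac14\lambda_0$ the component of $\mathbf e(j)$ in the range of the NTK contracts geometrically. Crucially, the update direction $-\eta\mathbf J(\mathbf W(j))^\top\mathbf e(j)$ only sees this contracting (range) component — the kernel component is annihilated by $\mathbf J(\mathbf W(j))^\top$ — so the cumulative per-neuron displacement telescopes to $O\!\big(\lambda_0^{-1}\|\mathbf H(\mathbf W(0))\|_2^{1/2}\|\mathbf e(0)\|_2\big)$; using the concentration $S(\tilde{\mathbf x})\asymp\tfrac12 m\kappa^2\|\tilde{\mathbf x}\|_2^2$ (so each gradient block in \eqref{eq:hada-grad-two-layer-final} has norm $O(1/m)$ and $\|\mathbf H(\mathbf W(0))\|_2=O(n/m)$), together with $\|\mathbf e(0)\|_2=O(\sqrt n)$ from A1--A2, this displacement is $O(m^{-1/2})$. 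A Lipschitz estimate $\|\mathbf H(\mathbf W)-\mathbf H(\mathbf W(0))\|_2=O(\|\mathbf W-\mathbf W(0)\|_{2,\infty})$ valid w.h.p. on a ball around $\mathbf W(0)$ (established as in \cite{ref35,ref32,ref16,ref67} from the ReLU sign-flip count plus non-degeneracy of $S$) then closes (c) with $\|\mathbf H(\mathbf W(0))-\mathbf H(\mathbf W(j))\|_2=O(m^{-1/2})$, and an analogous bound gives $\|\boldsymbol\zeta(j)\|_2=O(m^{-1/2})$. Summing the displayed inequality over $j<k\le T$ yields $\sup_{k\le T}\|\boldsymbol\delta(k)\|_2=O(m^{-1/2})$ with a constant that is polynomial in $T$, $n$ and $\lambda_0^{-1}$, which is the claim.

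I expect the main obstacle to be exactly the NTK-stability / Lipschitz estimate for the normalized Hadamard model. Unlike the plain two-layer ReLU network, the gradients \eqref{eq:hada-grad-two-layer-final} carry the factors $S(\tilde{\mathbf x})^{-1/2}$ and $\beta_r=1-\sigma(\mathbf w_r^\top\tilde{\mathbf x})^2/S(\tilde{\mathbf x})$, so one must show that $S(\tilde{\mathbf x})$ concentrates and stays bounded away from $0$ \emph{uniformly} over $\mathbf W$ in the relevant ball — making $S^{-1/2}$ Lipschitz with a width-independent constant after the $1/\sqrt m$ rescaling — and that the $\beta_r$ factors contribute only lower-order perturbations. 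This uniform-in-$\mathbf W$ concentration, driven by A1--A3, is where the quantitative work lies, and (as stated in the text) is carried out in the appendix.
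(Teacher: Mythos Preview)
Your proposal is correct and follows essentially the same route as the paper: frozen-kernel spectral trajectory plus a difference recursion controlled by NTK stability (the paper's Appendix~E, Lemmas~\ref{lem:E-ntk-stab}--\ref{lem:E-kernel-drift}). Two small differences are worth flagging. First, the paper uses the \emph{strict} contraction $\|\mathbf I-\eta\mathbf H_0\|_2\le 1-\eta\lambda_0/2=\rho<1$ in the Gr\"onwall step, so the geometric sum $\sum_\tau\tilde\rho^{\,k-1-\tau}\rho^\tau$ is bounded uniformly in $k$; your weak-contraction bound $\|\mathbf I-\eta\mathbf H_0\|_2\le 1$ instead picks up a polynomial-in-$T$ constant, which still suffices for the statement but is looser. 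Second, you are more careful than the paper in one respect: you explicitly carry the second-order Taylor remainder $\boldsymbol\zeta(k)$, whereas the paper's Appendix~E writes the network update as $\mathbf e_{\mathrm{net}}(k{+}1)=(\mathbf I-\eta\mathbf H(\mathbf W(k)))\mathbf e_{\mathrm{net}}(k)$ as if exact and absorbs the linearization error into the cited stability references. Your identification of the normalization-specific obstacle (uniform-in-$\mathbf W$ lower bounds on $S(\tilde{\mathbf x})$ and Lipschitz control of $\beta_r$) is exactly what the paper defers to its Appendix~E stability lemma.
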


\begin{proof}
Freezing the NTK gives the linear recursion
$\mathbf e(k+1)=(\mathbf I-\eta\mathbf H(\mathbf W(0)))\mathbf e(k)$, whose spectral
solution yields the stated formula. NTK stability bounds control kernel drift,
giving $\epsilon(k)=O(m^{-1/2})$. For small initialization,
$\mathbf u(0)\approx\mathbf0$, recovering the standard NTK spectral form
\cite{ref32}.
\end{proof}

These results show that training is governed by the eigenvalues of the
(initial) NTK. Enlarging $\lambda_{\min}$ accelerates the slowest mode, while
reducing eigenvalue spread mitigates spectral bias. We next analyze the
eigenspectrum of the normalized Hadamard NTK and derive a variance decomposition
that isolates how similarity and normalization control conditioning.

\subsection{NTK Spectral Statistics}
\label{sec:spectral}

Having established that the two-layer normalized Hadamard model remains in the
standard NTK setting and inherits the same convergence guaranties as the
baseline ReLU-MLP, we now turn to the structure of its NTK at initialization.
Our goal is to characterize how the architectural components introduced earlier
(PE, spherical/TopK normalization, and Hadamard modulation) shape the
eigenvalue distribution of the NTK matrix and thereby influence the spectral bias. We now derive a compact variance decomposition tailored to the normalized  Hadamard
architecture. Full derivations appear in the appendix.

\paragraph{NTK variance decomposition.}  
From the NTK expression~\eqref{eq:hada-ntk-two-layer-final}, define
\[
\mathbf{s}_i = \Big(\frac{\mathbb I_{r,i}\beta_{r,i}}{\sqrt{S_i}}\Big)_{r=1}^m,\quad
\mathbf{p}_i=(p_r)_{r=1}^m,\quad
\mathbf{t}_i=\mathbf{s}_i\odot\mathbf{p}_i,
\]
so that $H_{ij}=\frac{a^2}{m}\rho_{ij}\langle \mathbf t_i, \mathbf t_j\rangle$, and
\[
\tau_{x,ij}=\frac{\rho_{ij}^2}{\rho_{ii}\rho_{jj}},\quad
\tau_{s,ij}=\frac{\|\mathbf{s}_i\odot \mathbf{s}_j\|_2^2}{\|\mathbf{s}_i\|_2^2\|\mathbf{s}_j\|_2^2},\quad
\tau_{p,ij}=\frac{\|\mathbf{p}_i\odot \mathbf{p}_j\|_2^2}{\|\mathbf{p}_i\|_2^2\|\mathbf{p}_j\|_2^2},\quad
\tau_{q,ij}=(\cos\angle(\mathbf{s}_i\odot \mathbf{s}_j,\mathbf{p}_i\odot \mathbf{p}_j))^2.
\]
These quantities describe (i) the input similarity $(\tau_{x,ij})$, (ii) the hidden
similarity shaped by normalization $(\tau_{s,ij})$, (iii) the modulation similarity
$(\tau_{p,ij})$, and (iv) the alignment between the two similarity structures
$(\tau_{q,ij})$.Although $\tau_{x,ij},\tau_{s,ij},\tau_{p,ij},\tau_{q,ij}$ arise from different normalizations (squared-cosine terms or normalized energy-overlap terms), we uniformly refer to them as similarities.

\begin{proposition}
\label{prop:ntk-mu-exact}
Under \textbf{A1}--\textbf{A3} and in the large-width limit $m\to\infty$, the NTK mean eigenvalue and second spectral moment satisfy,
\begin{align}
\mu_{\lambda}
&= \frac{1}{n}\sum_{i=1}^n H_{ii}
 = \frac{a^2}{nm}\sum_{i=1}^n \rho_{ii}\,\|\mathbf{s}_i\|_2^2 \,\|\mathbf{p}_i\|_2^2
   \sqrt{\tau_{s,ii}\,\tau_{p,ii}}\;\kappa_{ii},
\label{eq:mu-exact-thm}
\\[0.1em]
\frac{1}{n}\operatorname{Tr}(\mathbf{H}\mathbf{H}^\top)
&= \frac{a^4}{nm^2}\sum_{i=1}^n\sum_{j=1}^n
\bigl(\rho_{ii}\rho_{jj}\tau_{x,ij}\bigr)
\bigl(\|\mathbf{s}_i\|_2^2\|\mathbf{s}_j\|_2^2\tau_{s,ij}\bigr)
\bigl(\|\mathbf{p}_i\|_2^2\|\mathbf{p}_j\|_2^2\tau_{p,ij}\bigr)
\tau_{q,ij}.
\label{eq:q-exact-thm}
\end{align}
\end{proposition}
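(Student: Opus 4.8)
The plan is to treat both displayed identities as exact algebraic consequences of the factored NTK stated just above the proposition, namely $H_{ij}=\frac{a^2}{m}\rho_{ij}\langle \mathbf t_i,\mathbf t_j\rangle$ with $\rho_{ij}=\tilde{\mathbf x}_i^\top\tilde{\mathbf x}_j$ and $\mathbf t_i=\mathbf s_i\odot\mathbf p_i$; this form itself follows from~\eqref{eq:hada-ntk-two-layer-final} once we use $c_r^2=a_r^2 p_r^2=a^2 p_r^2$ under \textbf{A2}. Everything then reduces to unpacking $\langle\mathbf t_i,\mathbf t_j\rangle$ and its square in terms of the four similarity ratios. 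The large-width hypothesis is not needed for the identities themselves, which hold for every realization of $\mathbf W(0)$; together with \textbf{A3} and the NTK-stability estimates of Theorems~\ref{thm:ct}--\ref{thm:gd} it only guarantees that the right-hand sides converge, so the identities can also be read at the level of the limiting kernel.

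For~\eqref{eq:mu-exact-thm} I would start from $\mu_\lambda=\frac1n\operatorname{Tr}(\mathbf H)=\frac{a^2}{nm}\sum_i\rho_{ii}\|\mathbf t_i\|_2^2$ and expand the energy term via the coordinatewise identity $(s_{r,i}p_{r,i})^2=s_{r,i}^2 p_{r,i}^2$, i.e. $\|\mathbf t_i\|_2^2=\langle\mathbf s_i\odot\mathbf s_i,\ \mathbf p_i\odot\mathbf p_i\rangle$. Writing this inner product as $\|\mathbf s_i\odot\mathbf s_i\|_2\,\|\mathbf p_i\odot\mathbf p_i\|_2\,\cos\angle(\mathbf s_i\odot\mathbf s_i,\mathbf p_i\odot\mathbf p_i)$ and substituting the $i=j$ specializations $\|\mathbf s_i\odot\mathbf s_i\|_2=\sqrt{\tau_{s,ii}}\,\|\mathbf s_i\|_2^2$ and $\|\mathbf p_i\odot\mathbf p_i\|_2=\sqrt{\tau_{p,ii}}\,\|\mathbf p_i\|_2^2$ (immediate from the definitions of $\tau_s,\tau_p$) yields $\|\mathbf t_i\|_2^2=\|\mathbf s_i\|_2^2\|\mathbf p_i\|_2^2\sqrt{\tau_{s,ii}\tau_{p,ii}}\,\kappa_{ii}$ with $\kappa_{ii}:=\cos\angle(\mathbf s_i\odot\mathbf s_i,\ \mathbf p_i\odot\mathbf p_i)$; multiplying by $\frac{a^2}{m}\rho_{ii}$ and averaging over $i$ gives the claim.

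For~\eqref{eq:q-exact-thm} I would use symmetry of $\mathbf H$ to write $\frac1n\operatorname{Tr}(\mathbf H\mathbf H^\top)=\frac1n\sum_{i,j}H_{ij}^2=\frac{a^4}{nm^2}\sum_{i,j}\rho_{ij}^2\,\langle\mathbf t_i,\mathbf t_j\rangle^2$, then replace $\rho_{ij}^2=\rho_{ii}\rho_{jj}\tau_{x,ij}$ (definition of $\tau_x$) and apply the rearrangement $\langle\mathbf t_i,\mathbf t_j\rangle=\langle\mathbf s_i\odot\mathbf s_j,\ \mathbf p_i\odot\mathbf p_j\rangle$. Expanding the square as $\|\mathbf s_i\odot\mathbf s_j\|_2^2\,\|\mathbf p_i\odot\mathbf p_j\|_2^2\,\cos^2\angle(\mathbf s_i\odot\mathbf s_j,\mathbf p_i\odot\mathbf p_j)$ and substituting $\|\mathbf s_i\odot\mathbf s_j\|_2^2=\tau_{s,ij}\|\mathbf s_i\|_2^2\|\mathbf s_j\|_2^2$, $\|\mathbf p_i\odot\mathbf p_j\|_2^2=\tau_{p,ij}\|\mathbf p_i\|_2^2\|\mathbf p_j\|_2^2$, and $\cos^2\angle=\tau_{q,ij}$ reproduces~\eqref{eq:q-exact-thm} term by term.

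The computation is bookkeeping rather than analysis, so the only points that require care are (i) the coordinatewise rearrangement $\langle\mathbf a\odot\mathbf b,\ \mathbf a'\odot\mathbf b'\rangle=\langle\mathbf a\odot\mathbf a',\ \mathbf b\odot\mathbf b'\rangle$, which is exactly what lets the $\mathbf s$- and $\mathbf p$-similarities factor cleanly, and (ii) tracking that the mean picks up the \emph{un-squared} alignment cosine $\kappa_{ii}$ while the second moment picks up the \emph{squared} cosine $\tau_{q,ij}$ --- consistent because $\tau_{q,ii}=\kappa_{ii}^2$. One should also note that in the two-layer model $\mathbf p_i$ is the fixed vector $(p_r)_{r=1}^m$ independent of $i$, so all $\mathbf p$-dependent factors collapse to constants; carrying the general $i$-dependent notation is only so that the identical decomposition transfers to the coordinate-dependent (multilayer) modulation discussed earlier.
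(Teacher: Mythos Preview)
Your proposal is correct and follows essentially the same route as the paper: both start from $H_{ij}=\frac{a^2}{m}\rho_{ij}\langle\mathbf t_i,\mathbf t_j\rangle$, use the coordinatewise rearrangement $\langle\mathbf s_i\odot\mathbf p_i,\mathbf s_j\odot\mathbf p_j\rangle=\langle\mathbf s_i\odot\mathbf s_j,\mathbf p_i\odot\mathbf p_j\rangle$, and then expand via the norm--cosine factorization together with the definitions of $\tau_x,\tau_s,\tau_p,\tau_q$. Your side remarks (that the identities are exact algebra not requiring $m\to\infty$, that $\kappa_{ii}$ versus $\tau_{q,ij}=\kappa_{ij}^2$ is consistent, and that $\mathbf p_i$ is actually $i$-independent in the two-layer model) are all accurate and make the argument slightly more explicit than the paper's terse main-text version.
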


\begin{proof}
Substitute $H_{ij}=\tfrac{a^2}{m}\rho_{ij}\langle\mathbf t_i,\mathbf t_j\rangle$,
expand the inner product into norms and a cosine, use
$\rho_{ij}^2=\rho_{ii}\rho_{jj}\tau_{x,ij}$, and sum over $(i,j)$.
Setting $i=j$ yields~\eqref{eq:mu-exact-thm}.
\end{proof}

\begin{theorem}[variance proxy of of the normalized Hadamard NTK]
\label{thm:ntk-v-exact}
Assume that the diagonal quantities are approximately constant across the samples:
$\rho_{ii}\approx R_x^2$,
$\|\mathbf s_i\|_2^2\approx\overline S$,
$\|\mathbf p_i\|_2^2\approx\overline P$,
and
$\sqrt{\tau_{s,ii}\tau_{p,ii}}\kappa_{ii}\approx 1$.
Then, by Proposition~\ref{prop:ntk-mu-exact}, it follows that
\begin{equation}\label{eq:v-compact-four}
\begin{aligned}
v_\lambda
\;\approx\;&\;
\underbrace{\frac{a^4\,R_x^4\,\overline S^{\,2}\,\overline P^{\,2}}{n\,m^2}}_{\text{scale}}
\sum_{i\neq j}
\underbrace{\tau_{x,ij}}_{\text{input}} \cdot
\underbrace{\tau_{s,ij}}_{\text{hidden}} \cdot
\underbrace{\tau_{p,ij}}_{\text{modulation}} \cdot
\underbrace{\tau_{q,ij}}_{\text{coupling}}.
\end{aligned}
\end{equation}
\end{theorem}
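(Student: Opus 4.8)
The plan is to read the proxy off directly from the exact moment formulas in
Proposition~\ref{prop:ntk-mu-exact}. Since the NTK Gram matrix $\mathbf H$ is symmetric,
$\operatorname{Tr}(\mathbf H\mathbf H^\top)=\operatorname{Tr}(\mathbf H^2)=\sum_{i,j}H_{ij}^2$, so the
left-hand side of~\eqref{eq:q-exact-thm} is exactly $\tfrac1n\operatorname{Tr}(\mathbf H^2)$,
and by~\eqref{eq:def-mu-v} it remains to show that
$\tfrac1n\operatorname{Tr}(\mathbf H^2)-\mu_\lambda^2$ reduces, under the stated
diagonal-constancy approximations, to the right-hand side of~\eqref{eq:v-compact-four}.
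The one structural observation needed is that the double sum in~\eqref{eq:q-exact-thm}
splits into a diagonal ($i=j$) piece that matches $\mu_\lambda^2$ and an off-diagonal
($i\ne j$) piece that becomes the claimed scaled similarity sum.

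First I would treat the diagonal. Using $\tau_{x,ii}=1$ and unwinding the norm/cosine
decomposition that defines $\tau_{s,ii},\tau_{p,ii},\tau_{q,ii}$ --- equivalently, writing
$H_{ii}=\tfrac{a^2}{m}\rho_{ii}\langle\mathbf t_i,\mathbf t_i\rangle$ --- the $i=j$ summand of
\eqref{eq:q-exact-thm} equals $\rho_{ii}^2\|\mathbf t_i\|_2^4=(mH_{ii}/a^2)^2$, so the
diagonal contributes $\tfrac1n\sum_i H_{ii}^2$ to $\tfrac1n\operatorname{Tr}(\mathbf H^2)$.
Feeding the approximations $\rho_{ii}\approx R_x^2$, $\|\mathbf s_i\|_2^2\approx\overline S$,
$\|\mathbf p_i\|_2^2\approx\overline P$, $\sqrt{\tau_{s,ii}\tau_{p,ii}}\,\kappa_{ii}\approx1$
into~\eqref{eq:mu-exact-thm} gives $H_{ii}\approx\tfrac{a^2}{m}R_x^2\overline S\,\overline P$
uniformly in $i$, hence $\mu_\lambda\approx\tfrac{a^2R_x^2\overline S\,\overline P}{m}$ and
$\tfrac1n\sum_i H_{ii}^2\approx\mu_\lambda^2$; the diagonal piece thus cancels against
$\mu_\lambda^2$. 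For the off-diagonal piece, the substitutions
$\rho_{ii}\rho_{jj}\approx R_x^4$, $\|\mathbf s_i\|_2^2\|\mathbf s_j\|_2^2\approx\overline S^{\,2}$,
$\|\mathbf p_i\|_2^2\|\mathbf p_j\|_2^2\approx\overline P^{\,2}$ factor the common scale
$a^4R_x^4\overline S^{\,2}\overline P^{\,2}/(nm^2)$ out of the sum and leave
$\sum_{i\ne j}\tau_{x,ij}\tau_{s,ij}\tau_{p,ij}\tau_{q,ij}$, which is~\eqref{eq:v-compact-four}.

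The main point to handle carefully is conceptual rather than computational: the statement
is a variance \emph{proxy}, and the whole argument rests on the internal consistency of the
four diagonal-constancy assumptions. I would make explicit that the same scalars
$R_x^2,\overline S,\overline P$ (together with $\sqrt{\tau_{s,ii}\tau_{p,ii}}\,\kappa_{ii}\approx1$)
must be used both for $\mu_\lambda^2$ and for the diagonal of the second moment, because the
exact discrepancy is $\tfrac1n\sum_i H_{ii}^2-\mu_\lambda^2=\operatorname{Var}_i(H_{ii})$ ---
a Jensen gap that vanishes precisely when $H_{ii}$ is treated as constant; only then does
the diagonal cancel cleanly. I would also note that discarding the diagonal is not
discarding the dominant term: each off-diagonal factor lies in $[0,1]$ with
$\tau_{p,ij},\tau_{q,ij}$ generically strictly below one, so the surviving sum
$\sum_{i\ne j}\tau_{x,ij}\tau_{s,ij}\tau_{p,ij}\tau_{q,ij}$ is exactly the multiplicative
object that the subsequent architectural analysis (positional encoding, spherical/TopK
normalization, Hadamard modulation) is built to shrink. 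No additional finite-width error
control is needed beyond the $m\to\infty$ replacement already invoked in
Proposition~\ref{prop:ntk-mu-exact}.
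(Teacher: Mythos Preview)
Your proposal is correct and follows essentially the same route as the paper's appendix: approximate $\mu_\lambda$ and $\tfrac1n\operatorname{Tr}(\mathbf H^2)$ via the diagonal-constancy assumptions, subtract, and observe that the diagonal contribution cancels $\mu_\lambda^2$, leaving the off-diagonal four-factor sum. Your treatment of the diagonal cancellation via the Jensen gap $\tfrac1n\sum_i H_{ii}^2-\mu_\lambda^2=\operatorname{Var}_i(H_{ii})$ is in fact slightly cleaner than the paper's version, which instead writes $\mu_\lambda^2$ as $\tfrac{\text{scale}}{n}\cdot n$ and then asserts $\tau_{x,ii}=\tau_{s,ii}=\tau_{p,ii}=\tau_{q,ii}=1$ to match the diagonal summands (an assertion that is only exact for $\tau_{x,ii}$ and otherwise relies on the same approximation you make explicit).
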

This four-factor decomposition shows how each component shapes the NTK spectrum. The modulation vector $\mathbf{p}_i$ is $\tanh$-scaled so that $\|\mathbf{p}_i\|_2^2 \approx \overline{P}$, ensuring controlled, non-amplifying contributions to NTK scaling.

\paragraph{Effects of architectural operations.}
We now describe how the three operations---positional encoding, normalization, and Hadamard
modulation---affect individual factors in~\eqref{eq:v-compact-four}.Full derivations appear in the appendix.

\begin{lemma}[Positional encoding reduces input similarity]\label{lem:pe}
Let $\{\mathbf x_i\}_{i=1}^n\subset[0,1]^2$ be a uniform grid, and let
$\tilde{\mathbf x}_i=\gamma(\mathbf x_i)\in\mathbb R^d$ be random Fourier
feature encodings with bandwidth $\varsigma>0$ .Then there exist constants $d_0\in\mathbb N$ and $\varsigma_0>0$(depending only on the grid) such that for all feature dimensions
$d\ge d_0$ and bandwidths $\varsigma\ge\varsigma_0$,
\[
\mathbb E_{\mathbf B}\!\left[\frac{1}{n(n-1)}\sum_{i\neq j}
\tau_{x,ij}\right]
\;\le\;
\frac{1}{n(n-1)}\sum_{i\neq j}\tau_{x,ij}^{\mathrm{bl}},
\]
with the reduction increasing as $\varsigma$ grows.
\end{lemma}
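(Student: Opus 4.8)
The plan is to work directly with the closed form $\tau_{x,ij}=\rho_{ij}^2/(\rho_{ii}\rho_{jj})$, where for the encoded inputs $\rho_{ij}=\tilde{\mathbf x}_i^\top\tilde{\mathbf x}_j=\tfrac2d\sum_{k=1}^{d/2}\cos\!\big(2\pi\mathbf b_k^\top(\mathbf x_i-\mathbf x_j)\big)$. Note that the encoding is a unit-norm feature map: $\rho_{ii}=\tfrac2d\sum_k(\cos^2+\sin^2)=1$, so $\tau_{x,ij}=\rho_{ij}^2$ exactly and the normalization denominators disappear. Thus the claim reduces to showing $\mathbb E_{\mathbf B}\big[\tfrac{1}{n(n-1)}\sum_{i\ne j}\rho_{ij}^2\big]\le\tfrac{1}{n(n-1)}\sum_{i\ne j}\tau^{\mathrm{bl}}_{x,ij}$, where the baseline quantity on the right uses raw coordinates $\mathbf x_i$. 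The first step is to compute $\mathbb E_{\mathbf B}[\rho_{ij}^2]$: expanding the square gives a double sum over frequency rows $\mathbf b_k,\mathbf b_\ell$; since the rows are i.i.d.\ Gaussian, the $k\ne\ell$ cross terms factor as $\mathbb E[\cos(2\pi\mathbf b_k^\top\delta_{ij})]\,\mathbb E[\cos(2\pi\mathbf b_\ell^\top\delta_{ij})]$ with $\delta_{ij}=\mathbf x_i-\mathbf x_j$, and each such expectation is the Gaussian characteristic function $e^{-2\pi^2\varsigma^2\|\delta_{ij}\|^2}$. The $k=\ell$ diagonal terms contribute $\mathbb E[\cos^2(2\pi\mathbf b_k^\top\delta_{ij})]=\tfrac12(1+e^{-8\pi^2\varsigma^2\|\delta_{ij}\|^2})$. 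Collecting everything, $\mathbb E_{\mathbf B}[\tau_{x,ij}]=\big(1-\tfrac1{d/2}\big)e^{-4\pi^2\varsigma^2\|\delta_{ij}\|^2}+\tfrac1{d/2}\cdot\tfrac12\big(1+e^{-8\pi^2\varsigma^2\|\delta_{ij}\|^2}\big)$, which for fixed $i\ne j$ decays to $0$ as $\varsigma\to\infty$ and $d\to\infty$.

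Next I would pin down the right-hand side. For the baseline, with raw grid coordinates in $[0,1]^2$, the off-diagonal averaged similarity $\bar\tau^{\mathrm{bl}}:=\tfrac1{n(n-1)}\sum_{i\ne j}\tau^{\mathrm{bl}}_{x,ij}$ is a fixed positive constant determined purely by the grid geometry (it does not depend on $\varsigma$ or $d$); moreover since not all grid points are collinear through the origin, $\bar\tau^{\mathrm{bl}}<1$, and in fact it is bounded away from $1$ by a grid-dependent gap $\gamma_{\mathrm{grid}}>0$. Then, because each $\|\delta_{ij}\|>0$ for $i\ne j$ on a grid, every exponential term $e^{-c\varsigma^2\|\delta_{ij}\|^2}\to0$ uniformly over the finitely many pairs as $\varsigma\to\infty$; choose $\varsigma_0$ large enough that all these exponentials are below $\gamma_{\mathrm{grid}}/4$, and choose $d_0$ large enough that the $1/(d/2)$ terms (which are at most $1/(d/2)$ each) are below $\gamma_{\mathrm{grid}}/4$. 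Averaging $\mathbb E_{\mathbf B}[\tau_{x,ij}]$ over $i\ne j$ then gives a bound below $\gamma_{\mathrm{grid}}/2<\bar\tau^{\mathrm{bl}}$, establishing the inequality, and the stated monotonicity in $\varsigma$ follows from the fact that each exponential term is strictly decreasing in $\varsigma$ while the $d$-dependent constant term is unaffected.

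The main obstacle, and the place needing care, is twofold. First, the interpretation of $\tau^{\mathrm{bl}}_{x,ij}$ on the right-hand side: the statement compares against a baseline that uses unencoded coordinates, so one must be explicit that the baseline $\rho_{ij}$ is $\mathbf x_i^\top\mathbf x_j$ with $\mathbf x\in[0,1]^2$ (possibly shifted/centered — I would make the centering convention explicit, since otherwise $\tau^{\mathrm{bl}}_{x,ij}$ can be close to $1$ for a grid confined to the positive quadrant, in which case the claim needs the origin-shift to the grid centroid to keep $\bar\tau^{\mathrm{bl}}$ bounded below $1$). Second, the exact dependence of the constants $d_0,\varsigma_0$ on $n$: the uniform smallness of the exponentials requires $\varsigma_0\gtrsim (\min_{i\ne j}\|\delta_{ij}\|)^{-1}\sqrt{\log(n/\gamma_{\mathrm{grid}})}$, and $\min_{i\ne j}\|\delta_{ij}\|$ scales like the grid spacing $1/\sqrt n$, so in fact $\varsigma_0$ and $d_0$ do depend mildly on the grid resolution $n$ — I would state the lemma's "depending only on the grid" clause as including this resolution dependence, or equivalently fix $n$ first. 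Everything else is a routine second-moment computation plus the Gaussian characteristic function, so no deep tools beyond $\mathbb E[\cos(2\pi\mathbf b^\top\delta)]=e^{-2\pi^2\varsigma^2\|\delta\|^2}$ and a product expectation for independent rows are needed.
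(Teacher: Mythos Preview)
Your approach is essentially identical to the paper's (Appendix~B): both observe that the RFF encoding has deterministic unit norm so $\tau_{x,ij}=\rho_{ij}^2$ exactly, compute $\mathbb E_{\mathbf B}[\rho_{ij}^2]$ via the Gaussian characteristic function to obtain the same closed form $(1-\tfrac{2}{d})e^{-4\pi^2\varsigma^2\|\delta_{ij}\|^2}+\tfrac{1}{d}(1+e^{-8\pi^2\varsigma^2\|\delta_{ij}\|^2})$, and then choose $d_0,\varsigma_0$ to drive this below the fixed positive baseline average $\bar\tau^{\mathrm{bl}}$. One small slip to fix: you define $\gamma_{\mathrm{grid}}$ as the gap $1-\bar\tau^{\mathrm{bl}}$ but then invoke $\gamma_{\mathrm{grid}}/2<\bar\tau^{\mathrm{bl}}$, which is not automatic---the paper instead takes the threshold $\epsilon\in(0,\tfrac12\bar\tau^{\mathrm{bl}})$ directly, and that is the constant you should use as well.
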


\begin{corollary}[PE decreases NTK variance] \label{cor:pe}
By formula~\ref{eq:v-compact-four},Lemma~\ref{lem:pe},
the smaller $\sum_{i\neq j}\tau_{x,ij}$ yields the smaller $v_\lambda$; therefore,
positional encoding systematically mitigates spectral bias.
\end{corollary}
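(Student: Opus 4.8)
The plan is to combine the variance proxy of Theorem~\ref{thm:ntk-v-exact} with the input-similarity bound of Lemma~\ref{lem:pe} in three short steps. First I would record that in the proxy
\[
v_\lambda \approx C \sum_{i\neq j}\tau_{x,ij}\,\tau_{s,ij}\,\tau_{p,ij}\,\tau_{q,ij},
\qquad
C = \frac{a^4 R_x^4\,\overline S^{\,2}\,\overline P^{\,2}}{n\,m^2} > 0,
\]
the prefactor $C$ is strictly positive (all of $a,R_x,\overline S,\overline P$ are positive and $n,m$ are finite), so the sign of any change in $v_\lambda$ is controlled entirely by the off-diagonal pairwise sum, and in particular by how positional encoding moves that sum.

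Second, I would verify that the three auxiliary factors are uniformly bounded in $[0,1]$: $\tau_{q,ij}$ is a squared cosine and hence lies in $[0,1]$; and $\tau_{s,ij},\tau_{p,ij}$ are normalized energy overlaps of componentwise-nonnegative vectors, since for nonnegative scalars $a_r,b_r$ one has $\sum_r a_r b_r \le (\sum_r a_r)(\sum_r b_r)$, applied with $a_r = s_{i,r}^2$, $b_r = s_{j,r}^2$ (resp.\ $p_{i,r}^2,p_{j,r}^2$), giving $0\le\tau_{s,ij},\tau_{p,ij}\le 1$. Consequently each summand is monotone nondecreasing in $\tau_{x,ij}$, and the pairwise sum is a nonnegative-weighted linear functional of the vector $(\tau_{x,ij})_{i\neq j}$.

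Third---and this is where the only genuine modeling step enters---I would argue that switching on the random Fourier positional encoding $\gamma(\cdot)$ changes the input geometry, i.e.\ the factor $\tau_{x,ij}$, whereas the hidden ReLU-gating statistics, the modulation energies, and their alignment (the factors $\tau_{s},\tau_{p},\tau_{q}$) are governed by the Gaussian initialization and the bounded-data assumption \textbf{A1} in a way that does not depend on whether inputs arrive raw or through $\gamma(\cdot)$. Under the decoupling assumption that $\tau_{x,ij}$ and the product $\tau_{s,ij}\tau_{p,ij}\tau_{q,ij}$ are approximately uncorrelated across pairs---consistent with the mean-field spirit of the variance proxy itself---the weighted sum factorizes, $\sum_{i\neq j}\tau_{x,ij}\tau_{s,ij}\tau_{p,ij}\tau_{q,ij}\approx \bigl(\tfrac1{n(n-1)}\sum_{i\neq j}\tau_{x,ij}\bigr)\bigl(\sum_{i\neq j}\tau_{s,ij}\tau_{p,ij}\tau_{q,ij}\bigr)$, so that $\mathbb E_{\mathbf B}[v_\lambda]$ is proportional to $\mathbb E_{\mathbf B}\bigl[\tfrac1{n(n-1)}\sum_{i\neq j}\tau_{x,ij}\bigr]$. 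Applying Lemma~\ref{lem:pe} to this average then yields $\mathbb E_{\mathbf B}[v_\lambda^{\mathrm{PE}}]\le v_\lambda^{\mathrm{bl}}$, with the gap widening as the bandwidth $\varsigma$ grows; since $v_\lambda$ is exactly the dispersion measure that drives spectral bias, the corollary follows.

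The main obstacle is precisely the third step: Lemma~\ref{lem:pe} only controls the \emph{average} of $\tau_{x,ij}$ over off-diagonal pairs, not each entry, so I cannot conclude termwise. I would either (a) adopt the decoupling/uniformity assumption above and state it explicitly as part of the proxy regime, which is cleanest and matches the paper's level of rigor, or (b) fall back on the sandwich $C\,m_{spq}\sum_{i\neq j}\tau_{x,ij}\le v_\lambda\le C\,M_{spq}\sum_{i\neq j}\tau_{x,ij}$ with $m_{spq}=\min_{i\neq j}\tau_{s,ij}\tau_{p,ij}\tau_{q,ij}$ and $M_{spq}=\max_{i\neq j}\tau_{s,ij}\tau_{p,ij}\tau_{q,ij}$, which still transmits the monotone trend once the auxiliary factors are seen to concentrate, at the price of a constant-factor slack.
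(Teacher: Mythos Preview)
Your proposal is correct and matches the paper's approach closely. The paper's Appendix~C treats this via Proposition~\ref{prop:C-mono}, which shows that $v_\lambda$ is monotone in each similarity family when the others are held fixed, and then in Corollary~\ref{cor:C-pe} explicitly \emph{assumes} pairwise comparability $\tau_{s,ij}^{(\mathrm{PE})}\approx\tau_{s,ij}^{\mathrm{bl}}$, $\tau_{p,ij}^{(\mathrm{PE})}\approx\tau_{p,ij}^{\mathrm{bl}}$, $\tau_{q,ij}^{(\mathrm{PE})}\approx\tau_{q,ij}^{\mathrm{bl}}$ together with $\sum_{i\neq j}\tau_{x,ij}^{(\mathrm{PE})}<\sum_{i\neq j}\tau_{x,ij}^{\mathrm{bl}}$, and concludes at the same proxy level you do. Your decoupling/factorization step and the sandwich alternative are equivalent ways of handling the gap you correctly identify---that Lemma~\ref{lem:pe} bounds only the \emph{average} of $\tau_{x,ij}$---and the paper handles it no more rigorously: it simply absorbs the issue into the stated comparability assumption and the mean-field reading of the variance proxy. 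If anything, your write-up is more candid about where the heuristic sits.
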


\begin{corollary}[Effect of SP / TopK normalization on the energy factor]\label{cor:sn}
In the baseline ReLU-MLP without normalization, the variance formula \eqref{eq:bl-v-compact} contains an energy factor $\overline S_{\mathrm{bl}}^{\,2}$, reflecting the typical number and magnitude of active hidden channels per sample. Under spherical normalization (SP), the hidden vector is rescaled to unit $\ell_2$ norm, so $\|\mathbf s_i\|_2^2 \equiv 1$ and hence $\overline S^{\,2} \approx 1 \ll \overline S_{\mathrm{bl}}^{\,2}$ whenever many neurons are active. Thus $v_\lambda$ is reduced compared to the baseline essentially by a factor $\overline S^{\,2}/\overline S_{\mathrm{bl}}^{\,2}$.

TopK-SP further masks to the largest-$K$ coordinates before $\ell_2$-normalization, keeping $\tau_{s,ij}$ of the same order as in SP but yielding a smaller effective energy factor in~\eqref{eq:v-compact-four}. The energy-weighted hidden similarity $\mathcal M_{ij}=\tau_{s,ij}\,\overline S^{\,2}$ therefore contracts in expectation, and under the same data and modulation one has
\[
v_\lambda^{\mathrm{(TopK)}} \;\lesssim\; v_\lambda^{\mathrm{(SP)}} \;\ll\;
v_\lambda^{\mathrm{(bl)}}.
\]
\end{corollary}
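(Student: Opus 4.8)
The plan is to compare the two variance proxies \eqref{eq:bl-v-compact} and \eqref{eq:v-compact-four} factor by factor, isolating the energy scale and showing every other factor is either unchanged or bounded above by its baseline counterpart. First I would pin down $\overline S$ under spherical normalization. In the two-layer normalized model the hidden vector is $\mathbf s_i=(\mathbb I_{r,i}\beta_{r,i}/\sqrt{S_i})_{r=1}^m$; in the large-width limit each summand $\sigma(\mathbf w_r^\top\tilde{\mathbf x}_i)^2/S_i$ is $O(1/m)$, so $\beta_{r,i}=1-O(1/m)$ uniformly in $r$, and $\|\mathbf s_i\|_2^2=S_i^{-1}\sum_r\mathbb I_{r,i}\beta_{r,i}^2=\Theta(1)$ (equal to $1$ under the canonical scaling $\kappa^2 R_x^2=1$). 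By contrast, at Gaussian initialization $\|\mathbf s_i^{\mathrm{bl}}\|_2^2=\sum_r\mathbb I_{r,i}$ concentrates around $m/2$ by a Chernoff bound on the number of active ReLUs, so $\overline S_{\mathrm{bl}}\approx m/2$. Hence $\overline S^{\,2}/\overline S_{\mathrm{bl}}^{\,2}=O(1/m^2)$.

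Next I would check that the remaining factors never amplify. The input factor $\tau_{x,ij}$ is untouched by normalization, and it depends only on the (encoded) inputs, which are matched between the two models. Spherical normalization merely rescales each hidden coordinate without changing the sign pattern or the direction of $\mathbf s_i$ beyond the uniform $O(1/m)$ $\beta$-correction, so $\tau_{s,ij}$ stays within a constant factor of $\tau_{s,ij}^{\mathrm{bl}}$. The modulation block of \eqref{eq:v-compact-four} satisfies $\overline P^{\,2}=\Theta(1)$ because $\mathbf p_i$ is $\tanh$-scaled, and $\tau_{p,ij},\tau_{q,ij}\in(0,1]$ by elementary inequalities ($\tau_{q,ij}$ is a squared cosine; $\tau_{p,ij}$ is the sum of the diagonal subset of the nonnegative cross-terms whose full sum is $\|\mathbf p_i\|_2^2\|\mathbf p_j\|_2^2$), so inserting the modulation block can only shrink the pairwise sum. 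Dividing \eqref{eq:v-compact-four} (with SP) by \eqref{eq:bl-v-compact} then gives $v_\lambda^{\mathrm{(SP)}}/v_\lambda^{\mathrm{(bl)}}=O(1/m^2)\cdot(\text{bounded ratio of similarity sums})\ll1$, the $\ll$ part of the chain.

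For TopK-SP I would work inside \eqref{eq:v-compact-four} through the energy-weighted hidden similarity $\mathcal M_{ij}=\tau_{s,ij}\,\overline S^{\,2}$, the combination obtained once the scale and hidden factors are grouped. Masking to the $K$ largest-magnitude coordinates before $\ell_2$-normalization gives $\mathbf s_i^{\mathrm{tk}}$ supported on at most $K$ indices with unit norm, so (a) restricting the support can only reduce the energy contribution, i.e.\ the effective $\overline S^{\,2}$ in the TopK model is no larger than in SP; and (b) the dominant overlap between samples $i$ and $j$ is carried by the largest coordinates and survives the masking, so $\tau_{s,ij}^{\mathrm{(tk)}}=\Theta(\tau_{s,ij}^{\mathrm{(SP)}})$ in expectation over the initialization. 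Combining (a)--(b), $\mathbb E_{\mathbf W}[\mathcal M_{ij}^{\mathrm{(tk)}}]\le\mathbb E_{\mathbf W}[\mathcal M_{ij}^{\mathrm{(SP)}}]$; substituting into \eqref{eq:v-compact-four} with data and modulation held fixed yields $v_\lambda^{\mathrm{(TopK)}}\lesssim v_\lambda^{\mathrm{(SP)}}$, and composing with the previous step gives $v_\lambda^{\mathrm{(TopK)}}\lesssim v_\lambda^{\mathrm{(SP)}}\ll v_\lambda^{\mathrm{(bl)}}$.

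The hard part will be step (b): showing the top-$K$ mask preserves $\tau_{s,ij}$ up to a constant. The masks are chosen per sample, so the masked overlap $\|\mathbf s_i^{\mathrm{tk}}\odot\mathbf s_j^{\mathrm{tk}}\|_2^2$ depends on the joint law of the order statistics of two correlated half-Gaussian coordinate vectors, which has no clean closed form. I would control it by a mean-field approximation treating the retained coordinates as an i.i.d.\ draw from the conditional upper tail of $|\sigma(\mathbf w_r^\top\tilde{\mathbf x}_i)|$, together with an Efron--Stein (bounded-differences) concentration bound showing the masked overlap sticks to its mean; since the precise constant is deferred to the appendix, stating the conclusion as "$\lesssim$" rather than an equality is both honest and sufficient. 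Everything else---the $\Theta(1)$ versus $\Theta(m)$ energy comparison and the $\tau_{p,ij},\tau_{q,ij}\le1$ bounds---is elementary.
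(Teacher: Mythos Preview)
Your SP-versus-baseline argument is essentially the paper's: compare $\overline S\approx 1$ against $\overline S_{\mathrm{bl}}\approx m/2$, check that $\tau_x$ is untouched and $\tau_s$ is of the same order, and read off the $O(1/m^2)$ contraction. That part is fine.

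The TopK step has a real gap. Your claim (a), ``restricting the support can only reduce the energy contribution,'' does not do the work you need. Under your own definition, TopK-SP also $\ell_2$-normalizes, so $\|\mathbf s_i^{\mathrm{tk}}\|_2^2=1$ just as for SP; the post-normalization $\overline S$ is identically $1$ in both cases and gives no reduction. The paper's mechanism is different: in Appendix~D it redefines the energy scale as the \emph{pre}-normalization masked average $S_i=\tfrac{1}{m}\sum_r y_{i,r}^2\mathsf M_{i,r}$, so that under TopK only $k\ll m$ terms contribute to a sum still divided by $m$. A sub-Gaussian tail bound then yields $\overline S_{\mathrm{tk}}\le C\,\overline S_{\mathrm{sp}}\,\bar p(1+\log(1/\bar p))$ with $\bar p=k/m$, and squaring gives the near-quadratic shrinkage in \eqref{eq:D-Stopk-square}. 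Without this redefinition and the tail estimate, your (a) is vacuous.

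Your claim (b) also leans on the wrong intuition. Saying ``the dominant overlap is carried by the largest coordinates and survives the masking'' would, if anything, suggest TopK \emph{increases} $\tau_{s,ij}$: if the large coordinates of $\mathbf s_i$ and $\mathbf s_j$ tend to coincide, concentrating mass on them raises the normalized overlap. The paper avoids this by an explicit weak-correlation assumption (A5) under which the TopK index sets $K_i,K_j$ behave nearly independently, so $|K_i\cap K_j|=\Theta(k^2/m)$; combined with a non-concentration lemma bounding each retained weight by $O((\log k)/k)$, this yields $\tau_{s,ij}^{\mathrm{(tk)}}=\Theta(1/m)$ matching SP. Your Efron--Stein sketch is a reasonable route to the concentration part, but you still need the weak-correlation hypothesis and the intersection-size calculation to get the direction right.
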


\begin{corollary}[Hadamard modulation reduces modulation similarities]
\label{cor:mod}
In the baseline model, $\tau_{p,ij}=\tau_{q,ij}\equiv1$.  
A nontrivial modulation pattern (e.g., sinusoidal or structured masks) typically
yields $\tau_{p,ij}\tau_{q,ij}<1$ for $i\neq j$, thereby reducing
$v_\lambda$ and promoting a more balanced contraction across spectral modes.
\end{corollary}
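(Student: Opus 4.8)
The plan is to prove the inequality $\tau_{p,ij}\tau_{q,ij}<1$ at the level of the elementary algebraic identities that define these factors, and then propagate it through the variance proxy~\eqref{eq:v-compact-four}. The first step is two unconditional bounds. Expanding the denominator of $\tau_{p,ij}$ gives $\|\mathbf p_i\|_2^2\|\mathbf p_j\|_2^2=\sum_{r,r'}p_{r,i}^2 p_{r',j}^2=\|\mathbf p_i\odot\mathbf p_j\|_2^2+\sum_{r\neq r'}p_{r,i}^2 p_{r',j}^2$, so $\tau_{p,ij}=1-\big(\sum_{r\neq r'}p_{r,i}^2 p_{r',j}^2\big)\big/\big(\|\mathbf p_i\|_2^2\|\mathbf p_j\|_2^2\big)\le 1$, with equality iff the cross-energy $\sum_{r\neq r'}p_{r,i}^2 p_{r',j}^2$ vanishes, i.e. iff $\mathbf p_i$ and $\mathbf p_j$ are (up to scaling) supported on a single common coordinate. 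Likewise $\tau_{q,ij}=\cos^2\angle(\mathbf s_i\odot\mathbf s_j,\mathbf p_i\odot\mathbf p_j)\le 1$ by Cauchy–Schwarz, with equality iff $\mathbf s_i\odot\mathbf s_j\parallel\mathbf p_i\odot\mathbf p_j$. Multiplying, $\tau_{p,ij}\tau_{q,ij}\le 1$ for every pair, attaining the unmodulated value $1$ only in these degenerate configurations.

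Next I would make strictness precise for a nontrivial modulation. A genuinely coordinate-dependent code $p_{r,i}=\phi(\langle\omega_r,\mathbf x_i\rangle+b_r)$ (sinusoidal gating, Gabor/Fourier modulation) or a structured mask has, for $\mathbf x_i\neq\mathbf x_j$, at least two indices $r\neq r'$ with $p_{r,i}\,p_{r',j}\neq 0$; hence the cross-energy is strictly positive, $\tau_{p,ij}<1$, and therefore $\tau_{p,ij}\tau_{q,ij}<1$ irrespective of $\tau_{q,ij}$. To upgrade this from a strict inequality to a quantitative gap I would lower-bound the cross-energy by an incoherence/spread functional of the design, e.g. establish $\sum_{r\neq r'}p_{r,i}^2 p_{r',j}^2\ge c\,\|\mathbf p_i\|_2^2\|\mathbf p_j\|_2^2$ for an explicit $c=c(\phi,\{\omega_r\})>0$ whenever the modulation keeps $\Omega(m)$ comparably-sized active coordinates, giving $\tau_{p,ij}\le 1-c$ uniformly over $i\neq j$.

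Then I would substitute into the variance proxy. Within the two-layer normalized Hadamard model the modulation is decoupled from the gating: by~\eqref{eq:hada-grad-two-layer-final} the quantities $\mathbb I_{r,i},\beta_{r,i},S_i$ (hence $\mathbf s_i$) and $\rho_{ij}$ do not depend on $\mathbf c$, so adding modulation leaves $\tau_{x,ij}$ and $\tau_{s,ij}$ unchanged relative to the unmodulated model. Termwise, $\tau_{x,ij}\tau_{s,ij}\tau_{p,ij}\tau_{q,ij}\le(1-c)\,\tau_{x,ij}\tau_{s,ij}$ for $i\neq j$, so by~\eqref{eq:v-compact-four} the off-diagonal sum — and hence $v_\lambda$ — contracts by at least the factor $1-c$, which is the claimed multiplicative reduction. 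This comparison is legitimate only if the scale prefactor does not inflate in compensation: the $\tanh$-scaling of $\mathbf p_i$ enforces $\|\mathbf p_i\|_2^2\approx\overline P$ and the diagonal correction $\sqrt{\tau_{s,ii}\tau_{p,ii}}\,\kappa_{ii}\approx 1$ assumed in Theorem~\ref{thm:ntk-v-exact}, so $\mu_\lambda$ from~\eqref{eq:mu-exact-thm} is essentially unchanged. With $\mu_\lambda$ fixed and $v_\lambda$ strictly smaller, the normalized dispersion $v_\lambda/\mu_\lambda^2$ shrinks; through the gradient-descent error formula of Theorem~\ref{thm:gd} this pushes the per-mode contraction factors $(1-\eta\lambda_i)$ toward one another, so no mode lags far behind the others — the ``balanced contraction across spectral modes'' in the statement.

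The main obstacle I expect is not the inequality itself but making ``nontrivial'' and ``typically'' quantitative and architecture-faithful: one must translate a concrete modulation design into the uniform cross-energy lower bound $c>0$, and simultaneously verify that the diagonal energy and correction assumptions of Theorem~\ref{thm:ntk-v-exact} hold for both the modulated and unmodulated models, so that the drop in $v_\lambda$ is a genuine contraction of the spectrum rather than an artifact of a shifted $\mu_\lambda$. A secondary subtlety is that $\tau_{q,ij}$ can drift toward $1$ when $\mathbf s_i\odot\mathbf s_j$ happens to align with $\mathbf p_i\odot\mathbf p_j$, but since $\tau_{p,ij}\le 1-c<1$ already closes the argument this only shrinks the size of the gain, not its sign.
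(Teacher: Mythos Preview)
Your proposal is correct and follows the same core route as the paper: substitute the inequality $\tau_{p,ij}\tau_{q,ij}<1$ into the four-factor variance proxy~\eqref{eq:v-compact-four} and use that the hidden factors $\tau_{x,ij},\tau_{s,ij}$ are unaffected by the modulation. The paper's formal treatment (Appendix~\ref{sec:appC}, Corollary~\ref{cor:C-mod}) simply \emph{assumes} a uniform bound $\tau_{p,ij}\tau_{q,ij}\le C_P<1$ as a hypothesis and deduces $v_\lambda\le C_P\,v_\lambda^{(\mathrm{bl})}$ by termwise comparison; you go further and actually justify the bound via the cross-energy expansion $\|\mathbf p_i\|_2^2\|\mathbf p_j\|_2^2=\|\mathbf p_i\odot\mathbf p_j\|_2^2+\sum_{r\ne r'}p_{r,i}^2p_{r',j}^2$ and Cauchy--Schwarz for $\tau_{q,ij}$, together with the equality conditions. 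Your version is thus more self-contained on the inequality side; the paper's buys brevity but leaves the ``typical'' hypothesis unquantified, instead sketching a separate channel-incoherence argument (condition~\eqref{eq:C-mod-square-dominate}) to compare against the unlinearized squared-baseline term $(\tau_{s,ij}^{\mathrm{bl}})^2$, a refinement you do not address but which is not required for the corollary as stated.
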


Although the normalized Hadamard architecture obeys the same NTK training dynamics and convergence guaranties as the baseline, the structure of its NTK spectrum is qualitatively different.  
The four-factor decomposition~\eqref{eq:v-compact-four} makes this explicit:
PE reshapes input similarities, normalization contracts the energy-weighted hidden similarity, and Hadamard modulation introduces two additional similarity factors that are identically $1$ in the baseline but strictly smaller off the diagonal in our model.  Together, these effects substantially reduce NTK eigenvalue variance and hence systematically alleviate spectral bias.

\section{Experiments}
\label{sec:exp}

This section empirically validates our NTK variance analysis and demonstrates the benefits of the normalized Hadamard architecture. We first confirm that NTK similarity and scaling factors match theoretical predictions, then benchmark the model on standard INR tasks, including CT reconstruction and image super-resolution.

\subsection{Validating NTK Variance Reduction}
\label{subsec:ntk-validation}

We evaluate the NTK mean $\mu_\lambda$, variance $v_\lambda$, the scale variance $\tilde v_\lambda$, and the four similarity masses $\sum \tau_x$, $\sum \tau_s$, $\sum \tau_p$, $\sum \tau_q$ at initialization (epoch=0) on 2D images. Optimization quality is measured by PSNR at epoch=1000 using 200 sample points. We denote variants as positional encoding (\texttt{rff\_pe\_enc}), spherical normalization (\texttt{Norm}), Hadamard modulation (\texttt{Hada}), and their combinations.

Table~\ref{tab:ablation_ntk_part} summarizes NTK statistics and reconstruction quality.

\begin{itemize}
\item Positional encoding reduces input similarity. PSNR rises from 22.77\,dB (\texttt{base}) to 33.22\,dB (\texttt{rff\_pe\_enc}) and $v_\lambda$ drops orders of magnitude, with $\sum \tau_x$ nearly two orders lower (Corollary~\ref{cor:pe}).

\item Spherical normalization stabilizes the NTK spectrum. Adding $\ell_2$-normalization reduces $v_\lambda$, $\tilde v_\lambda$, and the hidden energy $\overline S$, mitigating spectral bias (Corollary~\ref{cor:sn}).

\item Hadamard modulation enhances expressivity with low dispersion. \texttt{Hada} combines modulation with normalization, achieving PSNR 43.28\,dB while keeping $v_\lambda$, $\tilde v_\lambda$, $\sum \tau_p$, and $\sum \tau_q$ low (Corollary~\ref{cor:mod}).
\end{itemize}

\begin{table*}[t]
  \centering
  \scriptsize
  \begin{threeparttable}
  \caption{The reconstruction quality and NTK statistics. PSNR at epoch 1000; all NTK statistics at epoch 0.}
  \label{tab:ablation_ntk_part}
  \setlength{\tabcolsep}{4pt}
  \begin{tabular}{lccccccccc}
    \toprule
    Model & PSNR$\uparrow$ & $\mu_{\lambda}$ & $v_{\lambda}$ & $\tilde v_{\lambda}$ & $\overline S$ & $\sum\tau_x$ & $\sum\tau_s$ & $\sum\tau_p$ & $\sum\tau_q$ \\
    \midrule
    \texttt{base}               & 22.77 &  4.53e-2  &  2.92e-1  &  9.41e0   &  1.94e1  &  2.97e4  &  3.72e4  &  4.00e4  &  6.65e3  \\
    \texttt{base\_Norm}         & 23.64 &  2.33e-3  &  7.80e-4  &  5.08e-3  &  4.61e-1  &  2.99e4  &  3.82e4  &  4.00e4  &  6.47e3  \\
    \texttt{rff\_pe\_enc}       & 33.22 &  2.27e-2  &  2.91e-4  &  1.57e2   &  7.75e0  &  4.24e2  &  1.73e4  &  4.00e4  &  6.40e3  \\
    \texttt{rff\_pe\_enc\_Norm} & 38.01 &  2.93e-3  &  6.20e-6  &  7.78e-1  &  4.45e-1  &  4.22e2  &  2.45e4  &  4.00e4  &  7.38e3  \\
    \texttt{Hada-NoNorm}        & 41.52 &  2.96e-4  &  1.72e-8  &  5.71e-3  &  4.98e-1  &  4.26e2  &  8.74e3  &  6.06e2  &  2.20e2  \\
    \texttt{Hada}               & 43.28 &  5.96e-4  &  4.92e-8  &  7.02e-4  &  1.43e-1  &  4.28e2  &  3.12e4  &  6.06e2  &  2.51e2  \\
    \bottomrule
  \end{tabular}
  \end{threeparttable}
\end{table*}

\paragraph{TopK-SP reduces hidden similarity.} Table~\ref{tab:ntk_init_vs_psnr} shows that TopK-SP, applied to different architectures such as \texttt{rff\_pe\_enc} or \texttt{Hada}, consistently lowers $v_\lambda$ and $\sum \tau_s$, resulting in higher PSNR. This behavior aligns with the variance proxy $v_\lambda \propto \overline S^2 \sum_{i\neq j} \tau_x \tau_s \tau_p \tau_q$, since TopK-SP simultaneously reduces both the hidden-similarity mass $\sum\tau_{s}$ and the energy scale $\overline S^2$.

Overall, these results confirm that the proposed architectural components systematically control the NTK spectrum, reducing eigenvalue variance and mitigating spectral bias. This provides a principled foundation for improved optimization and reconstruction performance in downstream INR tasks.

\begin{table}[t]
  \centering
  \footnotesize
  \setlength{\tabcolsep}{4pt}
\caption{NTK statistics at initialization (epoch 0) and final PSNR (epoch 1000) for TopK-SP and standard variants.}
  \label{tab:ntk_init_vs_psnr}
  \begin{tabular}{lcccccc}
    \toprule
    Model & PSNR & $\mu_\lambda$ & $v_\lambda$ & $\tilde v_\lambda$ & $\overline S$ & $\sum\tau_s$ \\
    \midrule
    rff\_pe\_enc\_Norm & 38.01 & 2.93e-3 & 6.24e-6 & 7.94e-1 & 4.47e-1 & 2.46e4 \\
    rff\_pe\_enc\_TopK & 41.27 & 2.93e-3 & 1.81e-6 & 4.35e-2 & 3.82e-1 & 6.48e3 \\
    Hada               & 43.27 & 5.88e-4 & 5.47e-8 & 9.10e-4 & 1.43e-1 & 3.12e4 \\
    Hada(TopK)         & 45.32 & 5.92e-4 & 5.12e-8 & 4.53e-4 & 1.42e-1 & 2.33e4 \\
    \bottomrule
  \end{tabular}
\end{table}

\begin{figure}[t]
  \centering
  \begin{subfigure}[t]{0.32\linewidth}
    \centering
    {\scriptsize \quad}\\[-2pt]
    \includegraphics[width=\linewidth]{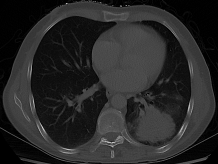}
    \caption{Original}
  \end{subfigure}\hfill
  \begin{subfigure}[t]{0.32\linewidth}
    \centering
    \textbf{32.7 dB} {\scriptsize($\pm$0.177)}\\[-2pt]
    \includegraphics[width=\linewidth]{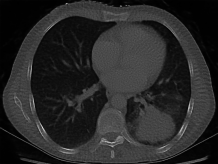}
    \caption{Normalized Hadamard}
  \end{subfigure}\hfill
  \begin{subfigure}[t]{0.32\linewidth}
    \centering
    \textbf{31.5 dB} {\scriptsize($\pm$0.291)}\\[-2pt]
    \includegraphics[width=\linewidth]{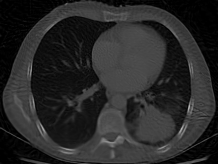}
    \caption{BW-ReLU}
  \end{subfigure}

  \vspace{0.6em}

  \begin{subfigure}[t]{0.32\linewidth}
    \centering
    \textbf{30.47 dB} {\scriptsize($\pm$0.142)}\\[-2pt]
    \includegraphics[width=\linewidth]{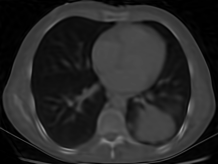}
    \caption{WIRE}
  \end{subfigure}\hfill
  \begin{subfigure}[t]{0.32\linewidth}
    \centering
    \textbf{31.16 dB} {\scriptsize($\pm$0.191)}\\[-2pt]
    \includegraphics[width=\linewidth]{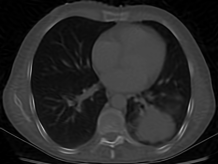}
    \caption{SIREN}
  \end{subfigure}\hfill
  \begin{subfigure}[t]{0.32\linewidth}
    \centering
    \textbf{26.2 dB} {\scriptsize($\pm$0.233)}\\[-2pt]
    \includegraphics[width=\linewidth]{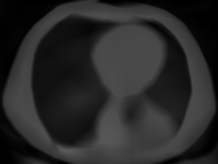}
    \caption{ReLU}
  \end{subfigure}
\vspace{-6pt}
  \caption{CT reconstruction comparisons. Mean $\pm$ standard error over five runs.}

  \label{fig:CT_Reconstruction}
\end{figure}

\subsection{Reconstruction Performance}

We evaluate the proposed architecture on two standard INR tasks, CT reconstruction and $4\times$ image super-resolution \cite{ref30}, comparing against BW-ReLU~\cite{ref30}, SIREN~\cite{ref3}, WIRE~\cite{ref60}, and ReLU networks with positional encoding.

\subsubsection{CT Reconstruction.}
Following \cite{ref61}, we reconstruct $326\times 435$ chest CT slices from 100 equidistant measurements. Fig.~\ref{fig:CT_Reconstruction} shows normalized Hadamard model achieves the highest PSNR, with adaptive spectral modulation recovering high-frequency structures while suppressing streaking artifacts.

\subsubsection{Super-Resolution.}
We perform $4\times$ super-resolution on DIV2K~\cite{ref63}. Fig.~\ref{fig:superres_grid} shows normalized Hadamard model yields superior edge and texture quality. Table~\ref{tab:sr_psnr} reports PSNR across parameter budgets, showing monotonic improvement with increasing width $m$, consistent with reduced NTK variance.

\begin{figure}[t]
  \centering
  \begin{subfigure}[t]{0.31\linewidth}
    \centering
    {\scriptsize \quad}\\[-2pt]
    \includegraphics[width=\linewidth]{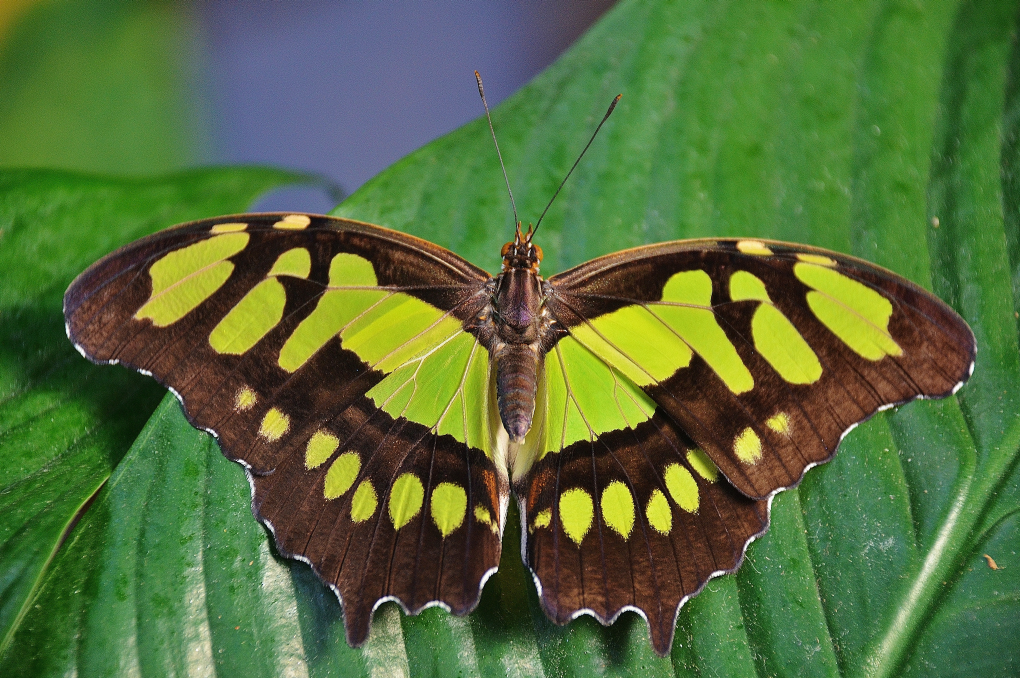}
    \caption{Original}
  \end{subfigure}\hfill
  \begin{subfigure}[t]{0.31\linewidth}
    \centering
    \textbf{27.68 dB} {\scriptsize($\pm$0.018)}\\[-2pt]
    \includegraphics[width=\linewidth]{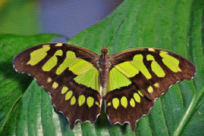}
    \caption{Normalized Hadamard}
  \end{subfigure}\hfill
  \begin{subfigure}[t]{0.31\linewidth}
    \centering
    \textbf{27.10 dB} {\scriptsize($\pm$0.027)}\\[-2pt]
    \includegraphics[width=\linewidth]{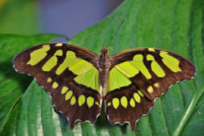}
    \caption{BW-ReLU}
  \end{subfigure}

  \vspace{0.6em}

  \begin{subfigure}[t]{0.31\linewidth}
    \centering
    \textbf{27.05 dB} {\scriptsize($\pm$0.053)}\\[-2pt]
    \includegraphics[width=\linewidth]{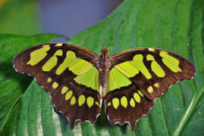}
    \caption{WIRE}
  \end{subfigure}\hfill
  \begin{subfigure}[t]{0.31\linewidth}
    \centering
    \textbf{26.30 dB} {\scriptsize($\pm$0.223)}\\[-2pt]
    \includegraphics[width=\linewidth]{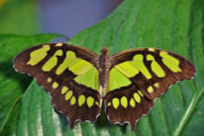}
    \caption{SIREN}
  \end{subfigure}\hfill
  \begin{subfigure}[t]{0.31\linewidth}
    \centering
    \textbf{19.30 dB} {\scriptsize($\pm$0.516)}\\[-2pt]
    \includegraphics[width=\linewidth]{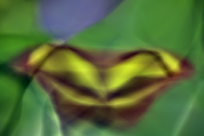}
    \caption{ReLU}
  \end{subfigure}

\vspace{-6pt}
  \caption{Super-resolution ($4\times$) results. Mean $\pm$ standard error over five runs.}
  \label{fig:superres_grid}
\end{figure}

\begin{table}[t]
  \centering
  \footnotesize
    \setlength{\tabcolsep}{4pt}
  \caption{Super-resolution PSNR (dB) for matched parameter budgets.}
  \label{tab:sr_psnr}
  \begin{tabular}{lccc}
    \toprule
    Method & 66k & 133k & 244k \\
    \midrule
    Normalized Hadamard & \textbf{26.75} & \textbf{27.68} & \textbf{27.92} \\
    BW-ReLU       & 26.44 & 27.10 & 27.30 \\
    WIRE          & \textbf{26.75} & 27.05 & 27.36 \\
    SIREN         & 25.40 & 26.30 & 26.67 \\
    ReLU          & 18.92 & 19.33 & 19.74 \\
    \bottomrule
  \end{tabular}
\end{table}

In addition to single-example results, we evaluate robustness across datasets. We sample 16 images from DIV2K and TEXT, resize to $512\times512$, and train for 4000 epochs. Table~\ref{tab:stability_dataset_first} shows that the normalized Hadamard model consistently achieves the highest mean PSNR and SSIM, while ReLU exhibits the lowest performance and highest variability, reflecting weaker spectral expressivity.

\begin{table}[t]
  \centering
  \footnotesize
    \setlength{\tabcolsep}{4pt}
\caption{Comparison of methods for reconstruction across datasets, with mean $\pm$ std of PSNR and SSIM over 16 images from DIV2K and TEXT.}
  \label{tab:stability_dataset_first}
  \begin{tabular}{llcc}
    \toprule
    Dataset & Model & PSNR (dB) & SSIM \\
    \midrule
    \multirow{5}{*}{DIV2K}
      & Normalized Hadamard & $\mathbf{36.12}\pm3.91$ & $\mathbf{0.9531}\pm0.0187$ \\
      & BSpline-W     & $30.17\pm3.82$ & $0.8923\pm0.0420$ \\
      & WIRE          & $32.78\pm3.49$ & $0.9281\pm0.0266$ \\
      & SIREN         & $28.78\pm3.55$ & $0.8964\pm0.0392$ \\
      & ReLU          & $21.68\pm2.82$ & $0.8528\pm0.0512$ \\
    \addlinespace[1pt]
    \midrule
    \multirow{5}{*}{TEXT}
      & Normalized Hadamard & $\mathbf{51.71}\pm3.37$ & $\mathbf{0.9996}\pm0.0002$ \\
      & BSpline-W     & $35.05\pm2.85$ & $0.9835\pm0.0091$ \\
      & WIRE          & $37.73\pm2.63$ & $0.9835\pm0.0087$ \\
      & SIREN         & $32.08\pm3.02$ & $0.9483\pm0.0210$ \\
      & ReLU          & $22.08\pm3.03$ & $0.8931\pm0.0508$ \\
    \bottomrule
  \end{tabular}
\end{table}

\section{Conclusion}
In this work, we analyzed the variance of neural tangent kernel eigenvalues in implicit neural representations and studied how architectural components---including positional encoding, spherical normalization, and multiplicative modulation---shape spectral bias and optimization dynamics. Through this lens, we examined the normalized Hadamard architecture and showed that it effectively reduces NTK variance while preserving expressive capacity. Experiments on CT reconstruction and image super-resolution tasks confirm that the variance-reduction effects predicted by our theory translate into improved reconstruction quality. Overall, our study provides a principled framework for understanding and designing implicit neural networks with controlled NTK spectra, enabling more stable and accurate high-fidelity function representation.


\clearpage
\renewcommand{\thesection}{\Alph{section}}
\setcounter{section}{0}
\section*{Appendix}

This appendix provides complete derivations, extended proofs, and additional analytical details supporting the theoretical results presented in the main paper. Each section focuses on one architectural component or theoretical claim and develops the corresponding NTK dynamics, similarity structures, and spectral statistics in full mathematical detail.

\begin{itemize}
    \item \textbf{Appendix~\ref{sec:appA}} gives full derivations for a baseline two-layer ReLU-MLP, including its NTK expression, eigenvalue mean/variance formulas, and the role of input and hidden-feature similarity.
    \item \textbf{Appendix~\ref{sec:appB}} provides a formal analysis of positional encoding, showing how Fourier or sinusoidal mappings transform input-similarity structure without introducing new similarity factors.
    \item \textbf{Appendix~\ref{sec:appC}} derives the NTK for Hadamard-modulated ReLU networks, including the full decomposition into four similarity and scaling factors, and characterizes how bounded modulation suppresses variance.
    \item \textbf{Appendix~\ref{sec:appD}} analyzes Top-$K$ sparse spherical normalization, proving how normalization introduces contractive, energy-weighted scaling and how this modifies hidden-feature similarity.
    \item \textbf{Appendix~\ref{sec:appE}} presents extended convergence and stability results, including gradient flow dynamics under variance-suppressed NTKs and proofs of improved conditioning.
\end{itemize}

\section{Baseline ReLU-MLP: NTK Dynamics and Spectral Statistics}
\label{sec:appA}

This appendix makes the spectral-statistics analysis of Section~\ref{sec:baseline-ntk} fully explicit for the baseline two-layer ReLU-MLP. We first derive the standard NTK gradient-flow dynamics for quadratic loss in a model-agnostic way\cite{ref16,ref32,ref35}, then specialize to the baseline architecture~\eqref{eq:bl-two-layer} to obtain the exact NTK matrix at initialization. Building on this kernel expression, we compute the mean and variance.

\subsection{Baseline Model, Loss, and Standing Assumptions}

\paragraph{Baseline predictor and loss.}
We consider the two-layer over-parameterized ReLU network of Section~\ref{sec:baseline-ntk}. For an input $\mathbf x\in\mathbb R^{d_0}$, the predictor is
\begin{equation}\label{eq:A-bl-model}
f(\mathbf W;\mathbf x)
=\frac{1}{\sqrt m}\sum_{r=1}^m a\,\phi_r(\mathbf x),
\qquad
\phi_r(\mathbf x)=\sigma(\mathbf w_r^\top\mathbf x),
\tag{A.1}
\end{equation}
where $m$ is the hidden width, $a>0$ is a fixed readout scale, and
$\mathbf w_r(0)\sim\mathcal N(\mathbf 0,\kappa^2\mathbf I_{d_0})$ are initialized i.i.d.\ Gaussian.

Given training data $\{(\mathbf x_i,y_i)\}_{i=1}^n$ with $\|\mathbf x_i\|_2\le R_x$ and $|y_i|\le C$, we define the predictions and labels
\[
u_i \triangleq f(\mathbf W;\mathbf x_i),
\qquad
\mathbf u\triangleq(u_1,\ldots,u_n)^\top,
\qquad
\mathbf y\triangleq(y_1,\ldots,y_n)^\top,
\]
and train using the quadratic loss
\begin{equation}\label{eq:A-loss}
\Phi(\mathbf W)=\frac12\|\mathbf y-\mathbf u\|_2^2.
\tag{A.2}
\end{equation}

\paragraph{Standing assumptions.}
Throughout this appendix we work under the standard NTK conditions of Section~\ref{sec:baseline-ntk}: bounded inputs and outputs, i.i.d.\ Gaussian initialization, and a non-degenerate infinite-width kernel $\mathbf H^\infty$ that is positive definite on the training set. In the over-parameterized regime and for sufficiently small learning rate, the NTK at initialization governs early-time training.

\subsection{NTK Gradient-Flow Dynamics}

We first derive the NTK dynamics for quadratic loss in a model-agnostic way.

\paragraph{Gradient flow in parameter space.}
The continuous-time gradient flow associated with
\eqref{eq:A-loss} is
\begin{equation}\label{eq:A-gf-params}
\frac{\mathrm d\mathbf W(t)}{\mathrm dt}
=-\nabla_{\mathbf W}\Phi(\mathbf W(t)).
\tag{A.3}
\end{equation}
Write $u_i(t)=f(\mathbf W(t);\mathbf x_i)$ and
$\mathbf u(t)=(u_1(t),\ldots,u_n(t))^\top$. For each neuron $r$, define the per-sample gradient
\[
\mathbf g_{r,i}(t)
\triangleq \frac{\partial u_i(t)}{\partial\mathbf w_r(t)}
\in\mathbb R^{d_0}.
\]
By the chain rule and the definition of $\Phi$,
\begin{equation}\label{eq:A-grad-loss}
\frac{\partial\Phi(\mathbf W)}{\partial\mathbf w_r}
=\sum_{i=1}^n \frac{\partial\Phi}{\partial u_i}
\frac{\partial u_i}{\partial\mathbf w_r}
=\sum_{i=1}^n (u_i-y_i)\,\mathbf g_{r,i}.
\tag{A.4}
\end{equation}
Substituting~\eqref{eq:A-grad-loss} into~\eqref{eq:A-gf-params} and
using $\frac{\mathrm d\mathbf w_r}{\mathrm dt}
=-\frac{\partial\Phi}{\partial\mathbf w_r}$ gives
\begin{equation}\label{eq:A-dw-dt}
\frac{\mathrm d\mathbf w_r(t)}{\mathrm dt}
=-\sum_{j=1}^n (u_j(t)-y_j)\,\mathbf g_{r,j}(t).
\tag{A.5}
\end{equation}

\paragraph{NTK as a Gram matrix in parameter space.}
Differentiating $u_i(t)$ with respect to time and applying the chain rule,
\begin{equation}\label{eq:A-du-dt-chain}
\frac{\mathrm d u_i(t)}{\mathrm dt}
=\sum_{r=1}^m
\Big\langle
\frac{\partial u_i(t)}{\partial\mathbf w_r(t)},
\frac{\mathrm d\mathbf w_r(t)}{\mathrm dt}
\Big\rangle
=\sum_{r=1}^m
\big\langle \mathbf g_{r,i}(t),\tfrac{\mathrm d\mathbf w_r(t)}{\mathrm dt}\big\rangle.
\tag{A.6}
\end{equation}
Substituting~\eqref{eq:A-dw-dt} into~\eqref{eq:A-du-dt-chain} yields
\begin{align}
\frac{\mathrm d u_i(t)}{\mathrm dt}
&=-\sum_{r=1}^m\sum_{j=1}^n
(u_j(t)-y_j)
\big\langle\mathbf g_{r,i}(t),\mathbf g_{r,j}(t)\big\rangle
\nonumber\\
&=-\sum_{j=1}^n H_{ij}(t)\,(u_j(t)-y_j),
\label{eq:A-du-dt-NTK}
\tag{A.7}
\end{align}
where the time-dependent neural tangent kernel (NTK) is
\begin{equation}\label{eq:A-NTK-def}
H_{ij}(t)
\triangleq
\sum_{r=1}^m
\big\langle\mathbf g_{r,i}(t),\mathbf g_{r,j}(t)\big\rangle.
\tag{A.8}
\end{equation}
In vector form,
\begin{equation}\label{eq:A-du-dt-vector}
\dot{\mathbf u}(t)
=-\mathbf H(t)\big(\mathbf u(t)-\mathbf y\big).
\tag{A.9}
\end{equation}

Under the NTK assumptions and for large hidden width $m$, the kernel remains close to its initialization along the training trajectory,
$\mathbf H(t)\approx\mathbf H(\mathbf W(0))\triangleq\mathbf H$. Replacing
$\mathbf H(t)$ by $\mathbf H$ in~\eqref{eq:A-du-dt-vector} gives the
linear ODE
\begin{equation}\label{eq:A-NTK-ODE-const}
\dot{\mathbf u}(t)
=-\mathbf H\big(\mathbf u(t)-\mathbf y\big),
\tag{A.10}
\end{equation}
which is the standard NTK regime used in the main text.

Let $\mathbf H=\mathbf V\boldsymbol\Lambda\mathbf V^\top$ be the eigendecomposition with
$\boldsymbol\Lambda=\mathrm{diag}(\lambda_1,\ldots,\lambda_n)$ and $\mathbf V=[\mathbf v_1,\ldots,\mathbf v_n]$ orthonormal. Then
\[
\mathbf u(t)-\mathbf y
=\exp(-\mathbf H t)\big(\mathbf u(0)-\mathbf y\big)
=\sum_{i=1}^n \mathrm e^{-\lambda_i t}\,
\big(\mathbf v_i^\top(\mathbf u(0)-\mathbf y)\big)\mathbf v_i,
\]
and the training error decomposes as
\begin{equation}\label{eq:A-training-error-spectral}
\|\mathbf u(t)-\mathbf y\|_2^2
=\sum_{i=1}^n \mathrm e^{-2\lambda_i t}
\big(\mathbf v_i^\top(\mathbf u(0)-\mathbf y)\big)^2.
\tag{A.11}
\end{equation}
Modes corresponding to larger eigenvalues $\lambda_i$ decay faster. A highly dispersed spectrum (large gaps between the $\lambda_i$) implies that some modes converge much more slowly than others, which is the spectral-bias pattern described in Section~\ref{sec:baseline-ntk}.

\subsection{Baseline NTK: Exact Expression}
\label{sec:A-baseline-NTK}

We now specialize the general definition~\eqref{eq:A-NTK-def} to the
baseline network~\eqref{eq:A-bl-model} and make explicit how input geometry and hidden gate overlap enter the NTK.

For $i,j\in[n]$ we write
\begin{equation}\label{eq:A-rho-def}
\rho_{ij}\triangleq \mathbf x_i^\top\mathbf x_j.
\tag{A.12}
\end{equation}
For each hidden neuron $r$ and sample $i$, the ReLU gate is
\begin{equation}\label{eq:A-gate-def}
\mathbb I_{r,i}
\triangleq \mathbb I\{\mathbf w_r^\top\mathbf x_i\ge 0\}.
\tag{A.13}
\end{equation}
Collecting these gates across channels yields the baseline hidden feature vector
\begin{equation}\label{eq:A-s-bl-def}
\mathbf s_i^{\mathrm{bl}}
\triangleq (\mathbb I_{r,i})_{r=1}^m\in\{0,1\}^m.
\tag{A.14}
\end{equation}
Its squared $\ell_2$ norm counts active channels:
\begin{equation}\label{eq:A-Si-bl}
\|\mathbf s_i^{\mathrm{bl}}\|_2^2
=\sum_{r=1}^m \mathbb I_{r,i}.
\tag{A.15}
\end{equation}
We also define the average hidden energy scale
\begin{equation}\label{eq:A-Sbar-bl}
\overline S_{\mathrm{bl}}
\triangleq \frac{1}{n}\sum_{i=1}^n \|\mathbf s_i^{\mathrm{bl}}\|_2^2.
\tag{A.16}
\end{equation}
Under i.i.d.\ Gaussian initialization, each gate is active with probability $1/2$ at a random input, so typically
$\overline S_{\mathrm{bl}}\approx m/2$.

For sample $i$,
\[
u_i
=f(\mathbf W;\mathbf x_i)
=\frac{1}{\sqrt m}\sum_{r=1}^m a\,\sigma(\mathbf w_r^\top\mathbf x_i).
\]
Differentiating with respect to $\mathbf w_r$ and applying the chain rule gives
\begin{equation}\label{eq:A-grad-raw}
\frac{\partial u_i}{\partial\mathbf w_r}
=\frac{1}{\sqrt m}\,a\,\sigma'(\mathbf w_r^\top\mathbf x_i)\,\mathbf x_i.
\tag{A.17}
\end{equation}
For ReLU, $\sigma(z)=\max\{z,0\}$ and
$\sigma'(z)=\mathbb I\{z\ge 0\}$, so by~\eqref{eq:A-gate-def}
\begin{equation}\label{eq:A-grad-I}
\frac{\partial u_i}{\partial\mathbf w_r}
=\frac{a}{\sqrt m}\,\mathbb I_{r,i}\,\mathbf x_i.
\tag{A.18}
\end{equation}

By definition,
\[
H_{ij}
=\sum_{r=1}^m
\Big\langle
\frac{\partial u_i}{\partial\mathbf w_r},
\frac{\partial u_j}{\partial\mathbf w_r}
\Big\rangle.
\]
Substituting~\eqref{eq:A-grad-I} yields
\begin{align}
H_{ij}
&=\sum_{r=1}^m
\Big\langle
\frac{a}{\sqrt m}\,\mathbb I_{r,i}\,\mathbf x_i,\,
\frac{a}{\sqrt m}\,\mathbb I_{r,j}\,\mathbf x_j
\Big\rangle
\nonumber\\[2pt]
&=\frac{a^2}{m}
\sum_{r=1}^m\mathbb I_{r,i}\mathbb I_{r,j}\,
\langle\mathbf x_i,\mathbf x_j\rangle
\nonumber\\[2pt]
&=\frac{a^2}{m}\,\rho_{ij}\sum_{r=1}^m\mathbb I_{r,i}\mathbb I_{r,j}.
\label{eq:A-H-raw}
\tag{A.19}
\end{align}
The sum $\sum_r\mathbb I_{r,i}\mathbb I_{r,j}$ counts the number of hidden channels that are simultaneously active on samples $i$ and $j$. Using the gate vector $\mathbf s_i^{\mathrm{bl}}$ from \eqref{eq:A-s-bl-def}, we may write
\[
\sum_{r=1}^m \mathbb I_{r,i}\mathbb I_{r,j}
=\sum_{r=1}^m(\mathbb I_{r,i}\mathbb I_{r,j})^2
=\|\mathbf s_i^{\mathrm{bl}}\odot\mathbf s_j^{\mathrm{bl}}\|_2^2,
\]
where $\odot$ denotes the entrywise product. Thus
\begin{equation}\label{eq:A-H-compact-pre}
\begin{aligned}
H_{ij}
&= \frac{a^2}{m}\,\rho_{ij}\sum_{r=1}^m \mathbb I_{r,i}\mathbb I_{r,j}
= \frac{a^2}{m}\,\rho_{ij}\sum_{r=1}^m s_{i,r}^{\mathrm{bl}} s_{j,r}^{\mathrm{bl}}\\
&= \frac{a^2}{m}\,\rho_{ij}\,\langle \mathbf s_i^{\mathrm{bl}},\mathbf s_j^{\mathrm{bl}}\rangle
= \frac{a^2}{m}\,\rho_{ij}\,\|\mathbf s_i^{\mathrm{bl}}\odot\mathbf s_j^{\mathrm{bl}}\|_2^2,
\end{aligned}
\tag{A.20}
\end{equation}

This expression makes explicit how input geometry and gate overlap jointly determine the baseline NTK.

\subsection{Spectral Statistics and Variance Proxy}
\label{sec:A-Spectral-Statistics-and-Variance-Proxy}
We now derive the spectral statistics used in Section~\ref{sec:baseline-ntk}.
Recall that $\mathbf H\in\mathbb R^{n\times n}$ is the (finite-width) NTK Gram
matrix with entries $H_{ij}$ given in~\eqref{eq:A-H-compact-pre}. As a Gram
matrix, $\mathbf H$ is real symmetric and positive semidefinite, and therefore
admits an eigendecomposition
\[
\mathbf H = \mathbf V\boldsymbol{\Lambda}\mathbf V^\top,
\qquad
\boldsymbol{\Lambda}=\operatorname{diag}(\lambda_1,\ldots,\lambda_n),
\]
where $\{\lambda_i\}_{i=1}^n$ are the eigenvalues of $\mathbf H$, counted with
multiplicity, and $\mathbf V=[\mathbf v_1,\ldots,\mathbf v_n]$ is an orthonormal
basis of eigenvectors. In particular,$\operatorname{Tr}(\mathbf H)=\sum_{i=1}^n\lambda_i$ and
$\operatorname{Tr}(\mathbf H^2)=\sum_{i=1}^n\lambda_i^2$.
Following Eq.~\eqref{eq:def-mu-v} in the main text, we define
\begin{equation}\label{eq:A-mu-def}
\mu_\lambda
\triangleq \frac{1}{n}\operatorname{Tr}(\mathbf H)
=\frac{1}{n}\sum_{i=1}^n\lambda_i,
\tag{A.21}
\end{equation}
\begin{equation}\label{eq:A-trH2-def}
\frac{1}{n}\operatorname{Tr}(\mathbf H^2)
=\frac{1}{n}\sum_{i=1}^n\sum_{j=1}^n H_{ij}^2
=\frac{1}{n}\sum_{i=1}^n\lambda_i^2,
\tag{A.22}
\end{equation}
\begin{equation}\label{eq:A-v-def}
v_\lambda
\triangleq \frac{1}{n}\operatorname{Tr}(\mathbf H^2)-\mu_\lambda^2.
\tag{A.23}
\end{equation}

To separate geometry and hidden overlap, we introduce the input squared-cosine similarity
\begin{equation}\label{eq:A-tau-x-def}
\tau^{\mathrm{bl}}_{x,ij}
\triangleq \frac{\rho_{ij}^2}{\rho_{ii}\rho_{jj}}
\in[0,1],
\tag{A.24}
\end{equation}
and the baseline hidden feature similarity
\begin{equation}\label{eq:A-tau-s-bl-def}
\tau^{\mathrm{bl}}_{s,ij}
\triangleq
\frac{\|\mathbf s_i^{\mathrm{bl}}\odot\mathbf s_j^{\mathrm{bl}}\|_2^2}
     {\|\mathbf s_i^{\mathrm{bl}}\|_2^2\,\|\mathbf s_j^{\mathrm{bl}}\|_2^2}
\in[0,1].
\tag{A.25}
\end{equation}
For the binary gate vectors $\mathbf s_i^{\mathrm{bl}}\in\{0,1\}^m$,
$\|\mathbf s_i^{\mathrm{bl}}\odot\mathbf s_j^{\mathrm{bl}}\|_2^2$ counts jointly active channels, while $\|\mathbf s_i^{\mathrm{bl}}\|_2^2$ counts active channels on sample $i$. Combining \eqref{eq:A-s-bl-def} and \eqref{eq:A-tau-s-bl-def},
\begin{equation}\label{eq:A-s-overlap-factor}
\|\mathbf s_i^{\mathrm{bl}}\odot\mathbf s_j^{\mathrm{bl}}\|_2^2
=\tau^{\mathrm{bl}}_{s,ij}\,
\|\mathbf s_i^{\mathrm{bl}}\|_2^2\,\|\mathbf s_j^{\mathrm{bl}}\|_2^2.
\tag{A.26}
\end{equation}
Substituting~\eqref{eq:A-s-overlap-factor} into
\eqref{eq:A-H-compact-pre} gives the factorized NTK
\begin{equation}\label{eq:A-H-compact}
H_{ij}
=\frac{a^2}{m}\,\rho_{ij}\,
\tau^{\mathrm{bl}}_{s,ij}\,
\|\mathbf s_i^{\mathrm{bl}}\|_2^2\,\|\mathbf s_j^{\mathrm{bl}}\|_2^2,
\tag{A.27}
\end{equation}
which separates input similarity, hidden similarity, and hidden energy.

Next, we first prove Proposition ~\ref{prop:bl-spectrum} in the main text.For the baseline NTK~\eqref{eq:A-H-compact},
\begin{equation}\label{eq:A-mu-exact}
\mu_\lambda
=\frac{1}{n}\sum_{i=1}^n H_{ii}
=\frac{a^2}{nm}\sum_{i=1}^n
\rho_{ii}\,\|\mathbf s_i^{\mathrm{bl}}\|_2^2.
\tag{A.28}
\end{equation}

\begin{proof}
Setting $i=j$ in~\eqref{eq:A-H-compact-pre} yields
\[
H_{ii}
=\frac{a^2}{m}\,\rho_{ii}\,
\|\mathbf s_i^{\mathrm{bl}}\odot\mathbf s_i^{\mathrm{bl}}\|_2^2.
\]
Since $\mathbf s_i^{\mathrm{bl}}\in\{0,1\}^m$, one has
$\mathbf s_i^{\mathrm{bl}}\odot\mathbf s_i^{\mathrm{bl}}
=\mathbf s_i^{\mathrm{bl}}$, so
\[
H_{ii}
=\frac{a^2}{m}\,\rho_{ii}\,\|\mathbf s_i^{\mathrm{bl}}\|_2^2.
\]
Averaging over $i$ and using~\eqref{eq:A-mu-def} gives
\eqref{eq:A-mu-exact}.
\end{proof}

We now turn to $\frac{1}{n}\operatorname{Tr}(\mathbf H\mathbf H^\top)$.
Squaring~\eqref{eq:A-H-compact} gives
\begin{equation}\label{eq:A-Hij-square}
H_{ij}^2
=\Big(\frac{a^2}{m}\Big)^2
\rho_{ij}^2\,
\big(\tau^{\mathrm{bl}}_{s,ij}\big)^2\,
\|\mathbf s_i^{\mathrm{bl}}\|_2^4\,\|\mathbf s_j^{\mathrm{bl}}\|_2^4.
\tag{A.29}
\end{equation}
Using the input similarity~\eqref{eq:A-tau-x-def},
\begin{equation}\label{eq:A-rho-factor}
\rho_{ij}^2
=\rho_{ii}\rho_{jj}\,\tau^{\mathrm{bl}}_{x,ij}.
\tag{A.30}
\end{equation}
Combining~\eqref{eq:A-Hij-square} and~\eqref{eq:A-rho-factor}, summing over $(i,j)$ and dividing by $n$ yields the exact identity
\begin{align}
\frac{1}{n}\operatorname{Tr}(\mathbf H\mathbf H^\top)
&=\frac{1}{n}\sum_{i=1}^n\sum_{j=1}^n H_{ij}^2
\nonumber\\
&=\frac{a^4}{nm^2}
\sum_{i=1}^n\sum_{j=1}^n
\big(\rho_{ii}\rho_{jj}\,\tau^{\mathrm{bl}}_{x,ij}\big)\,
\big(\tau^{\mathrm{bl}}_{s,ij}\big)^2\,
\|\mathbf s_i^{\mathrm{bl}}\|_2^4\,\|\mathbf s_j^{\mathrm{bl}}\|_2^4.
\label{eq:A-trH2-exact}
\tag{A.31}
\end{align}

We now impose a mild regularity assumption on the per-sample energy terms in order to factor out their contribution and isolate the dependence on $\tau^{\mathrm{bl}}_{x,ij}$ and $\tau^{\mathrm{bl}}_{s,ij}$. Specifically, we assume that the input energies and gate energies are approximately constant across samples:
\begin{equation}\label{eq:A-approx-energies}
\rho_{ii}\approx R_x^2,
\qquad
\|\mathbf s_i^{\mathrm{bl}}\|_2^2\approx \overline S_{\mathrm{bl}}
\quad\text{for all }i\in[n].
\tag{A.32}
\end{equation}

Under~\eqref{eq:A-approx-energies},
\[
\rho_{ii}\rho_{jj}\approx R_x^4,
\qquad
\|\mathbf s_i^{\mathrm{bl}}\|_2^4\,\|\mathbf s_j^{\mathrm{bl}}\|_2^4
\approx \overline S_{\mathrm{bl}}^{\,4},
\]
and the exact second moment \eqref{eq:A-trH2-exact} simplifies to
\begin{equation}\label{eq:A-trH2-approx}
\frac{1}{n}\operatorname{Tr}(\mathbf H\mathbf H^\top)
\approx
\frac{a^4 R_x^4\,\overline S_{\mathrm{bl}}^{\,4}}{n\,m^2}
\sum_{i=1}^n\sum_{j=1}^n
\tau^{\mathrm{bl}}_{x,ij}\,\big(\tau^{\mathrm{bl}}_{s,ij}\big)^2.
\tag{A.33}
\end{equation}
Similarly, combining~\eqref{eq:A-mu-exact} with~\eqref{eq:A-approx-energies} yields
\begin{equation}\label{eq:A-mu-approx}
\mu_\lambda
\approx
\frac{a^2 R_x^2\,\overline S_{\mathrm{bl}}}{m},
\qquad
\mu_\lambda^2
\approx
\frac{a^4 R_x^4\,\overline S_{\mathrm{bl}}^{\,2}}{m^2}.
\tag{A.34}
\end{equation}
Substituting~\eqref{eq:A-trH2-approx} and~\eqref{eq:A-mu-approx} into the variance definition~\eqref{eq:A-v-def} then gives
\begin{align}
v_\lambda
&=\frac{1}{n}\operatorname{Tr}(\mathbf H\mathbf H^\top)-\mu_\lambda^2
\nonumber\\
&\approx
\frac{a^4 R_x^4\,\overline S_{\mathrm{bl}}^{\,4}}{n\,m^2}
\sum_{i,j}\tau^{\mathrm{bl}}_{x,ij}\,\big(\tau^{\mathrm{bl}}_{s,ij}\big)^2
\;-\;
\frac{a^4 R_x^4\,\overline S_{\mathrm{bl}}^{\,2}}{m^2}.
\label{eq:A-v-approx-raw}
\tag{A.35}
\end{align}

The first term in~\eqref{eq:A-v-approx-raw} aggregates both diagonal $(i=j)$ and off-diagonal $(i\neq j)$ contributions. Using $\tau^{\mathrm{bl}}_{x,ii}=\tau^{\mathrm{bl}}_{s,ii}=1$ and the approximation~\eqref{eq:A-approx-energies}, the purely diagonal part is of order
\[
\frac{a^4 R_x^4\,\overline S_{\mathrm{bl}}^{\,4}}{n\,m^2}\sum_{i=1}^n 1
\;\asymp\;
\frac{a^4 R_x^4\,\overline S_{\mathrm{bl}}^{\,4}}{m^2},
\]
which is the same order as $\mu_\lambda^2$ in~\eqref{eq:A-mu-approx}. Hence the difference between the diagonal contribution and $\mu_\lambda^2$ can be absorbed into an $O(1)$ multiplicative constant in front of the variance.

For the remaining off-diagonal term, we further treat the squared hidden similarity as
\[
\big(\tau^{\mathrm{bl}}_{s,ij}\big)^2
=\tau^{\mathrm{bl}}_{s,ij}\cdot\tau^{\mathrm{bl}}_{s,ij}
\approx \tau^{\mathrm{bl}}_{s,ij}\,\overline{\tau}_s,
\]
where $\overline{\tau}_s\in[0,1]$ denotes a dataset-dependent typical value of $\tau^{\mathrm{bl}}_{s,ij}$, which we again fold into the overall scalar prefactor. This yields the compact variance proxy highlighted in Section~\ref{sec:baseline-ntk}:
\begin{equation}\label{eq:A-v-compact}
v_\lambda
\;\approx\;
\frac{a^4 R_x^4\,\overline S_{\mathrm{bl}}^{\,2}}{n\,m^2}
\sum_{i\ne j}\tau^{\mathrm{bl}}_{x,ij}\,\tau^{\mathrm{bl}}_{s,ij},
\tag{A.36}
\end{equation}
up to dataset-dependent constants of order one. This expression matches the scaling form stated in Proposition~\ref{prop:bl-spectrum} and is used as the baseline reference in the main text.

The expression~\eqref{eq:A-v-compact} should be read as a caling form for the eigenvalue variance of the baseline NTK. Its role is to expose the dependence of $v_\lambda$ on the input similarities $\tau^{\mathrm{bl}}_{x,ij}$ and hidden similarities $\tau^{\mathrm{bl}}_{s,ij}$, rather than to keep track of all dataset-dependent constants. Two approximations are implicit. First, regarding diagonal versus off-diagonal contributions: in the over-parameterized regime ($m\gg n$), the baseline NTK entries satisfy $H_{ii}$ and $H_{ij}$ (for $i\neq j$) having different typical magnitudes once we plug in $\|\mathbf s_i^{\mathrm{bl}}\|_2^2\propto m$ and use that only a fraction of channels are jointly active for $i\neq j$. When forming
$\tfrac1n\mathrm{Tr}(\mathbf H^2)$, both the diagonal terms $H_{ii}^2$ and the off-diagonal terms $H_{ij}^2$ contribute at the same $m$–dependent scale after we factor out the common energy prefactor in~\eqref{eq:A-trH2-exact}. However, there are only $n$ diagonal terms versus $n(n-1)$ off-diagonal ones. It is therefore natural to treat the diagonal part together with $\mu_\lambda^2$ as a dataset-dependent constant and to single out the off-diagonal mass $\sum_{i\neq j}\tau^{\mathrm{bl}}_{x,ij}\,\tau^{\mathrm{bl}}_{s,ij}$ as the dominant structured contribution that controls how dispersed the eigenvalues are. This is precisely the quantity that appears in the compressed form~\eqref{eq:A-v-compact}. Second, in going from~\eqref{eq:A-trH2-exact} to\eqref{eq:A-v-compact} we replace $\big(\tau^{\mathrm{bl}}_{s,ij}\big)^2$ by $\tau^{\mathrm{bl}}_{s,ij}$ times a typical factor $\overline{\tau}_s$, which is then absorbed into the overall constant. This mean-field linearization does not change the qualitative dependence of $v_\lambda$ on $\tau^{\mathrm{bl}}_{s,ij}$: pairs with larger hidden overlap still increase the variance more than pairs with smaller overlap.

The expression~\eqref{eq:A-v-compact} shows that, once the overall scale
\(
\frac{a^4 R_x^4\,\overline S_{\mathrm{bl}}^{\,2}}{n\,m^2}
\)
is fixed by initialization, the dominant contribution to the eigenvalue variance comes from the off-diagonal similarity mass
\[
\sum_{i\ne j}\tau^{\mathrm{bl}}_{x,ij}\,\tau^{\mathrm{bl}}_{s,ij}.
\]
Large input similarities $\tau^{\mathrm{bl}}_{x,ij}$, large hidden similarities $\tau^{\mathrm{bl}}_{s,ij}$, or a large hidden energy scale $\overline S_{\mathrm{bl}}$ all increase $v_\lambda$, leading to a more dispersed NTK spectrum and hence more imbalanced decay rates $\mathrm e^{-\lambda_i t}$ in~\eqref{eq:A-training-error-spectral}. This baseline analysis underpins the design objective in Section~\ref{sec:hada-arch-ntk}, where positional encoding, normalization, and Hadamard modulation are introduced precisely to reduce $v_\lambda$ while keeping $\mu_\lambda$ sufficiently large.

\section{Positional Encoding Shrinks Off-Diagonal Input Similarities}
\label{sec:appB}

In this appendix we provide a detailed proof of Lemma~\ref{lem:pe}.Using the same notation for input squared-cosine similarities as in Section~\ref{sec:baseline-ntk}, we analyze random Fourier feature (RFF) positional encoding on a uniform grid in $[0,1]^2$ and show that, for sufficiently large feature dimension and bandwidth, the average off-diagonal squared-cosine similarity of the encoded inputs is strictly smaller than its raw-coordinate counterpart. We also show that this average is strictly decreasing in the bandwidth $\varsigma$.

\subsection{Random Fourier Features and Second Moment}

We adopt the RFF positional encoding
\begin{equation}\label{eq:B-RFF}
\begin{gathered}
\gamma(\mathbf x)
=\sqrt{\frac{2}{d}}\;\Big[
\cos(2\pi\mathbf b_1^\top \mathbf x),\ \sin(2\pi\mathbf b_1^\top \mathbf x),\ \ldots,\\ 
\cos(2\pi\mathbf b_{d/2}^\top \mathbf x),\ \sin(2\pi\mathbf b_{d/2}^\top \mathbf x)
\Big]^\top,
\end{gathered}
\tag{B.1}
\end{equation}
where the frequencies
$\mathbf b_k\sim \mathcal N(\mathbf 0,\varsigma^2\mathbf I_2)$ are i.i.d.,
$\varsigma>0$ is the bandwidth, and $d$ is the encoded feature dimension.
For a pair of inputs $\mathbf x_i,\mathbf x_j\in[0,1]^2$, we write
\[
\tilde{\mathbf x}_i\triangleq\gamma(\mathbf x_i),
\qquad
\Delta_{ij}\triangleq \mathbf x_i-\mathbf x_j.
\]

To match the main text, we recall the squared-cosine similarities. For raw coordinates,
\begin{equation}\label{eq:B-tau-raw}
\tau_{x,ij}^{\mathrm{bl}}
\triangleq
\frac{\big(\mathbf x_i^\top\mathbf x_j\big)^2}
     {\|\mathbf x_i\|_2^2\,\|\mathbf x_j\|_2^2},
\tag{B.2}
\end{equation}
while for encoded inputs $\tilde{\mathbf x}_i=\gamma(\mathbf x_i)$ we
define
\begin{equation}\label{eq:B-tau-enc}
\tau_{x,ij}
\triangleq
\frac{\big(\tilde{\mathbf x}_i^\top\tilde{\mathbf x}_j\big)^2}
     {\|\tilde{\mathbf x}_i\|_2^2\,\|\tilde{\mathbf x}_j\|_2^2}.
\tag{B.3}
\end{equation}
These are exactly the quantities appearing in Lemma~\ref{lem:pe}.

For a single frequency $\mathbf b\sim\mathcal N(\mathbf 0,\varsigma^2\mathbf I_2)$
and a fixed displacement $\Delta\in\mathbb R^2$, set
\[
Z \;\triangleq\; \mathbf b^\top\Delta.
\]
Since $\mathbf b$ is Gaussian and the map $\mathbf b\mapsto\mathbf b^\top\Delta$ is linear, $Z$ is one-dimensional Gaussian with
\[
Z\sim\mathcal N\big(0,\varsigma^2\|\Delta\|_2^2\big).
\]
We first compute the first trigonometric moment
\[
\kappa(\Delta)
\;\triangleq\;
\mathbb E_{\mathbf b}\big[\cos(2\pi\mathbf b^\top\Delta)\big]
=\mathbb E_Z\big[\cos(2\pi Z)\big].
\]
For a zero-mean Gaussian $Z\sim\mathcal N(0,s^2)$ and any $t\in\mathbb R$, the characteristic function gives
\[
\mathbb E\big[e^{\mathrm i t Z}\big]
=\exp\!\Big(-\tfrac12 t^2 s^2\Big).
\]
Taking real parts and using $\cos(tZ)=\Re(e^{\mathrm i tZ})$ yields
\[
\mathbb E[\cos(tZ)]
=\exp\!\Big(-\tfrac12 t^2 s^2\Big).
\]
Here $t=2\pi$ and $s^2=\varsigma^2\|\Delta\|_2^2$, hence
\begin{equation}\label{eq:B-kappa-def}
\kappa(\Delta)
=\mathbb E_{\mathbf b}\big[\cos(2\pi\mathbf b^\top\Delta)\big]
=\exp\!\big(-2\pi^2\varsigma^2\|\Delta\|_2^2\big).
\tag{B.4}
\end{equation}

For the second trigonometric moment, we use the identity
$\cos^2 x = \tfrac12(1+\cos 2x)$ and obtain
\[
\mathbb E_{\mathbf b}\big[\cos^2(2\pi\mathbf b^\top\Delta)\big]
=\tfrac12\Big(1+\mathbb E_Z[\cos(4\pi Z)]\Big).
\]
Applying the same Gaussian formula with $t=4\pi$ and
$s^2=\varsigma^2\|\Delta\|_2^2$ gives
\[
\mathbb E_Z[\cos(4\pi Z)]
=\exp\!\Big(-\tfrac12 (4\pi)^2\varsigma^2\|\Delta\|_2^2\Big)
=\exp\!\big(-8\pi^2\varsigma^2\|\Delta\|_2^2\big),
\]
so that
\begin{equation}\label{eq:B-cos2-exact}
\mathbb E_{\mathbf b}\big[\cos^2(2\pi\mathbf b^\top\Delta)\big]
=\frac{1}{2}\Big(1 + \exp\!\big(-8\pi^2\varsigma^2\|\Delta\|_2^2\big)\Big).
\tag{B.5}
\end{equation}

The next proposition computes the second moment of the encoded inner product.

\begin{proposition}[Second moment of the RFF inner product]\label{prop:B-second-moment}
Let $\tilde{\mathbf x}_i=\gamma(\mathbf x_i)$ be as in
\eqref{eq:B-RFF}. Then, for any $i,j$,
\begin{equation}\label{eq:B-Ex2-exact}
\mathbb E_{\mathbf B}\!\Big[(\tilde{\mathbf x}_i^\top \tilde{\mathbf x}_j)^2\Big]
= \frac{2}{d}\,\mathbb E_{\mathbf b}\big[\cos^2(2\pi \mathbf b^\top \Delta_{ij})\big]
+ \Big(1-\frac{2}{d}\Big)\,\kappa(\Delta_{ij})^2,
\tag{B.6}
\end{equation}
where $\kappa(\cdot)$ is given by~\eqref{eq:B-kappa-def}. Using
\eqref{eq:B-cos2-exact}, this simplifies to
\begin{equation}\label{eq:B-Ex2-closed}
\begin{aligned}
\mathbb E_{\mathbf B}\!\Big[(\tilde{\mathbf x}_i^\top \tilde{\mathbf x}_j)^2\Big]
&= \frac{1}{d}\Big(1 + e^{-8\pi^2\varsigma^2\|\Delta_{ij}\|_2^2}\Big)
\\ &\quad +\Big(1-\frac{2}{d}\Big)\,e^{-4\pi^2\varsigma^2\|\Delta_{ij}\|_2^2}.
\end{aligned}
\tag{B.7}
\end{equation}
\end{proposition}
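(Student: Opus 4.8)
The plan is to reduce the claim to the one-dimensional trigonometric moments already computed in~\eqref{eq:B-kappa-def} and~\eqref{eq:B-cos2-exact}, so that no new integral is needed. The first step is to collapse each $(\cos,\sin)$ pair in the definition~\eqref{eq:B-RFF} of $\gamma$ using $\cos A\cos B+\sin A\sin B=\cos(A-B)$. Setting $C_k\triangleq\cos(2\pi\mathbf b_k^\top\Delta_{ij})$, this gives the exact identity
\[
\tilde{\mathbf x}_i^\top\tilde{\mathbf x}_j=\frac{2}{d}\sum_{k=1}^{d/2}C_k,
\]
so the encoded inner product is an average of the i.i.d.\ bounded random variables $C_1,\dots,C_{d/2}$ (i.i.d.\ because the frequencies $\mathbf b_k$ are).

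The second step is to square this, split the double sum into diagonal and off-diagonal parts,
\[
(\tilde{\mathbf x}_i^\top\tilde{\mathbf x}_j)^2=\frac{4}{d^2}\Big(\sum_{k=1}^{d/2}C_k^2+\sum_{k\ne l}C_kC_l\Big),
\]
and take expectations termwise. Identical distribution makes each of the $d/2$ diagonal terms equal to $\mathbb E_{\mathbf b}[\cos^2(2\pi\mathbf b^\top\Delta_{ij})]$, and independence makes each of the $\tfrac d2(\tfrac d2-1)$ cross terms factor as $\mathbb E[C_k]\,\mathbb E[C_l]=\kappa(\Delta_{ij})^2$. Collecting the prefactors $\tfrac{4}{d^2}\cdot\tfrac d2=\tfrac2d$ and $\tfrac{4}{d^2}\cdot\tfrac d2(\tfrac d2-1)=1-\tfrac2d$ yields~\eqref{eq:B-Ex2-exact} at finite $d$. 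Substituting $\kappa(\Delta_{ij})=e^{-2\pi^2\varsigma^2\|\Delta_{ij}\|_2^2}$ from~\eqref{eq:B-kappa-def} and $\mathbb E_{\mathbf b}[\cos^2(2\pi\mathbf b^\top\Delta_{ij})]=\tfrac12(1+e^{-8\pi^2\varsigma^2\|\Delta_{ij}\|_2^2})$ from~\eqref{eq:B-cos2-exact}, and using $\tfrac2d\cdot\tfrac12=\tfrac1d$, gives the closed form~\eqref{eq:B-Ex2-closed}.

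I do not expect a substantive obstacle: this is a second-moment identity whose derivation is pure bookkeeping once~\eqref{eq:B-kappa-def}--\eqref{eq:B-cos2-exact} are in hand. The only points requiring care are (i) counting $d/2$ frequencies rather than $d$, which together with the $\sqrt{2/d}$ normalization in~\eqref{eq:B-RFF} is exactly what produces the clean coefficients $2/d$ and $1-2/d$; and (ii) observing that the cross terms factor \emph{exactly}, not merely asymptotically, so~\eqref{eq:B-Ex2-exact} holds for every finite $d$ --- the $d\to\infty$ limit, in which $\mathbb E_{\mathbf B}[(\tilde{\mathbf x}_i^\top\tilde{\mathbf x}_j)^2]\to\kappa(\Delta_{ij})^2$, is then merely a corollary and is what will be invoked when passing to the similarity ratio $\tau_{x,ij}$ in the proof of Lemma~\ref{lem:pe}.
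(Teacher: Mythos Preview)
Your proposal is correct and follows essentially the same approach as the paper: collapse each $(\cos,\sin)$ pair via the angle-difference identity to write the inner product as $\tfrac{2}{d}\sum_{k=1}^{d/2}\cos(2\pi\mathbf b_k^\top\Delta_{ij})$, square, split into diagonal and off-diagonal terms, and apply i.i.d.\ moments. The paper's proof is slightly terser but the structure and all computations are identical.
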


\begin{proof}
From \eqref{eq:B-RFF},
\[
\tilde{\mathbf x}_i^\top \tilde{\mathbf x}_j
= \frac{2}{d}\sum_{k=1}^{d/2}\cos\!\big(2\pi \mathbf b_k^\top \Delta_{ij}\big).
\]
Let $c_k=\cos(2\pi \mathbf b_k^\top \Delta_{ij})$. Then
\[
(\tilde{\mathbf x}_i^\top \tilde{\mathbf x}_j)^2
=\Big(\frac{2}{d}\sum_{k=1}^{d/2}c_k\Big)^2
=\frac{4}{d^2}\Big(\sum_{k=1}^{d/2} c_k^2 + 2\!\!\sum_{1\le k<\ell\le d/2}\! c_k c_\ell\Big).
\]
Taking expectation and using i.i.d.\ of $\{c_k\}$,
\[
\mathbb E\big[(\tilde{\mathbf x}_i^\top \tilde{\mathbf x}_j)^2\big]
=\frac{4}{d^2}\Big(\frac{d}{2}\, \mathbb E[c_1^2]
+ \frac{d}{2}\Big(\frac{d}{2}-1\Big)\, \big(\mathbb E[c_1]\big)^2\Big).
\]
By definition,
$\mathbb E[c_1]=\kappa(\Delta_{ij})
=\exp(-2\pi^2\varsigma^2\|\Delta_{ij}\|_2^2)$, and
\[
\begin{aligned}
\mathbb E[c_1^2]
&=\mathbb E\!\Big[\frac{1+\cos\!\big(4\pi \mathbf b^\top \Delta_{ij}\big)}{2}\Big]
\\ &=\frac{1}{2}\Big(1+\exp(-8\pi^2\varsigma^2\|\Delta_{ij}\|_2^2)\Big),
\end{aligned}
\]
which gives \eqref{eq:B-Ex2-exact} and \eqref{eq:B-Ex2-closed} after simplification.
\end{proof}

\subsection{Relating Inner Products to Encoded Similarities}

We now relate the second moment
$\mathbb E_{\mathbf B}[(\tilde{\mathbf x}_i^\top \tilde{\mathbf x}_j)^2]$
to the encoded squared-cosine similarity
$\tau_{x,ij} $.

A key property of the RFF encoding \eqref{eq:B-RFF} is that the squared norm of each encoded vector is deterministic and equal to 1. Indeed, for any input $\mathbf x_i$,

\begin{equation}\label{eq:B-norm-deterministic}
\|\tilde{\mathbf x}_i\|_2^2 
= \frac{2}{d}\sum_{k=1}^{d/2}\Big[\cos^2(2\pi\mathbf b_k^\top\mathbf x_i) + \sin^2(2\pi\mathbf b_k^\top\mathbf x_i)\Big]
= \frac{2}{d}\cdot\frac{d}{2}\cdot 1 = 1,
\tag{B.8}
\end{equation}

where we used the trigonometric identity $\cos^2\theta + \sin^2\theta = 1$. Consequently, the squared-cosine similarity \eqref{eq:B-tau-enc} simplifies exactly to the square of the encoded inner product:

\begin{equation}\label{eq:B-tau-enc-exact}
\tau_{x,ij} = (\tilde{\mathbf x}_i^\top\tilde{\mathbf x}_j)^2.
\tag{B.9}
\end{equation}

Taking expectations, we obtain the exact relation
$$
\mathbb E_{\mathbf B}\big[\tau_{x,ij}\big] = \mathbb E_{\mathbf B}\big[(\tilde{\mathbf x}_i^\top\tilde{\mathbf x}_j)^2\big].
$$

\subsection{Grid Average and Proof of Lemma~\ref{lem:pe}}

Let $\{\mathbf x_i\}_{i=1}^n\subset[0,1]^2$ be a uniform grid (for example, an $m\times m$ grid with $n=m^2$ points, enumerated by $i$). We define the off-diagonal averages
\begin{equation}\label{eq:B-avg-enc-def}
\overline{\tau}_{x} 
\triangleq
\frac{1}{n(n-1)}\sum_{i\ne j}
\tau_{x,ij} ,
\tag{B.10}
\end{equation}
\begin{equation}\label{eq:B-avg-raw-def}
\overline{\tau}_{x}^{\mathrm{bl}}
\triangleq
\frac{1}{n(n-1)}\sum_{i\ne j}
\tau_{x,ij}^{\mathrm{bl}}.
\tag{B.11}
\end{equation}
On any non-degenerate grid in $[0,1]^2$ we have
$\overline{\tau}_{x}^{\mathrm{bl}}>0$.

Averaging \eqref{eq:B-tau-enc-exact} over all $i\ne j$ and using Proposition~\ref{prop:B-second-moment} yields the following.

\begin{lemma}[Averaged off-diagonal expectation]\label{lem:B-avg}
For all $d \ge 2$ (even) and any uniform grid $\{\mathbf x_i\}_{i=1}^n\subset[0,1]^2$,
\begin{equation}\label{eq:B-avg-bound-exact}
\begin{aligned}
\mathbb E_{\mathbf B}\big[\overline{\tau}_{x} \big]
&= \frac{1}{n(n-1)}\sum_{i\ne j}
\mathbb E_{\mathbf B}\big[\tau_{x,ij} \big] \\
&= \frac{1}{n(n-1)}\sum_{i\ne j}
\left[
\frac{1}{d}\Big(1 + e^{-8\pi^2\varsigma^2\|\Delta_{ij}\|_2^2}\Big)
+ \Big(1-\frac{2}{d}\Big)e^{-4\pi^2\varsigma^2\|\Delta_{ij}\|_2^2}
\right].
\end{aligned}
\tag{B.12}
\end{equation}
For each fixed $d$ and grid, the right-hand side is a continuous and strictly decreasing function of the bandwidth $\varsigma>0$.
\end{lemma}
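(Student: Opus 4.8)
The plan is to obtain the closed form by direct substitution into Proposition~\ref{prop:B-second-moment}, and then to establish the monotonicity claim by elementary termwise differentiation in $\varsigma$.

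First I would invoke the unit-norm identity \eqref{eq:B-norm-deterministic}: because each encoded vector $\tilde{\mathbf x}_i$ has deterministic squared norm $1$, the squared-cosine similarity collapses exactly to $\tau_{x,ij}=(\tilde{\mathbf x}_i^\top\tilde{\mathbf x}_j)^2$ as in \eqref{eq:B-tau-enc-exact}, with no averaging or approximation. Summing this identity over all ordered off-diagonal pairs, dividing by $n(n-1)$, and taking $\mathbb E_{\mathbf B}$ inside the finite sum gives $\mathbb E_{\mathbf B}[\overline{\tau}_x]=\tfrac{1}{n(n-1)}\sum_{i\ne j}\mathbb E_{\mathbf B}[(\tilde{\mathbf x}_i^\top\tilde{\mathbf x}_j)^2]$. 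Substituting the closed-form second moment \eqref{eq:B-Ex2-closed} with $\Delta_{ij}=\mathbf x_i-\mathbf x_j$ then yields exactly the right-hand side of \eqref{eq:B-avg-bound-exact}. This part is a routine bookkeeping step that only uses linearity of expectation over a finite index set.

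For the monotonicity, fix $d\ge 2$ (even) and the grid, and write the generic summand as $g_{ij}(\varsigma)=\tfrac1d\bigl(1+e^{-8\pi^2\varsigma^2\|\Delta_{ij}\|_2^2}\bigr)+\bigl(1-\tfrac2d\bigr)e^{-4\pi^2\varsigma^2\|\Delta_{ij}\|_2^2}$. On a non-degenerate grid the points are distinct, so $\|\Delta_{ij}\|_2^2>0$ for every $i\ne j$. For any constant $c>0$ the function $\varsigma\mapsto e^{-c\varsigma^2}$ is smooth with derivative $-2c\varsigma\,e^{-c\varsigma^2}<0$ on $(0,\infty)$; since the coefficients $1/d$ and $1-2/d$ are nonnegative with $1/d>0$, it follows that $g_{ij}'(\varsigma)<0$ for all $\varsigma>0$, and $g_{ij}$ is continuous. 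Averaging these strictly decreasing, continuous functions over the $n(n-1)>0$ off-diagonal pairs gives a strictly decreasing, continuous function of $\varsigma$, which is the last assertion of the lemma.

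I do not expect a genuine obstacle: Step 1 is substitution into an already-proved proposition, and Step 2 is calculus on a Gaussian-type exponential. The only points worth a moment of care are (i) emphasizing that \eqref{eq:B-norm-deterministic} makes \eqref{eq:B-tau-enc-exact} an \emph{exact} identity, so the averaged formula carries no hidden linearization error, and (ii) the boundary case $d=2$, where the coefficient $1-2/d$ of the second exponential vanishes and strict monotonicity must be read off from the first exponential alone — which still works because $1/d>0$ and $\|\Delta_{ij}\|_2^2>0$ on a non-degenerate grid.
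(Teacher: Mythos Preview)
Your proposal is correct and follows essentially the same route as the paper: reduce $\tau_{x,ij}$ to $(\tilde{\mathbf x}_i^\top\tilde{\mathbf x}_j)^2$ via the deterministic unit-norm identity, substitute the closed-form second moment from Proposition~\ref{prop:B-second-moment}, and then argue termwise monotonicity of the Gaussian-type exponentials over the off-diagonal pairs. Your treatment is slightly more explicit than the paper's (you actually differentiate and separately handle the $d=2$ boundary case where the $1-2/d$ coefficient vanishes), but the underlying argument is identical.
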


\begin{proof}
By definition of $\overline{\tau}_x$ \eqref{eq:B-avg-enc-def} and the exact relation \eqref{eq:B-tau-enc-exact},
$$
\mathbb E_{\mathbf B}\big[\overline{\tau}_x\big] = \frac{1}{n(n-1)}\sum_{i\ne j}\mathbb E_{\mathbf B}\big[(\tilde{\mathbf x}_i^\top\tilde{\mathbf x}_j)^2\big].
$$
Substituting the closed-form expression \eqref{eq:B-Ex2-closed} from Proposition~\ref{prop:B-second-moment} yields \eqref{eq:B-avg-bound-exact}.
For a fixed grid, $\Delta_{ij}\neq 0$ for all $i\ne j$. The map
$$
\varsigma\ \mapsto\ \frac{1}{d}\Big(1 + e^{-8\pi^2\varsigma^2\|\Delta_{ij}\|_2^2}\Big) + \Big(1-\frac{2}{d}\Big)e^{-4\pi^2\varsigma^2\|\Delta_{ij}\|_2^2}
$$
is continuous and strictly decreasing in $\varsigma>0$ (since exponential terms decay with $\varsigma$). A finite average of such maps preserves continuity and strict monotonicity.
\end{proof}

Since $\|\Delta_{ij}\|_2\ge\delta>0$ for all $i\ne j$ on a fixed grid, the exponential terms in \eqref{eq:B-avg-bound-exact} converge to zero as $\varsigma\to\infty$, and we obtain
\begin{equation}\label{eq:B-avg-limit-exact}
\lim_{\varsigma\to\infty}
\mathbb E_{\mathbf B}\big[\overline{\tau}_{x} \big]
= \frac{1}{d}.
\tag{B.13}
\end{equation}
Thus, for fixed $d$, the average encoded off-diagonal similarity can be made of order $O(1/d)$ by choosing $\varsigma$ sufficiently large.

We are now ready to prove Lemma~\ref{lem:pe}.

\begin{proof}[Proof of Lemma~\ref{lem:pe}]
By definition,
\[
\mathbb E_{\mathbf B}\!\left[\frac{1}{n(n-1)}\sum_{i\neq j}
\tau_{x,ij} \right]
=
\mathbb E_{\mathbf B}\big[\overline{\tau}_{x} \big].
\]
The target inequality in Lemma~\ref{lem:pe} can therefore be written as
\begin{equation}\label{eq:B-lemma-goal}
\mathbb E_{\mathbf B}
\big[\overline{\tau}_{x} \big]
\;\le\;
\overline{\tau}_{x}^{\mathrm{bl}},
\tag{B.14}
\end{equation}
where $\overline{\tau}_{x}^{\mathrm{bl}}$ is given by
\eqref{eq:B-avg-raw-def} and satisfies
$\overline{\tau}_{x}^{\mathrm{bl}}>0$ for any non-degenerate grid.

Fix
\[
\epsilon\in\Big(0,\frac{1}{2}\overline{\tau}_{x}^{\mathrm{bl}}\Big).
\]
By Lemma~\ref{lem:B-avg} and the limit
\eqref{eq:B-avg-limit-exact}, for any fixed $d$ there exists
$\varsigma_0(d)>0$ such that for all $\varsigma\ge\varsigma_0(d)$,
\[
\mathbb E_{\mathbf B}
\big[\overline{\tau}_{x} \big]
\le \frac{1}{d} + \epsilon.
\]
Next, choose $d_0\in\mathbb N$ large enough so that for all $d\ge d_0$
we have
\[
\frac{1}{d}\le \epsilon.
\]
For any $d\ge d_0$, take $\varsigma_0\triangleq\varsigma_0(d)$ as
above; then for all $\varsigma\ge\varsigma_0$,
\[
\mathbb E_{\mathbf B}
\big[\overline{\tau}_{x} \big]
\le \frac{1}{d} + \epsilon
\le 2\epsilon
< \overline{\tau}_{x}^{\mathrm{bl}},
\]
where the last strict inequality follows from the choice of
$\epsilon<\overline{\tau}_{x}^{\mathrm{bl}}/2$.
This establishes \eqref{eq:B-lemma-goal} for all
$d\ge d_0$ and $\varsigma\ge\varsigma_0$, proving
Lemma~\ref{lem:pe}.
Finally, Lemma~\ref{lem:B-avg} shows that
$\mathbb E_{\mathbf B}
[\overline{\tau}_{x} ]$ is strictly decreasing in
$\varsigma$ for fixed $d$, so the reduction effect strengthens as
$\varsigma$ grows.
\end{proof}

\subsection{Interpretation for Spectral Statistics}

In the main text and Appendix~A, the spectral bias of the baseline NTK is quantified by the   variance of eigenvalues $v_\lambda$, which depends on the off-diagonal input similarities $\tau_{x,ij}$. Lemma~\ref{lem:pe} shows that, on a uniform grid, random Fourier feature encoding with sufficiently large feature dimension and bandwidth strictly reduces the average off-diagonal input similarity mass in expectation. Therefore, in any spectral-variance expression where the input factor appears multiplicatively with hidden similarity terms, replacing raw coordinates by $\gamma(\mathbf x)$ directly lowers the contribution of the input similarity factor and thus helps mitigate spectral bias, provided that the hidden similarities remain of comparable order.

\section{Hadamard-ReLU: NTK Dynamics and Spectral Statistics}
\label{sec:appC}

This appendix provides the detailed NTK analysis for the two-layer normalized Hadamard model introduced in Section~\ref{sec:hada-arch-ntk}.  We start from the precise model and training setup, derive the gradient and NTK
entries step by step, and then obtain the exact eigenvalue statistics and the four-factor variance formula used to quantify spectral bias in the main text.

\subsection{Setup, Model, and NTK Representation}

We first fix notation for the training data, define the two-layer normalized
Hadamard model, and compute its NTK matrix at initialization.  

Let $\{(\mathbf x_i,y_i)\}_{i=1}^n$ be the original training pairs and
\[
\tilde{\mathbf x}_i \;=\; \gamma(\mathbf x_i)\in\mathbb R^d,
\qquad i=1,\ldots,n,
\]
be the encoded inputs obtained by the positional encoding
$\gamma(\cdot)$.
We denote the encoded inner products by
\begin{equation}\label{eq:C-rho-def}
\rho_{ij}
\;\triangleq\;
\tilde{\mathbf x}_i^\top\tilde{\mathbf x}_j,
\qquad i,j\in[n].
\tag{C.1}
\end{equation}

Given first-layer weights $\mathbf W=(\mathbf w_r)_{r=1}^m$ and fixed
second-layer coefficients $\mathbf c=(c_r)_{r=1}^m$, the prediction on
$\tilde{\mathbf x}_i$ is written as
\[
u_i
\;\triangleq\;
f(\mathbf W,\mathbf c;\tilde{\mathbf x}_i),
\qquad
\mathbf u
=
(u_1,\ldots,u_n)^\top\in\mathbb R^n.
\]
We train $\mathbf W$ by minimizing the squared loss
\begin{equation}\label{eq:C-loss}
\Phi(\mathbf W)
=
\frac12\|\mathbf y-\mathbf u\|_2^2,
\qquad
\mathbf y=(y_1,\ldots,y_n)^\top.
\tag{C.2}
\end{equation}
Throughout, the encoded inputs $(\tilde{\mathbf x}_i)$ and the modulation
coefficients $\mathbf c$ are treated as fixed, while only $\mathbf W$ is
updated.

\paragraph{Two-layer normalized Hadamard model.}
The two-layer normalized Hadamard model extends the baseline two-layer ReLU network by (i) replacing the raw input by the encoded input $\tilde{\mathbf x}$, and (ii) applying a global spherical normalization to the hidden activations before the Hadamard-modulated readout.  For a generic encoded input $\tilde{\mathbf x}\in\mathbb R^d$, the predictor is
\begin{equation}\label{eq:C-hada-model}
\begin{gathered}
f(\mathbf W,\mathbf c;\tilde{\mathbf x})
=
\frac{1}{\sqrt m}\sum_{r=1}^m c_r\,
\frac{\sigma(\mathbf w_r^\top\tilde{\mathbf x})}{\sqrt{S(\tilde{\mathbf x})}},
\\
S(\tilde{\mathbf x})
=
\sum_{r=1}^m \sigma(\mathbf w_r^\top\tilde{\mathbf x})^2,
\end{gathered}
\tag{C.3}
\end{equation}
where:
\begin{itemize}
  \item $\sigma(\cdot)$ is the ReLU activation,
  \item $S(\tilde{\mathbf x})$ is the pre-normalization hidden energy,
  \item $\mathbf c=(c_r)_{r=1}^m$ encodes the Hadamard modulation, with
  $c_r=a_r p_r$ where $a_r\in\{\pm a\}$ is a fixed readout scale and
  $p_r$ is the $r$-th modulation coefficient of $\mathbf p$.
\end{itemize}
For the $i$-th training example, we abbreviate
\begin{equation}\label{eq:C-Si-def}
S_i
\;\triangleq\;
S(\tilde{\mathbf x}_i)
=
\sum_{r=1}^m \sigma(\mathbf w_r^\top\tilde{\mathbf x}_i)^2.
\tag{C.4}
\end{equation}
The definition~\eqref{eq:C-hada-model} is exactly the two-layer normalized
Hadamard model in~\eqref{eq:hada-two-layer-final} of the main text.

To make the gradient computation explicit, we introduce the pre-activation
\[
g_{r,i}
\;\triangleq\;
\mathbf w_r^\top\tilde{\mathbf x}_i,
\qquad r\in[m],\ i\in[n],
\]
and the corresponding ReLU gate
\[
\mathbb I_{r,i}
\;\triangleq\;
\mathbb I\{g_{r,i}\ge 0\},
\]
so that $\sigma(g_{r,i})=\mathbb I_{r,i}\,g_{r,i}$ and
$\partial\sigma(g_{r,i})/\partial g_{r,i}=\mathbb I_{r,i}$.  
In addition, we define the normalization-induced correction factor
\begin{equation}\label{eq:C-beta-def}
\beta_{r,i}
\;\triangleq\;
1-\frac{\sigma(g_{r,i})^2}{S_i},
\tag{C.5}
\end{equation}
which will appear naturally when differentiating through the spherical
normalization.

\paragraph{Gradient flow in parameter space.}
Fix a neuron index $r\in[m]$ and a sample index $i\in[n]$.  
By definition~\eqref{eq:C-hada-model} and~\eqref{eq:C-Si-def}, the prediction
on $\tilde{\mathbf x}_i$ is
\begin{equation}\label{eq:C-ui-def}
u_i
=
f(\mathbf W,\mathbf c;\tilde{\mathbf x}_i)
=
\frac{1}{\sqrt m}\sum_{\ell=1}^m
c_\ell\,
\frac{\sigma(g_{\ell,i})}{\sqrt{S_i}},
\tag{C.6}
\end{equation}
where $S_i$ depends on all pre-activations $(g_{q,i})_{q=1}^m$.  
Only the term with $\ell=r$ depends on $\mathbf w_r$, hence
\begin{equation}\label{eq:C-grad-step1}
\frac{\partial u_i}{\partial\mathbf w_r}
=
\frac{c_r}{\sqrt m}\,
\frac{\partial}{\partial\mathbf w_r}
\bigg(
\frac{\sigma(g_{r,i})}{\sqrt{S_i}}
\bigg).
\tag{C.7}
\end{equation}

To differentiate the scalar quantity
\[
\varphi_i(\mathbf w_r)
\;\triangleq\;
\frac{\sigma(g_{r,i})}{\sqrt{S_i}},
\]
we view $\varphi_i$ as a function of the two scalar arguments
$(g_{r,i},S_i)$:
\[
\varphi_i(g_{r,i},S_i)
=
\sigma(g_{r,i})\,S_i^{-1/2}.
\]
By the multivariate chain rule,
\begin{equation}\label{eq:C-dphi-dwr}
\frac{\partial\varphi_i}{\partial\mathbf w_r}
=
\frac{\partial\varphi_i}{\partial g_{r,i}}\,
\frac{\partial g_{r,i}}{\partial\mathbf w_r}
+
\frac{\partial\varphi_i}{\partial S_i}\,
\frac{\partial S_i}{\partial\mathbf w_r}.
\tag{C.8}
\end{equation}
We now compute the four factors in~\eqref{eq:C-dphi-dwr} one by one.

\medskip
\noindent\emph{(1) Derivative with respect to $g_{r,i}$.}
Treating $S_i$ as a constant,
\[
\frac{\partial\varphi_i}{\partial g_{r,i}}
=
\sigma'(g_{r,i})\,S_i^{-1/2}
=
\mathbb I_{r,i}\,S_i^{-1/2}.
\]

\medskip
\noindent\emph{(2) Derivative of $g_{r,i}$ with respect to $\mathbf w_r$.}
By definition $g_{r,i}=\mathbf w_r^\top\tilde{\mathbf x}_i$, so
\[
\frac{\partial g_{r,i}}{\partial\mathbf w_r}
=
\tilde{\mathbf x}_i.
\]

\medskip
\noindent\emph{(3) Derivative with respect to $S_i$.}
For fixed $g_{r,i}$,
\[
\frac{\partial\varphi_i}{\partial S_i}
=
\sigma(g_{r,i})\,\frac{\partial}{\partial S_i}(S_i^{-1/2})
=
-\frac{1}{2}\sigma(g_{r,i})\,S_i^{-3/2}.
\]

\medskip
\noindent\emph{(4) Derivative of $S_i$ with respect to $\mathbf w_r$.}
Recall that
\[
S_i
=
\sum_{q=1}^m \sigma(g_{q,i})^2.
\]
Only the term with $q=r$ depends on $\mathbf w_r$, so
\begin{align*}
\frac{\partial S_i}{\partial\mathbf w_r}
&=
\frac{\partial}{\partial\mathbf w_r}\big(\sigma(g_{r,i})^2\big)\\[2pt]
&=
2\,\sigma(g_{r,i})\,\sigma'(g_{r,i})\,
\frac{\partial g_{r,i}}{\partial\mathbf w_r}\\[2pt]
&=
2\,\sigma(g_{r,i})\,\mathbb I_{r,i}\,\tilde{\mathbf x}_i.
\end{align*}

\medskip
\noindent\emph{(5) Combine all pieces.}
Substituting the four derivatives into~\eqref{eq:C-dphi-dwr} gives
\begin{align*}
\frac{\partial\varphi_i}{\partial\mathbf w_r}
&=
\big(\mathbb I_{r,i}\,S_i^{-1/2}\big)\,\tilde{\mathbf x}_i
 +
\big(-\tfrac{1}{2}\sigma(g_{r,i})\,S_i^{-3/2}\big)\,
\big(2\,\sigma(g_{r,i})\,\mathbb I_{r,i}\,\tilde{\mathbf x}_i\big)\\[2pt]
&=
\mathbb I_{r,i}\,S_i^{-1/2}\,\tilde{\mathbf x}_i
-
\mathbb I_{r,i}\,\sigma(g_{r,i})^2\,S_i^{-3/2}\,\tilde{\mathbf x}_i\\[2pt]
&=
\mathbb I_{r,i}\Big(S_i^{-1/2}-\sigma(g_{r,i})^2 S_i^{-3/2}\Big)\tilde{\mathbf x}_i.
\end{align*}
Factor out $S_i^{-1/2}$ and use the definition~\eqref{eq:C-beta-def}:
\[
S_i^{-1/2}-\sigma(g_{r,i})^2 S_i^{-3/2}
=
\frac{1}{\sqrt{S_i}}
\Big(1-\frac{\sigma(g_{r,i})^2}{S_i}\Big)
=
\frac{\beta_{r,i}}{\sqrt{S_i}}.
\]
Therefore
\begin{equation}\label{eq:C-dphi-final}
\frac{\partial\varphi_i}{\partial\mathbf w_r}
=
\frac{\mathbb I_{r,i}\,\beta_{r,i}}{\sqrt{S_i}}\,
\tilde{\mathbf x}_i.
\tag{C.9}
\end{equation}
Substituting~\eqref{eq:C-dphi-final} into~\eqref{eq:C-grad-step1}, we obtain
the per-neuron gradient
\begin{equation}\label{eq:C-grad-final}
\frac{\partial u_i}{\partial\mathbf w_r}
=
\frac{c_r}{\sqrt m}\,
\frac{\mathbb I_{r,i}\,\beta_{r,i}}{\sqrt{S_i}}\,
\tilde{\mathbf x}_i.
\tag{C.10}
\end{equation}
This matches the gradient formula stated in~\eqref{eq:hada-grad-two-layer-final}.

\paragraph{NTK as a Gram matrix in parameter space.}
By definition, the NTK matrix $\mathbf H\in\mathbb R^{n\times n}$ at
initialization is the Gram matrix of the gradients
$\{\partial u_i/\partial\mathbf w_r\}$:
\begin{equation}\label{eq:C-Hij-def}
H_{ij}
\;\triangleq\;
\sum_{r=1}^m
\Big\langle
\frac{\partial u_i}{\partial\mathbf w_r},
\frac{\partial u_j}{\partial\mathbf w_r}
\Big\rangle.
\tag{C.11}
\end{equation}
Substituting~\eqref{eq:C-grad-final},
\begin{align}
H_{ij}
&=
\sum_{r=1}^m
\Big\langle
\frac{c_r}{\sqrt m}\,
\frac{\mathbb I_{r,i}\,\beta_{r,i}}{\sqrt{S_i}}\,
\tilde{\mathbf x}_i,\,
\frac{c_r}{\sqrt m}\,
\frac{\mathbb I_{r,j}\,\beta_{r,j}}{\sqrt{S_j}}\,
\tilde{\mathbf x}_j
\Big\rangle
\nonumber\\[2pt]
&=
\frac{1}{m}\sum_{r=1}^m
c_r^2\,
\frac{\mathbb I_{r,i}\,\beta_{r,i}}{\sqrt{S_i}}\,
\frac{\mathbb I_{r,j}\,\beta_{r,j}}{\sqrt{S_j}}\,
\langle\tilde{\mathbf x}_i,\tilde{\mathbf x}_j\rangle
\nonumber\\[2pt]
&=
\frac{\rho_{ij}}{m}\sum_{r=1}^m
c_r^2\,
\frac{\mathbb I_{r,i}\,\beta_{r,i}}{\sqrt{S_i}}\,
\frac{\mathbb I_{r,j}\,\beta_{r,j}}{\sqrt{S_j}}.
\label{eq:C-Hij-precompact}
\tag{C.12}
\end{align}

In the normalized Hadamard model, we write $c_r=a_r p_r$ with
$a_r\in\{\pm a\}$ and modulation pattern $(P_r)_{r=1}^m$.  
Then $c_r^2=a^2 p_r^2$ and the $r$-th term in the sum can be grouped as a
product of a “hidden factor” and a “modulation factor”.  
For later use, we introduce the hidden vectors, modulation vectors, and their
Hadamard product
\begin{equation}\label{eq:C-s-P-q-def}
\begin{gathered}
\mathbf s_i
\;\triangleq\;
\Big(\frac{\mathbb I_{r,i}\,\beta_{r,i}}{\sqrt{S_i}}\Big)_{r=1}^m\in\mathbb R^m,
\\
\mathbf p_i
\;\triangleq\;
(p_r)_{r=1}^m\in\mathbb R^m,
\\
\mathbf t_i
\;\triangleq\;
\mathbf s_i\odot\mathbf p_i,
\end{gathered}
\tag{C.13}
\end{equation}
so that
\[
\sum_{r=1}^m
c_r^2\,
\frac{\mathbb I_{r,i}\,\beta_{r,i}}{\sqrt{S_i}}\,
\frac{\mathbb I_{r,j}\,\beta_{r,j}}{\sqrt{S_j}}
=
a^2\sum_{r=1}^m s_{i,r}P_r\,s_{j,r}P_r
=
a^2\langle\mathbf t_i,\mathbf t_j\rangle.
\]
Substituting into~\eqref{eq:C-Hij-precompact} yields the compact NTK representation
\begin{equation}\label{eq:C-NTK-compact}
H_{ij}
=
\frac{a^2}{m}\,\rho_{ij}\,\langle\mathbf t_i,\mathbf t_j\rangle,
\end{equation}
which is exactly the form used in Section~\ref{sec:spectral}.  
In the remainder of this appendix, we analyze the eigenvalue statistics of
$\mathbf H$ in~\eqref{eq:C-NTK-compact} and show how its variance
$v_\lambda$ decomposes into four interpretable similarity factors that
reflect the roles of input reshaping, normalization, and Hadamard modulation.

\subsection{Exact Eigenvalue Statistics in Similarity Form}
\label{sec:C-Exact-Eigenvalue-Statistics}
We now express the eigenvalue statistics of $\mathbf H$ in terms of
normalized similarity quantities.  
Let $\lambda_1,\ldots,\lambda_n$ be the eigenvalues of $\mathbf H$.
Following~\eqref{eq:def-mu-v}, we define
\begin{equation}\label{eq:C-mu-v-def}
\begin{aligned}
\mu_\lambda
\;\triangleq\;
\frac{1}{n}\operatorname{Tr}(\mathbf H)
=
\frac{1}{n}\sum_{i=1}^n\lambda_i,
\\
\frac{1}{n}\operatorname{Tr}(\mathbf H^2)
=
\frac{1}{n}\sum_{i=1}^n\lambda_i^2,
\\
v_\lambda
\;\triangleq\;
\frac{1}{n}\operatorname{Tr}(\mathbf H^2)-\mu_\lambda^2.
\end{aligned}
\end{equation}
The variance $v_\lambda$ quantifies the spread of the NTK spectrum and
thus the degree of spectral bias, since the training error decomposes
into modes that decay at rates governed by $\{\lambda_i\}$.

\paragraph{Normalized similarity factors.}
We introduce four families of normalized similarities.  
For inputs, define
\begin{equation}\label{eq:C-taux-def}
\tau_{x,ij}
\;\triangleq\;
\frac{\rho_{ij}^2}{\rho_{ii}\rho_{jj}}\in[0,1],
\tag{C.14}
\end{equation}
which measures the squared cosine between $\tilde{\mathbf x}_i$ and
$\tilde{\mathbf x}_j$.

For the normalized hidden vectors $\mathbf s_i$ and modulation vectors
$\mathbf p_i$, define
\begin{align}
\tau_{s,ij}
&\triangleq
\frac{\|\mathbf s_i\odot\mathbf s_j\|_2^2}
     {\|\mathbf s_i\|_2^2\,\|\mathbf s_j\|_2^2}\in[0,1],
\label{eq:C-taus-def}\\[3pt]
\tau_{p,ij}
&\triangleq
\frac{\|\mathbf p_i\odot\mathbf p_j\|_2^2}
     {\|\mathbf p_i\|_2^2\,\|\mathbf p_j\|_2^2}\in[0,1].
\label{eq:C-tauP-def}
\end{align}
Finally, to capture the alignment between the hidden and modulation similarities, define
\begin{equation}\label{eq:C-tauQ-def}
\begin{aligned}
\kappa_{ij}
\;\triangleq\;
\cos\angle\!\big(\mathbf s_i\odot\mathbf s_j,\ \mathbf p_i\odot\mathbf p_j\big)
\in[-1,1],
\\
\tau_{q,ij}
\;\triangleq\;
\kappa_{ij}^2\in[0,1].
\end{aligned}
\tag{C.15}
\end{equation}
Under Assumptions~\ref{assump:A-data}--\ref{assump:A-pd}, these quantities are almost surely well defined at initialization.

\paragraph{Mean eigenvalue.}
By definition of the trace,
\begin{equation}\label{eq:C-mu-def}
\mu_\lambda
=
\frac{1}{n}\operatorname{Tr}(\mathbf H)
=
\frac{1}{n}\sum_{i=1}^n H_{ii}.
\tag{C.16}
\end{equation}
Using~\eqref{eq:C-NTK-compact} with $i=j$,
\[
H_{ii}
=
\frac{a^2}{m}\,\rho_{ii}\,\langle\mathbf t_i,\mathbf t_i\rangle
=
\frac{a^2}{m}\,\rho_{ii}\,\|\mathbf t_i\|_2^2.
\]
Thus
\begin{equation}\label{eq:C-mu-basic}
\mu_\lambda
=
\frac{a^2}{nm}\sum_{i=1}^n
\rho_{ii}\,\|\mathbf t_i\|_2^2.
\tag{C.17}
\end{equation}

To connect $\|\mathbf t_i\|_2^2$ with the similarity factors, introduce
\[
\mathbf z_i
\;\triangleq\;
\mathbf s_i\odot\mathbf s_i,
\qquad
\mathbf v_i
\;\triangleq\;
\mathbf p_i\odot\mathbf p_i.
\]
Then
\begin{equation}\label{eq:C-ui-vi-inner}
\langle\mathbf z_i,\mathbf v_i\rangle
=
\sum_{r=1}^m s_{i,r}^2 p_{i,r}^2
=
\|\mathbf s_i\odot\mathbf p_i\|_2^2
=
\|\mathbf t_i\|_2^2.
\tag{C.18}
\end{equation}
By the cosine representation,
\begin{equation}\label{eq:C-ui-vi-cos}
\langle\mathbf z_i,\mathbf v_i\rangle
=
\|\mathbf z_i\|_2\,\|\mathbf v_i\|_2\,\kappa_{ii},
\tag{C.19}
\end{equation}
where $\kappa_{ii}=\cos\angle(\mathbf z_i,\mathbf v_i)$.

On the other hand, from~\eqref{eq:C-taus-def} and~\eqref{eq:C-tauP-def},
\[
\tau_{s,ii}
=
\frac{\|\mathbf s_i\odot\mathbf s_i\|_2^2}{\|\mathbf s_i\|_2^4}
=
\frac{\|\mathbf z_i\|_2^2}{\|\mathbf s_i\|_2^4},
\qquad
\tau_{p,ii}
=
\frac{\|\mathbf p_i\odot\mathbf p_i\|_2^2}{\|\mathbf p_i\|_2^4}
=
\frac{\|\mathbf v_i\|_2^2}{\|\mathbf p_i\|_2^4}.
\]
Taking square roots yields
\begin{equation}\label{eq:C-ui-vi-norms}
\begin{aligned}
\|\mathbf z_i\|_2
=
\|\mathbf s_i\|_2^2\,\sqrt{\tau_{s,ii}},
\\
\|\mathbf v_i\|_2
=
\|\mathbf p_i\|_2^2\,\sqrt{\tau_{p,ii}}.
\end{aligned}
\tag{C.20}
\end{equation}
Combining~\eqref{eq:C-ui-vi-inner}, \eqref{eq:C-ui-vi-cos}, and
\eqref{eq:C-ui-vi-norms},
\[
\|\mathbf t_i\|_2^2
=
\langle\mathbf z_i,\mathbf v_i\rangle
=
\|\mathbf s_i\|_2^2\,\|\mathbf p_i\|_2^2\,
\sqrt{\tau_{s,ii}\tau_{p,ii}}\,\kappa_{ii}.
\]
Substituting into~\eqref{eq:C-mu-basic} gives
\begin{equation}\label{eq:C-mu-exact}
\mu_\lambda
=
\frac{a^2}{nm}\sum_{i=1}^n
\rho_{ii}\,
\|\mathbf s_i\|_2^2\,\|\mathbf p_i\|_2^2\,
\sqrt{\tau_{s,ii}\tau_{p,ii}}\,\kappa_{ii},
\tag{C.21}
\end{equation}
which is precisely the mean-eigenvalue expression
\eqref{eq:mu-exact-thm} stated in the main text.

\paragraph{Second spectral moment.}
The second spectral moment is
\begin{equation}\label{eq:C-trH2-def}
\frac{1}{n}\operatorname{Tr}(\mathbf H^2)
=
\frac{1}{n}\sum_{i=1}^n\sum_{j=1}^n H_{ij}^2.
\tag{C.22}
\end{equation}
Using~\eqref{eq:C-NTK-compact},
\begin{equation}\label{eq:C-Hij-square}
H_{ij}^2
=
\Big(\frac{a^2}{m}\Big)^2
\rho_{ij}^2\,
\langle\mathbf t_i,\mathbf t_j\rangle^2.
\tag{C.23}
\end{equation}

First, the input term $\rho_{ij}^2$ can be written using
$\tau_{x,ij}$:
\begin{equation}\label{eq:C-rho-factor}
\rho_{ij}^2
=
\rho_{ii}\rho_{jj}\,\tau_{x,ij},
\tag{C.24}
\end{equation}
which follows directly from~\eqref{eq:C-taux-def}.

Next, we factorize $\langle\mathbf t_i,\mathbf t_j\rangle^2$.  
Define
\[
\mathbf z_{ij}
\;\triangleq\;
\mathbf s_i\odot\mathbf s_j,
\qquad
\mathbf v_{ij}
\;\triangleq\;
\mathbf p_i\odot\mathbf p_j.
\]
Then
\[
\langle\mathbf t_i,\mathbf t_j\rangle
=
\sum_{r=1}^m s_{i,r}p_r\,s_{j,r}p_r
=
\langle\mathbf z_{ij},\mathbf v_{ij}\rangle.
\]
By the cosine representation in~\eqref{eq:C-tauQ-def},
\begin{equation}\label{eq:C-uij-vij-cos}
\langle\mathbf z_{ij},\mathbf v_{ij}\rangle
=
\|\mathbf z_{ij}\|_2\,\|\mathbf v_{ij}\|_2\,\kappa_{ij},
\qquad
\tau_{q,ij}
=\kappa_{ij}^2.
\tag{C.25}
\end{equation}
From~\eqref{eq:C-taus-def} and~\eqref{eq:C-tauP-def},
\[
\tau_{s,ij}
=
\frac{\|\mathbf s_i\odot\mathbf s_j\|_2^2}
     {\|\mathbf s_i\|_2^2\,\|\mathbf s_j\|_2^2}
=
\frac{\|\mathbf z_{ij}\|_2^2}
     {\|\mathbf s_i\|_2^2\,\|\mathbf s_j\|_2^2},
\]
\[
\tau_{p,ij}
=
\frac{\|\mathbf p_i\odot\mathbf p_j\|_2^2}
     {\|\mathbf p_i\|_2^2\,\|\mathbf p_j\|_2^2}
=
\frac{\|\mathbf v_{ij}\|_2^2}
     {\|\mathbf p_i\|_2^2\,\|\mathbf p_j\|_2^2}.
\]
Taking square roots yields
\begin{equation}\label{eq:C-uij-vij-norms}
\begin{aligned}
\|\mathbf z_{ij}\|_2
=
\|\mathbf s_i\|_2\,\|\mathbf s_j\|_2\,\sqrt{\tau_{s,ij}},
\\
\|\mathbf v_{ij}\|_2
=
\|\mathbf p_i\|_2\,\|\mathbf p_j\|_2\,\sqrt{\tau_{p,ij}}.
\end{aligned}
\tag{C.26}
\end{equation}
Combining~\eqref{eq:C-uij-vij-cos} and \eqref{eq:C-uij-vij-norms},
\begin{align*}
\langle\mathbf t_i,\mathbf t_j\rangle^2
&=
\|\mathbf z_{ij}\|_2^2\,\|\mathbf v_{ij}\|_2^2\,\kappa_{ij}^2\\[2pt]
&=
\big(\|\mathbf s_i\|_2^2\|\mathbf s_j\|_2^2\tau_{s,ij}\big)\,
\big(\|\mathbf p_i\|_2^2\|\mathbf p_j\|_2^2\tau_{p,ij}\big)\,
\tau_{q,ij}.
\end{align*}

Substituting this and~\eqref{eq:C-rho-factor} into~\eqref{eq:C-Hij-square} gives
\begin{align*}
H_{ij}^2
=
\frac{a^4}{m^2}\,
\big(\rho_{ii}\rho_{jj}\tau_{x,ij}\big)\, 
\big(\|\mathbf s_i\|_2^2\|\mathbf s_j\|_2^2\tau_{s,ij}\big)\,\\
\big(\|\mathbf p_i\|_2^2\|\mathbf p_j\|_2^2\tau_{p,ij}\big)\,
\tau_{q,ij}.
\end{align*}
Summing over $i,j$ and dividing by $n$,
\begin{equation}\label{eq:C-trH2-exact}
\begin{aligned}
\frac{1}{n}\operatorname{Tr}(\mathbf H^2)
&=
\frac{a^4}{nm^2}\sum_{i=1}^n\sum_{j=1}^n
\big(\rho_{ii}\rho_{jj}\tau_{x,ij}\big)\\
&\quad\cdot
\big(\|\mathbf s_i\|_2^2\|\mathbf s_j\|_2^2\tau_{s,ij}\big)\,
\big(\|\mathbf p_i\|_2^2\|\mathbf p_j\|_2^2\tau_{p,ij}\big)\,
\tau_{q,ij},
\end{aligned}
\tag{C.27}
\end{equation}
which is exactly the second-moment expression
\eqref{eq:q-exact-thm} in the main text.

Combining the exact formulas~\eqref{eq:C-mu-exact},\eqref{eq:C-trH2-exact} with~\eqref{eq:C-mu-v-def} yields the exact variance identity for the Hadamard NTK and thus completes the derivation of ~\ref{prop:ntk-mu-exact} and ~\autoref{thm:ntk-v-exact}.  Since $v_\lambda$ measures the dispersion of the eigenvalue sequence $\{\lambda_i\}$, these formulas provide a direct link between the similarity structure induced by the architecture and the strength of spectral bias.

\subsection{Variance Formula under Diagonal Regularity}
\label{sec:C-Variance-Formula}
We now derive the compact four-factor variance expression used in Section~\ref{sec:spectral} by imposing mild regularity assumptions on
diagonal quantities.  The goal is to expose more clearly how the off-diagonal similarity mass controls $v_\lambda$.

\paragraph{Diagonal regularity assumptions.}
Assume that, across the training set,
\begin{itemize}
\item[(i)] Encoded input norms are approximately constant:
\[
\rho_{ii}
=\|\tilde{\mathbf x}_i\|_2^2
\approx
R_x^2,\qquad i=1,\ldots,n.
\]
\item[(ii)] Hidden and modulation energies are approximately constant:
\[
\|\mathbf s_i\|_2^2\approx\overline S,
\qquad
\|\mathbf p_i\|_2^2\approx\overline P,
\qquad i=1,\ldots,n.
\]
\item[(iii)] Self-similarities and self-couplings satisfy
\[
\sqrt{\tau_{s,ii}\tau_{p,ii}}\,\kappa_{ii}
\approx 1,
\qquad i=1,\ldots,n.
\]
\end{itemize}
These conditions capture the empirical effect of spherical normalization and well-conditioned modulation: diagonal energy scales and self-alignment are approximately uniform.

\paragraph{Approximate mean.}
Substituting (i)–(iii) into~\eqref{eq:C-mu-exact},
\begin{align*}
\mu_\lambda
&=
\frac{a^2}{nm}\sum_{i=1}^n
\rho_{ii}\,
\|\mathbf s_i\|_2^2\,\|\mathbf p_i\|_2^2\,
\sqrt{\tau_{s,ii}\tau_{p,ii}}\,\kappa_{ii}\\[2pt]
&\approx
\frac{a^2}{nm}\sum_{i=1}^n
R_x^2\,\overline S\,\overline P\cdot 1\\[2pt]
&=
\frac{a^2 R_x^2\,\overline S\,\overline P}{m},
\end{align*}
so that
\begin{equation}\label{eq:C-mu-approx}
\mu_\lambda
\;\approx\;
\frac{a^2 R_x^2\,\overline S\,\overline P}{m}.
\tag{C.28}
\end{equation}

\paragraph{Approximate second moment.}
Applying the same approximations to~\eqref{eq:C-trH2-exact} gives
\begin{align*}
\frac{1}{n}\operatorname{Tr}(\mathbf H^2)
&= \frac{a^4}{nm^2}\sum_{i,j}
\big(\rho_{ii}\rho_{jj}\tau_{x,ij}\big)\,
\big(\|\mathbf s_i\|_2^2\|\mathbf s_j\|_2^2\tau_{s,ij}\big) \\
&\quad\cdot
\big(\|\mathbf p_i\|_2^2\|\mathbf p_j\|_2^2\tau_{p,ij}\big)\,
\tau_{q,ij} \\[2pt]
&\approx
\frac{a^4}{nm^2}\sum_{i,j}
\big(R_x^2R_x^2\tau_{x,ij}\big)\,
\big(\overline S\,\overline S\,\tau_{s,ij}\big)\,
\big(\overline P\,\overline P\,\tau_{p,ij}\big)\,
\tau_{q,ij} \\[2pt]
&=
\frac{a^4 R_x^4\,\overline S^{\,2}\,\overline P^{\,2}}{n\,m^2}
\sum_{i=1}^n\sum_{j=1}^n
\tau_{x,ij}\,\tau_{s,ij}\,\tau_{p,ij}\,\tau_{q,ij}.
\end{align*}
Thus
\begin{equation}\label{eq:C-trH2-approx}
\frac{1}{n}\operatorname{Tr}(\mathbf H^2)
\;\approx\;
\frac{a^4 R_x^4\,\overline S^{\,2}\,\overline P^{\,2}}{n\,m^2}
\sum_{i,j}
\tau_{x,ij}\,\tau_{s,ij}\,\tau_{p,ij}\,\tau_{q,ij}.
\tag{C.29}
\end{equation}

\paragraph{Approximate variance and off-diagonal mass.}
From~\eqref{eq:C-mu-approx},
\[
\mu_\lambda^2
\approx
\left(\frac{a^2 R_x^2\,\overline S\,\overline P}{m}\right)^2
=
\frac{a^4 R_x^4\,\overline S^{\,2}\,\overline P^{\,2}}{m^2}.
\]
Therefore,
\begin{align}
v_\lambda
&=
\frac{1}{n}\operatorname{Tr}(\mathbf H^2)-\mu_\lambda^2
\nonumber\\[2pt]
&\approx
\frac{a^4 R_x^4\,\overline S^{\,2}\,\overline P^{\,2}}{n\,m^2}
\sum_{i,j}\tau_{x,ij}\tau_{s,ij}\tau_{p,ij}\tau_{q,ij}
-
\frac{a^4 R_x^4\,\overline S^{\,2}\,\overline P^{\,2}}{m^2}
\nonumber\\[2pt]
&=
\frac{a^4 R_x^4\,\overline S^{\,2}\,\overline P^{\,2}}{n\,m^2}
\bigg(
\sum_{i,j}\tau_{x,ij}\tau_{s,ij}\tau_{p,ij}\tau_{q,ij}
-
n
\bigg),
\label{eq:C-v-preoff}
\tag{C.30}
\end{align}
where we used
\[
\frac{a^4 R_x^4\,\overline S^{\,2}\,\overline P^{\,2}}{m^2}
=
\frac{a^4 R_x^4\,\overline S^{\,2}\,\overline P^{\,2}}{n\,m^2}\cdot n.
\]

For each $i$, by definitions~\eqref{eq:C-taux-def}–\eqref{eq:C-tauQ-def},
\[
\tau_{x,ii}=\tau_{s,ii}=\tau_{p,ii}=\tau_{q,ii}=1,
\]
so the diagonal contributions add up to
\[
\sum_{i=1}^n
\tau_{x,ii}\tau_{s,ii}\tau_{p,ii}\tau_{q,ii}
=
n.
\]
Therefore
\[
\sum_{i,j}\tau_{x,ij}\tau_{s,ij}\tau_{p,ij}\tau_{q,ij}-n
=
\sum_{i\ne j}\tau_{x,ij}\tau_{s,ij}\tau_{p,ij}\tau_{q,ij},
\]
and~\eqref{eq:C-v-preoff} becomes
\begin{equation}\label{eq:C-v-four-factor}
v_\lambda
\;\approx\;
\frac{a^4 R_x^4\,\overline S^{\,2}\,\overline P^{\,2}}{n\,m^2}
\sum_{i\ne j}
\tau_{x,ij}\tau_{s,ij}\tau_{p,ij}\tau_{q,ij}.
\tag{C.31}
\end{equation}
This is exactly the four-factor variance formula \eqref{eq:v-compact-four} used in the main text. It shows that, up to a global prefactor depending on the diagonal energies, the eigenvalue variance $v_\lambda$ is governed by the off-diagonal similarity mass
\[
\sum_{i\ne j}\tau_{x,ij}\tau_{s,ij}\tau_{p,ij}\tau_{q,ij},
\]
which aggregates the effects of input geometry, normalized hidden similarity, modulation similarity, and their mutual alignment.

\subsection{Monotonicity in Similarity Factors and Architectural Corollaries}
\label{sec:C-Architectural-Corollaries}
We conclude by formalizing how the eigenvalue variance $v_\lambda$ depends on
the similarity factors $(\tau_{x,ij},\tau_{s,ij},\tau_{p,ij},\tau_{q,ij})$ and
by deriving the architectural corollaries invoked in Section~\ref{sec:spectral}.  Throughout this subsection we work with the exact NTK representation~\eqref{eq:C-NTK-compact} and the spectral statistics \eqref{eq:C-mu-exact}–\eqref{eq:C-trH2-exact}, together with the four-factor approximation~\eqref{eq:C-v-four-factor}.

\begin{proposition}[Monotonicity of $v_\lambda$ in similarity factors]\label{prop:C-mono}
Fix the diagonal quantities $\{\rho_{ii}\}$,$\{\|\mathbf s_i\|_2\}$, and $\{\|\mathbf p_i\|_2\}$.  For off-diagonal pairs $i\ne j$, treat $\tau_{x,ij}$, $\tau_{s,ij}$, $\tau_{p,ij}$, and $\tau_{q,ij}$ as variables in $[0,1]$.  If all but one of these families are held fixed, then the eigenvalue variance $v_\lambda$ is (weakly) increasing in the remaining family.
\end{proposition}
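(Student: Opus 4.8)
The plan is to read the monotonicity directly off the variance formula: once the diagonal quantities are frozen, $v_\lambda$ is a multilinear form in the off-diagonal similarities $\{\tau_{x,ij}\},\{\tau_{s,ij}\},\{\tau_{p,ij}\},\{\tau_{q,ij}\}$ with nonnegative coefficients, and such a form on the box $[0,1]^N$ is coordinatewise nondecreasing. No fixed-point or perturbation argument is needed; the content is that elementary fact.

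First I would isolate the part of $v_\lambda$ that can move. With $\{\rho_{ii}\}$, $\{\|\mathbf s_i\|_2\}$, $\{\|\mathbf p_i\|_2\}$ fixed, the approximate mean~\eqref{eq:C-mu-approx} reads $\mu_\lambda \approx a^2 R_x^2\,\overline S\,\overline P/m$, which involves only these diagonal energies; hence $\mu_\lambda^2$ is a fixed constant under the variations considered. (If one prefers the exact mean~\eqref{eq:C-mu-exact}, the same conclusion holds, since its only ingredients are $\rho_{ii}$, $\|\mathbf s_i\|_2^2$, $\|\mathbf p_i\|_2^2$, and the \emph{self}-similarities $\tau_{s,ii},\tau_{p,ii},\kappa_{ii}$, all of which are diagonal data.) It therefore suffices to show that $\tfrac1n\operatorname{Tr}(\mathbf H^2)$ is weakly increasing in each off-diagonal family.

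Next I would use the four-factor form~\eqref{eq:C-v-four-factor} (equivalently the exact identity~\eqref{eq:C-trH2-exact}). Write $\tfrac1n\operatorname{Tr}(\mathbf H^2)$ as its diagonal part — which equals $n$ times the fixed prefactor in the normalized form, or the frozen diagonal-energy sum in the exact form — plus the off-diagonal mass
$\sum_{i\ne j} c_{ij}\,\tau_{x,ij}\tau_{s,ij}\tau_{p,ij}\tau_{q,ij}$,
where $c_{ij} = \tfrac{a^4}{nm^2}\,\rho_{ii}\rho_{jj}\,\|\mathbf s_i\|_2^2\|\mathbf s_j\|_2^2\,\|\mathbf p_i\|_2^2\|\mathbf p_j\|_2^2 \ge 0$ (a product of $a^4$ and squared norms, hence nonnegative and fixed). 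For each pair $(i,j)$ the summand is multilinear in its four similarity arguments, and across distinct pairs these arguments are independent variables. Thus, holding three families fixed in $[0,1]$ and regarding the fourth — say $\{\tau_{x,ij}\}_{i\ne j}$ — as the variable, the off-diagonal mass is a sum of terms of the form $\big(c_{ij}\,\tau_{s,ij}\tau_{p,ij}\tau_{q,ij}\big)\,\tau_{x,ij}$, each linear in $\tau_{x,ij}$ with nonnegative slope on $[0,1]$. A sum of nondecreasing functions is nondecreasing, so $\tfrac1n\operatorname{Tr}(\mathbf H^2)$ — and hence $v_\lambda = \tfrac1n\operatorname{Tr}(\mathbf H^2) - \mu_\lambda^2$ — is weakly increasing in $\{\tau_{x,ij}\}_{i\ne j}$; by the symmetry of the formula in the four families, the same holds for the $\tau_s,\tau_p,\tau_q$ variables. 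I would also remark that the monotonicity is strict in any coordinate $\tau_{\bullet,ij}$ for which $c_{ij}$ and the product of the remaining three off-diagonal factors are strictly positive, i.e.\ under a mild nondegeneracy of the data and modulation.

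There is no genuine analytical obstacle; the two points requiring a moment of care are (i) confirming that the subtracted term $\mu_\lambda^2$ carries no off-diagonal dependence — so it cannot cancel or reverse the monotonicity of $\tfrac1n\operatorname{Tr}(\mathbf H^2)$ — which is immediate from~\eqref{eq:C-mu-approx} (or from the self-similarities being diagonal data), and (ii) reading the statement correctly: the $\tau_{\bullet,ij}$ are treated as free formal variables of the variance formula, not as the geometrically coupled quantities they represent, which is what makes ``holding the other families fixed'' a well-posed operation. Once these are acknowledged, the proof reduces to the one-line observation above.
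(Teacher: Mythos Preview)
Your proposal is correct and follows essentially the same approach as the paper: both arguments observe that $\mu_\lambda$ depends only on diagonal data, then read off monotonicity of $\tfrac1n\operatorname{Tr}(\mathbf H^2)$ from the fact that it is a sum of nonnegative coefficients times products of the off-diagonal similarity factors (the paper phrases this via partial derivatives, you via multilinearity with nonnegative slopes, which is the same observation). Your added remarks on strict monotonicity and on treating the $\tau_{\bullet,ij}$ as formal variables are accurate and slightly more explicit than the paper's version.
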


\begin{proof}
We start from the exact second-moment expression~\eqref{eq:C-trH2-exact}.  
Define the nonnegative coefficients
\begin{equation}\label{eq:C-alpha-def}
\alpha_{ij}
\;\triangleq\;
\rho_{ii}\rho_{jj}\,
\|\mathbf s_i\|_2^2\|\mathbf s_j\|_2^2\,
\|\mathbf p_i\|_2^2\|\mathbf p_j\|_2^2
\;\ge\;0.
\tag{C.32}
\end{equation}
Then~\eqref{eq:C-trH2-exact} can be written compactly as
\[
\frac{1}{n}\operatorname{Tr}(\mathbf H^2)
=
\frac{a^4}{nm^2}\sum_{i,j}
\alpha_{ij}\,
\tau_{x,ij}\tau_{s,ij}\tau_{p,ij}\tau_{q,ij}.
\]

Fix an off-diagonal pair $(i,j)$ and regard, for instance, $\tau_{x,ij}$ as the
only variable, with all other factors held fixed.  Denote
\[
G
\;\triangleq\;
\frac{1}{n}\operatorname{Tr}(\mathbf H^2).
\]
The dependence of $G$ on $\tau_{x,ij}$ is linear and takes the form
\[
\begin{aligned}
G(\tau_{x,ij})
=
\frac{a^4}{nm^2}\,
\alpha_{ij}\,
\tau_{x,ij}\tau_{s,ij}\tau_{p,ij}\tau_{q,ij}
\\ +\text{(terms independent of $\tau_{x,ij}$)}.
\end{aligned}
\]
Differentiating with respect to $\tau_{x,ij}$ yields
\begin{equation}\label{eq:C-deriv-taux}
\frac{\partial G}{\partial\tau_{x,ij}}
=
\frac{a^4}{nm^2}\,
\alpha_{ij}\,
\tau_{s,ij}\tau_{p,ij}\tau_{q,ij}
\;\ge\;0,
\tag{C.33}
\end{equation}
since $\alpha_{ij}\ge 0$ and
$\tau_{s,ij},\tau_{p,ij},\tau_{q,ij}\in[0,1]$.  The same calculation applies
when varying $\tau_{s,ij}$, $\tau_{p,ij}$, or $\tau_{q,ij}$ while holding the remaining factors fixed, because $G$ is always linear and the corresponding
partial derivatives are nonnegative.

Next, the mean eigenvalue $\mu_\lambda$ in~\eqref{eq:C-mu-exact} depends only
on the diagonal entries $H_{ii}$ and therefore does not depend on any off-diagonal similarities:
\begin{equation}\label{eq:C-mu-indep}
\frac{\partial\mu_\lambda}{\partial\tau_{\bullet,ij}}=0,
\qquad i\ne j,
\tag{C.34}
\end{equation}
for any $\tau_{\bullet,ij}\in\{\tau_{x,ij},\tau_{s,ij},
\tau_{p,ij},\tau_{q,ij}\}$.  
Since $v_\lambda = \frac{1}{n}\operatorname{Tr}(\mathbf H^2)-\mu_\lambda^2
= G-\mu_\lambda^2$, combining~\eqref{eq:C-deriv-taux} and
\eqref{eq:C-mu-indep} gives
\[
\frac{\partial v_\lambda}{\partial\tau_{\bullet,ij}}
=
\frac{\partial G}{\partial\tau_{\bullet,ij}}
\;\ge\;0,
\]
which proves the claimed monotonicity.
\end{proof}

This monotonicity has direct implications for architectural design, because changes in positional encoding, modulation, or normalization primarily act by shrinking one or more of the similarity factors $\tau_{x,ij}$, $\tau_{s,ij}$, $\tau_{p,ij}$, or $\tau_{q,ij}$ on off-diagonal pairs.

\paragraph{Positional encoding.}

\begin{corollary}[Positional encoding reduces NTK variance]\label{cor:C-pe}
Consider two models that differ only in their input representation: a raw-coordinate model and a model with positional encoding.
Let $v_\lambda^{\mathrm{bl}}$ and $v_\lambda^{(\mathrm{PE})}$ be the
corresponding eigenvalue variances, and let
$\tau_{x,ij}^{\mathrm{bl}}$ and
$\tau_{x,ij}^{(\mathrm{PE})}$ denote the input similarities appearing in
\eqref{eq:C-v-four-factor}.
Assume that
\[
\sum_{i\ne j}\tau_{x,ij}^{(\mathrm{PE})}
<
\sum_{i\ne j}\tau_{x,ij}^{\mathrm{bl}},
\]
while the hidden and modulation statistics are comparable in the sense that
for $i\ne j$
\[
\tau_{s,ij}^{(\mathrm{PE})}\approx\tau_{s,ij}^{\mathrm{bl}},\quad
\tau_{p,ij}^{(\mathrm{PE})}\approx\tau_{p,ij}^{\mathrm{bl}},\quad
\tau_{q,ij}^{(\mathrm{PE})}\approx\tau_{q,ij}^{\mathrm{bl}},
\]
and the prefactor in~\eqref{eq:C-v-four-factor} is the same up to constants.
Then
\[
v_\lambda^{(\mathrm{PE})}
<
v_\lambda^{\mathrm{bl}}.
\]
\end{corollary}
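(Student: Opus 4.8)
The plan is to read the conclusion off the four-factor variance formula \eqref{eq:C-v-four-factor} together with the monotonicity of Proposition~\ref{prop:C-mono}; the only genuine work is to convert Lemma~\ref{lem:pe}, which reduces the off-diagonal \emph{sum} of $\tau_{x,ij}$, into a statement about the product-weighted sum that actually controls $v_\lambda$.

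First I would instantiate \eqref{eq:C-v-four-factor} for both models. Under the shared diagonal-regularity assumptions --- and recalling that for RFF encoding one has exactly $\rho_{ii}\equiv 1$ via the identity $\cos^2+\sin^2=1$, while positional encoding leaves the hidden and modulation energies $\overline S,\overline P$ untouched --- the prefactor $a^4R_x^4\overline S^{\,2}\overline P^{\,2}/(nm^2)$ is common to both models up to an $O(1)$ constant. Writing $\omega_{ij}\triangleq\tau_{s,ij}\tau_{p,ij}\tau_{q,ij}\in[0,1]$, the pairwise-comparability hypothesis $\tau_{s,ij}^{(\mathrm{PE})}\approx\tau_{s,ij}^{\mathrm{bl}}$ (and likewise for the $p$- and $q$-factors) makes $\omega_{ij}$ model-independent to leading order, so
\[
v_\lambda^{(\mathrm{PE})}
\;\asymp\;
\frac{a^4 R_x^4\,\overline S^{\,2}\,\overline P^{\,2}}{n\,m^2}
\sum_{i\ne j}\tau_{x,ij}^{(\mathrm{PE})}\,\omega_{ij},
\qquad
v_\lambda^{\mathrm{bl}}
\;\asymp\;
\frac{a^4 R_x^4\,\overline S^{\,2}\,\overline P^{\,2}}{n\,m^2}
\sum_{i\ne j}\tau_{x,ij}^{\mathrm{bl}}\,\omega_{ij}.
\]

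Next I would bridge the remaining gap in one of two ways. The direct route invokes Proposition~\ref{prop:C-mono}: when the bandwidth is large enough that Lemma~\ref{lem:pe} drives each off-diagonal $\tau_{x,ij}$ toward its $O(1/d)$ limit, one has the pointwise domination $\tau_{x,ij}^{(\mathrm{PE})}\le\tau_{x,ij}^{\mathrm{bl}}$ for all $i\ne j$; since $\omega_{ij}\ge 0$, termwise comparison gives $\sum_{i\ne j}\tau_{x,ij}^{(\mathrm{PE})}\omega_{ij}\le\sum_{i\ne j}\tau_{x,ij}^{\mathrm{bl}}\omega_{ij}$, and because $\mu_\lambda$ depends only on the diagonal (hence is unchanged), $v_\lambda^{(\mathrm{PE})}\le v_\lambda^{\mathrm{bl}}$, strict whenever the input inequality is strict somewhere. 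The softer route, matching the aggregate form of Lemma~\ref{lem:pe} literally, performs the same mean-field decoupling already used in Appendix~A when passing from $(\tau^{\mathrm{bl}}_{s,ij})^2$ to $\tau^{\mathrm{bl}}_{s,ij}\,\overline{\tau}_s$: since $\omega_{ij}$ is bounded and built from gate and modulation patterns rather than from the encoded coordinates, it is approximately uncorrelated with $\tau_{x,ij}$ across pairs, so $\sum_{i\ne j}\tau_{x,ij}\omega_{ij}\approx\overline{\omega}\sum_{i\ne j}\tau_{x,ij}$ with a common $\overline{\omega}\in[0,1]$; plugging in $\sum_{i\ne j}\tau_{x,ij}^{(\mathrm{PE})}<\sum_{i\ne j}\tau_{x,ij}^{\mathrm{bl}}$ from Lemma~\ref{lem:pe} (equivalently Corollary~\ref{cor:pe}) and multiplying by the common nonnegative constant $\overline{\omega}\,a^4R_x^4\overline S^{\,2}\overline P^{\,2}/(nm^2)$ yields $v_\lambda^{(\mathrm{PE})}<v_\lambda^{\mathrm{bl}}$.

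The main obstacle is exactly this last bridge: Lemma~\ref{lem:pe} controls an \emph{average} of $\tau_{x,ij}$ over off-diagonal pairs (in expectation over $\mathbf B$), whereas $v_\lambda$ weights those similarities by $\omega_{ij}$, so one needs either the pointwise-domination strengthening or the decoupling heuristic to pass between the two; accordingly the conclusion holds in the same mean-field, up-to-constants sense as the rest of the variance analysis rather than as an exact inequality. The prefactor match and the monotonicity in the $\tau_x$-family are immediate from results already established.
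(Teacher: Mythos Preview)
Your approach matches the paper's: both instantiate the four-factor formula~\eqref{eq:C-v-four-factor}, treat the hidden/modulation product $\omega_{ij}=\tau_{s,ij}\tau_{p,ij}\tau_{q,ij}$ as common to the two models by hypothesis, and read off the variance comparison from the $\tau_x$ inequality. Your explicit flagging of the aggregate-sum versus weighted-sum gap (and the two proposed bridges) is in fact more careful than the paper's own argument, which simply asserts that ``the strict decrease in the off-diagonal mass of $\tau_{x,ij}$'' implies $v_\lambda^{(\mathrm{PE})}<v_\lambda^{\mathrm{bl}}$ without isolating that step.
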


\begin{proof}
From the four-factor formula~\eqref{eq:C-v-four-factor},
\[
v_\lambda
\;\approx\;
\frac{a^4 R_x^4\,\overline S^{\,2}\,\overline P^{\,2}}{n\,m^2}
\sum_{i\ne j}
\tau_{x,ij}\,\tau_{s,ij}\,\tau_{p,ij}\,\tau_{q,ij}.
\]
By assumption, the scalar prefactor and the product $\tau_{s,ij}\tau_{p,ij}\tau_{q,ij}$ are (up to constants) the same for both
models on each off-diagonal pair.  Thus the only systematic change in
$v_\lambda$ comes from replacing$\{\tau_{x,ij}^{\mathrm{bl}}\}_{i\ne j}$ by
$\{\tau_{x,ij}^{(\mathrm{PE})}\}_{i\ne j}$ in a sum of nonnegative terms.The strict decrease in the off-diagonal mass of $\tau_{x,ij}$ therefore implies $v_\lambda^{(\mathrm{PE})}<v_\lambda^{\mathrm{bl}}$.
\end{proof}

\noindent
\emph{Interpretation.}
Positional encoding contracts off-diagonal input similarities
$\tau_{x,ij}$; via~\eqref{eq:C-v-four-factor} this directly shrinks
$v_\lambda$ and hence reduces spectral dispersion and spectral bias.

\paragraph{Spherical and TopK-SP normalization.}

\begin{corollary}[Spherical and TopK-SP normalization shrink the energy factor]\label{cor:C-sn}
Let $v_\lambda^{(\mathrm{bl})}$ be the variance of the unnormalized baseline model, and $v_\lambda^{(\mathrm{SP})}$,$v_\lambda^{(\mathrm{TopK})}$ the variances under spherical normalization and TopK-SP, respectively. Assume that the baseline scaling is given by
\[
  v_\lambda^{(\mathrm{bl})}
  \;\approx\;
  \frac{a^4 R_x^4\,\overline S_{\mathrm{bl}}^{\,2}}{n\,m^2}
  \sum_{i\ne j}\tau_{x,ij}\,\tau_{s,ij}^{(\mathrm{bl})};
\]
and spherical normalization enforces $\|\mathbf s_i\|_2^2\equiv 1$,so that the corresponding energy factor satisfies $\overline S\approx 1\ll\overline S_{\mathrm{bl}}$ when many neurons are active;,and in the TopK-SP variant, Appendix~D shows that the effective energy-weighted hidden similarity $\tau_{s,ij}\,\overline S^{\,2}$ is further reduced in expectation for $i\ne j$.
Then, whenever $\overline S_{\mathrm{bl}}\gg 1$ at initialization,
\[
\mathbb E\big[v_\lambda^{(\mathrm{TopK})}\big]
<
\mathbb E\big[v_\lambda^{(\mathrm{SP})}\big]
\ll
\mathbb E\big[v_\lambda^{(\mathrm{bl})}\big].
\]
\end{corollary}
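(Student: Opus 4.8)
The plan is to argue entirely at the level of the four‑factor variance proxy~\eqref{eq:C-v-four-factor} (equivalently~\eqref{eq:v-compact-four}), tracking separately how each normalization scheme affects (a) the scalar energy prefactor $\propto\overline{S}^{\,2}$ and (b) the off‑diagonal similarity mass $\sum_{i\ne j}\tau_{x,ij}\tau_{s,ij}\tau_{p,ij}\tau_{q,ij}$. Since the input data and the modulation pattern $\mathbf p$ are common to all three models, the factors $\tau_{x,ij}$, $\tau_{p,ij}$, $\tau_{q,ij}$ (and $\overline{P}$) agree across them—reducing to $1$ wherever modulation is absent—so only the pair $(\overline{S},\tau_{s,ij})$ changes, and the comparison splits cleanly into the two inequalities in the claim.

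For the $\ll$ comparison I would use the stated hypotheses: the baseline hidden energy is $\overline{S}_{\mathrm{bl}}\approx m/2$ (Theorem~\ref{thm:baseline-MLP-varianve}), whereas spherical normalization forces $\|\mathbf s_i\|_2^2\equiv1$, hence $\overline{S}\approx1$. Under the comparability assumption already invoked in Corollary~\ref{cor:sn}—that the off‑diagonal hidden‑similarity mass $\sum_{i\ne j}\tau_{s,ij}$ stays of the same order after normalization—the two proxies differ essentially only through their prefactors:
\[
\frac{v_\lambda^{(\mathrm{SP})}}{v_\lambda^{(\mathrm{bl})}}
\;\asymp\;
\frac{\overline{S}^{\,2}}{\overline{S}_{\mathrm{bl}}^{\,2}}
\;=\;O\big(m^{-2}\big),
\]
which is $\ll1$ precisely when $\overline{S}_{\mathrm{bl}}\gg1$ at initialization, giving $\mathbb E[v_\lambda^{(\mathrm{SP})}]\ll\mathbb E[v_\lambda^{(\mathrm{bl})}]$.

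For the strict inequality $v_\lambda^{(\mathrm{TopK})}<v_\lambda^{(\mathrm{SP})}$, I would absorb the energy scale into an energy‑weighted hidden similarity $\mathcal{M}_{ij}\triangleq\overline{S}^{\,2}\,\tau_{s,ij}$, so that~\eqref{eq:C-v-four-factor} reads
\[
v_\lambda\;\approx\;\frac{a^4 R_x^4\,\overline{P}^{\,2}}{n\,m^2}\sum_{i\ne j}\tau_{x,ij}\,\mathcal{M}_{ij}\,\tau_{p,ij}\,\tau_{q,ij},
\]
with a prefactor common to SP and TopK‑SP. Appendix~\ref{sec:appD} shows that masking to the top‑$K$ coordinates before $\ell_2$‑normalization contracts this quantity in expectation, $\mathbb E[\mathcal{M}_{ij}^{(\mathrm{TopK})}]\le\mathbb E[\mathcal{M}_{ij}^{(\mathrm{SP})}]$ for every $i\ne j$, with strict inequality on a positive fraction of pairs for a non‑degenerate grid. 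Multiplying through by the common nonnegative weights $\tau_{x,ij}\tau_{p,ij}\tau_{q,ij}$, summing over the finitely many off‑diagonal pairs, using linearity of expectation, and invoking the monotonicity of the proxy in the hidden‑similarity variable (Proposition~\ref{prop:C-mono}, with $\mathcal{M}_{ij}$ in the role of $\tau_{s,ij}$) yields $\mathbb E[v_\lambda^{(\mathrm{TopK})}]<\mathbb E[v_\lambda^{(\mathrm{SP})}]$; chaining the two comparisons gives the three‑term inequality.

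The main obstacle is the step asserting that normalization does not \emph{inflate} the hidden‑similarity mass enough to offset the collapse of the energy prefactor: rescaling gate vectors to unit norm can raise the normalized overlaps $\tau_{s,ij}$, and Top‑$K$ masking concentrates mass on fewer coordinates, so the sign of the net change in $\overline{S}^{\,2}\tau_{s,ij}$ is not a priori obvious. This is exactly what the comparability hypothesis absorbs for the SP‑vs‑baseline step and what the contraction estimate of Appendix~\ref{sec:appD} supplies for the TopK‑vs‑SP step—both are where the real work lies. A secondary caveat, to be stated explicitly, is that the argument lives at the level of the variance proxy under the diagonal‑regularity assumptions of Section~\ref{sec:C-Variance-Formula} and is phrased in expectation over the random gates and Top‑$K$ masks, so the conclusion is an expected‑proxy inequality rather than a pathwise bound—consistent with the granularity at which~\eqref{eq:C-v-four-factor} itself is derived.
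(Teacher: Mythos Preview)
Your proposal is correct and follows essentially the same route as the paper's proof: compare the energy prefactors $\overline S^{\,2}$ versus $\overline S_{\mathrm{bl}}^{\,2}$ for the SP--baseline step (with the off-diagonal similarity mass assumed of comparable order), then invoke the Appendix~\ref{sec:appD} contraction of the energy-weighted quantity $\mathcal M_{ij}=\overline S^{\,2}\tau_{s,ij}$ termwise and sum against the nonnegative weights $\tau_{x,ij}\tau_{p,ij}\tau_{q,ij}$ for the TopK--SP step. Your explicit flagging of the ``comparability'' hypothesis as the place where the real work hides, and of the proxy-level/expectation caveat, is more careful than the paper's own write-up but does not change the underlying argument.
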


\begin{proof}
In the baseline, $v_\lambda^{(\mathrm{bl})}$ scales with$\overline S_{\mathrm{bl}}^{\,2}$ as in Proposition~\ref{prop:bl-spectrum}.Under spherical normalization, \eqref{eq:C-v-four-factor} shows that the variance scales with an energy factor of order $\overline S^{\,2}\approx 1$,while the off-diagonal similarity sum remains of comparable order.
Thus, for $\overline S_{\mathrm{bl}}\gg 1$,
\[
\mathbb E\big[v_\lambda^{(\mathrm{SP})}\big]
\ll
\mathbb E\big[v_\lambda^{(\mathrm{bl})}\big].
\]
In the TopK-SP case, the same formula applies with the TopK-induced energy and similarities.  The assumption that $\tau_{s,ij}\,\overline S^{\,2}$ is strictly smaller in expectation for each $i\ne j$ implies that every off-diagonal summand in the variance sum is reduced in expectation, while all other factors are nonnegative.
Summing over $i\ne j$ yields $\mathbb E[v_\lambda^{(\mathrm{TopK})}]<\mathbb E[v_\lambda^{(\mathrm{SP})}]$. Combining the two inequalities gives the claim.
\end{proof}

\noindent
\emph{Interpretation.}
The unnormalized baseline carries a large factor $\overline S_{\mathrm{bl}}^{\,2}$ in $v_\lambda$.Spherical normalization removes this amplification by enforcing unit-norm hidden vectors, and TopK-SP further shrinks the energy-weighted hidden similarity.  This progressively tightens the NTK spectrum and attenuates spectral bias.

\paragraph{Hadamard modulation.}

\begin{corollary}[Hadamard modulation reduces eigenvalue variance]\label{cor:C-mod}
Consider two models with the same encoded inputs and normalized hidden
vectors, but different modulation patterns.In the baseline model, the modulation is trivial so that$\tau_{p,ij}^{(\mathrm{bl})}=\tau_{q,ij}^{(\mathrm{bl})}=1$
for all $i,j$. In the Hadamard-enhanced model, the same $\tau_{x,ij}$ and $\tau_{s,ij}$ are used, but the modulation induces
$\tau_{p,ij} $ and
$\tau_{q,ij} $ for $i\ne j$.
Assume that there exists a constant $C_P\in(0,1)$ such that
\[
\tau_{p,ij} \,\tau_{q,ij} 
\le C_P
\qquad\text{for all }i\ne j,
\]
and that the prefactor in~\eqref{eq:C-v-four-factor} is comparable in both
models.  Then
\[
v_\lambda 
\;\le\;
C_P\,v_\lambda^{(\mathrm{bl})},
\]
with strict inequality if the above bound is strict on a nonempty set of off-diagonal pairs.
\end{corollary}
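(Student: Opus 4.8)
The plan is to reduce the statement to a termwise comparison in the four-factor variance formula \eqref{eq:C-v-four-factor}, in the same spirit as the proofs of Corollaries~\ref{cor:C-pe} and~\ref{cor:C-sn}. First I would write \eqref{eq:C-v-four-factor} for both models. Since the two models share the encoded inputs and the normalized hidden vectors, they share $\{\tau_{x,ij}\}$ and $\{\tau_{s,ij}\}$; substituting the trivial modulation $\tau_{p,ij}^{(\mathrm{bl})}=\tau_{q,ij}^{(\mathrm{bl})}=1$ into \eqref{eq:C-v-four-factor} collapses the baseline variance to
\[
v_\lambda^{(\mathrm{bl})}
\;\approx\;
\frac{a^4 R_x^4\,\overline S^{\,2}\,\overline P^{\,2}}{n\,m^2}
\sum_{i\ne j}\tau_{x,ij}\,\tau_{s,ij},
\]
while the Hadamard-enhanced model retains all four factors, with (by hypothesis) the same prefactor up to an $O(1)$ constant.

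Second, I would apply the pointwise bound. For every off-diagonal pair $(i,j)$ one has $\tau_{x,ij},\tau_{s,ij}\in[0,1]$ nonnegative and, by assumption, $\tau_{p,ij}\tau_{q,ij}\le C_P$, hence
\[
\tau_{x,ij}\,\tau_{s,ij}\,\tau_{p,ij}\,\tau_{q,ij}
\;\le\;
C_P\,\tau_{x,ij}\,\tau_{s,ij}.
\]
Summing over all $i\ne j$ and multiplying by the common nonnegative prefactor in \eqref{eq:C-v-four-factor} yields $v_\lambda\le C_P\,v_\lambda^{(\mathrm{bl})}$. This is simply the monotonicity of $v_\lambda$ in the $\tau_p,\tau_q$ families established in Proposition~\ref{prop:C-mono}, now applied simultaneously to all off-diagonal pairs together with the quantitative input $\tau_{p,ij}\tau_{q,ij}\le C_P<1$.

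Third, for the strict case I would fix a nonempty set $\mathcal S$ of off-diagonal pairs on which $\tau_{p,ij}\tau_{q,ij}<C_P$. On each such pair the displayed inequality is strict provided the weight $\tau_{x,ij}\tau_{s,ij}$ is positive; on a non-degenerate grid with a generic ReLU initialization, the encoded inner products and the jointly-active gate counts are nonzero almost surely, so $\tau_{x,ij}\tau_{s,ij}>0$ for every $i\ne j$, and at least one summand is strictly smaller. Hence $v_\lambda<C_P\,v_\lambda^{(\mathrm{bl})}$ with probability one.

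The inequality itself is elementary; the only point that needs care is the ``comparable prefactor'' hypothesis. In the baseline the modulation is $\mathbf p_i=\mathbf 1$ with $\|\mathbf p_i\|_2^2=m$, whereas the Hadamard model uses the $\tanh$-scaled pattern with $\|\mathbf p_i\|_2^2\approx\overline P$, so the factor $\overline P^{\,2}$ in \eqref{eq:C-v-four-factor} is not literally identical across the two models. The $\tanh$-scaling is precisely what makes these prefactors agree up to an $O(1)$ constant (see the remark following Theorem~\ref{thm:ntk-v-exact}); under that normalization the stated bound holds as written, and otherwise the $O(1)$ discrepancy is absorbed into $C_P$. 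All of the diagonal-regularity approximations underlying \eqref{eq:C-v-four-factor} are inherited from its derivation in Appendix~\ref{sec:C-Variance-Formula} and require no additional argument.
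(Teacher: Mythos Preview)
Your proposal is correct and essentially identical to the paper's own proof: both instantiate the four-factor formula \eqref{eq:C-v-four-factor} for the two models, collapse the baseline to $\sum_{i\ne j}\tau_{x,ij}\tau_{s,ij}$ using $\tau_{p,ij}^{(\mathrm{bl})}=\tau_{q,ij}^{(\mathrm{bl})}=1$, apply the termwise bound $\tau_{p,ij}\tau_{q,ij}\le C_P$ to each nonnegative off-diagonal summand, and multiply by the common prefactor. Your additional remarks on the strict-inequality case and on the ``comparable prefactor'' hypothesis go slightly beyond what the paper spells out, but they are sound and do not change the approach.
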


\begin{proof}
For the baseline model, \eqref{eq:C-v-four-factor} becomes
\[
v_\lambda^{(\mathrm{bl})}
\;\approx\;
\frac{a^4 R_x^4\,\overline S^{\,2}\,\overline P^{\,2}}{n\,m^2}
\sum_{i\ne j}\tau_{x,ij}\,\tau_{s,ij},
\]
since $\tau_{p,ij}=\tau_{q,ij}=1$.
For the Hadamard-enhanced model,
\[
v_\lambda 
\;\approx\;
\frac{a^4 R_x^4\,\overline S^{\,2}\,\overline P^{\,2}}{n\,m^2}
\sum_{i\ne j}\tau_{x,ij}\,\tau_{s,ij}\,
\tau_{p,ij} \tau_{q,ij} ,
\]
where we used $\tau_{x,ij}$ and $\tau_{s,ij}$ being shared. Each off-diagonal summand in the Hadamard case is bounded by $C_P$ times the corresponding baseline summand, and all factors are nonnegative, so
\[
\sum_{i\ne j}\tau_{x,ij}\tau_{s,ij}
\tau_{p,ij} \tau_{q,ij} 
\le
C_P\sum_{i\ne j}\tau_{x,ij}\tau_{s,ij}.
\]
Multiplying by the common prefactor yields the claim.
\end{proof}

\noindent
\emph{Interpretation.}
The Hadamard branch injects an additional similarity structure over hidden channels.  When its off-diagonal alignment is weaker than that of the trivial modulation, the factors $\tau_{p,ij}$ and $\tau_{q,ij}$ shrink each off-diagonal contribution in~\eqref{eq:C-v-four-factor}, thereby clustering the NTK eigenvalues and weakening spectral bias.Corollary~\ref{cor:C-mod} is stated at the level of the compact four-factor proxy\eqref{eq:C-v-four-factor}, whose baseline reference term is linear in the hidden similarity $\tau_{s,ij}$.By contrast, in the unnormalized baseline derivation of Appendix~A the off-diagonal contribution to the second spectral moment naturally carries the squared overlap factor $(\tau_{s,ij}^{\mathrm{bl}})^2$ (cf.\ \eqref{eq:A-Hij-square}--\eqref{eq:A-trH2-exact}).The commonly used replacement $(\tau_{s,ij}^{\mathrm{bl}})^2\approx
\tau_{s,ij}^{\mathrm{bl}}\cdot \overline{\tau}_s$ is therefore best viewed as a mean-field simplification: it sharpens the interpretation of which similarities drive the variance, but it is not an identity and should not be taken as a pointwise comparison against the squared-baseline term.

To articulate the stricter comparison, we keep track of the corresponding off-diagonal similarity masses.  For the Hadamard model, \eqref{eq:C-v-four-factor} identifies
\[
\mathcal M_{\mathrm{ha}}
\;\triangleq\;
\sum_{i\neq j}\tau_{x,ij}\,\tau_{s,ij}\,\tau_{p,ij}\,\tau_{q,ij},
\]
while Appendix~A suggests the unlinearized baseline reference
\[
\mathcal M_{\mathrm{bl}}^{(2)}
\;\triangleq\;
\sum_{i\neq j}\tau_{x,ij}^{\mathrm{bl}}\,(\tau_{s,ij}^{\mathrm{bl}})^2.
\]
Under the shared-input setting (so that $\tau_{x,ij}\approx\tau_{x,ij}^{\mathrm{bl}}$) and comparable diagonal prefactors in the two variance proxies, the implication is direct: any uniform domination of each Hadamard off-diagonal summand by its squared-baseline counterpart yields a reduction of the dominant off-diagonal variance contribution, and hence a tighter spectrum in the sense of the proxy $v_\lambda$.

A sufficient condition that makes this domination explicit is the following pairwise bound:
there exists $C_\Pi\in(0,1)$ such that for all $i\neq j$,
\begin{equation}\label{eq:C-mod-square-dominate}
\tau_{s,ij}\,\tau_{p,ij}\,\tau_{q,ij}
\;\le\;
C_\Pi\,(\tau_{s,ij}^{\mathrm{bl}})^2.
\tag{C.32}
\end{equation}
Multiplying \eqref{eq:C-mod-square-dominate} by $\tau_{x,ij}\ge 0$ and summing over$i\neq j$ gives $\mathcal M_{\mathrm{ha}}\le C_\Pi\,\mathcal M_{\mathrm{bl}}^{(2)}$, with strict inequality whenever \eqref{eq:C-mod-square-dominate} is strict on a nonempty set of off-diagonal pairs.
With comparable prefactors, this yields a strict reduction of the off-diagonal variance mass,and therefore a strict improvement of the spectral-dispersion proxy $v_\lambda$.

It remains to justify why \eqref{eq:C-mod-square-dominate} is reasonable in the regimes considered in this work.  First, at random ReLU initialization the baseline gate-overlap similarity typically satisfies $\tau_{s,ij}^{\mathrm{bl}}=\Theta(m^{-1})$ for $i\neq j$, hence $(\tau_{s,ij}^{\mathrm{bl}})^2=\Theta(m^{-2})$.  Consequently, achieving \eqref{eq:C-mod-square-dominate} reduces to ensuring that the Hadamard product $\tau_{s,ij}\tau_{p,ij}\tau_{q,ij}$ is also of order $m^{-2}$ (or smaller) on off-diagonal pairs.  Second, the normalization mechanisms studied in the paper (spherical normalization and its TopK refinement) are introduced precisely to suppress off-diagonal overlap, and empirically yield $\tau_{s,ij}\lesssim \tau_{s,ij}^{\mathrm{bl}}$.  Third, once the modulation pattern is phase-mixed (or otherwise channel-incoherent) relative to the hidden pair-structure, the alignment term $\kappa_{ij}^2=\tau_{q,ij}$ becomes small for $i\neq j$: under mild mixing/independence across channels, standard concentration bounds imply $\tau_{q,ij}=O(m^{-1})$ uniformly over off-diagonal pairs (up to logarithmic factors in $n$). When, in addition, the modulation overlap itself satisfies $\tau_{p,ij}=O(m^{-1})$ for $i\neq j$ (as is typical when $\mathbf p_i\odot\mathbf p_j$ oscillates across channels), their product obeys $\tau_{p,ij}\tau_{q,ij}=O(m^{-2})$, and thus  $\tau_{s,ij}\tau_{p,ij}\tau_{q,ij}\lesssim (\tau_{s,ij}^{\mathrm{bl}})^2$ for sufficiently large width.

Taken together, these observations explain how the informal statement $\tau_{p,ij}\tau_{q,ij}$ is small off-diagonal can be upgraded into a domination at the correct squared-baseline scale \eqref{eq:C-mod-square-dominate}.  In turn, this closes the logical loop behind Corollary~\ref{cor:C-mod}: beyond the interpretability of the linearized proxy, one can ensure a reduction of the dominant off-diagonal variance contribution at the unlinearized baseline level, and therefore justify that Hadamard modulation tightens the NTK spectrum and weakens spectral bias in the sense captured by $v_\lambda$.

\section{Top-K Sparse Spherical Normalization and Energy-Weighted Hidden Similarity}
\label{sec:appD}

This section makes precise how Top-$K$ sparse spherical normalization (TopK-SP) modifies the hidden similarity factor that appears in our NTK variance proxy. We first formalize the notion of energy-weighted hidden similarity, then compare three normalization schemes: full spherical normalization (SP), a uniform-$k$ sparsification baseline, and deterministic TopK-SP. The main message is that, under mild sub-Gaussian assumptions, TopK-SP preserves the $1/m$ scaling of hidden similarity but induces a strict shrinkage in the energy-weighted overlap, which in turn suppresses NTK spectral variance.

\subsection{Setup and Notation}

For sample $i\in[n]$, let the pre-normalization hidden vector be
\[
\mathbf y_i=(y_{i,1},\ldots,y_{i,m})^\top\in\mathbb R^m.
\]
For any (possibly normalized) features $\mathbf s_i,\mathbf s_j\in\mathbb R^m$, we define the hidden similarity as
\begin{equation}\label{eq:D-tau}
\tau_{s,ij}
\ \triangleq\
\frac{\|\mathbf s_i\odot \mathbf s_j\|_2^2}{\|\mathbf s_i\|_2^2\,\|\mathbf s_j\|_2^2}
=\sum_{r=1}^m \frac{s_{i,r}^2\,s_{j,r}^2}{\|\mathbf s_i\|_2^2\,\|\mathbf s_j\|_2^2}
\in[0,1].
\tag{D.1}
\end{equation}
When $\mathbf s_i$ and $\mathbf s_j$ are individually $\ell_2$-normalized, the denominator equals $1$, and $\tau_{s,ij}$ reduces to a simple overlap of squared coordinates.

We compare two concrete normalization schemes, both applied to the same pre-normalization vectors~$\mathbf y_i$.

\noindent\emph{Spherical normalization (SP):}
\begin{equation}\label{eq:D-SP}
\mathbf s_i^{\mathrm{sp}}=\frac{\mathbf y_i}{\|\mathbf y_i\|_2},
\qquad
\tau^{\mathrm{sp}}_{s,ij}
=\frac{\sum_{r=1}^m y_{i,r}^2 y_{j,r}^2}{\big(\sum_{r=1}^m y_{i,r}^2\big)\big(\sum_{r=1}^m y_{j,r}^2\big)}.
\tag{D.2}
\end{equation}
Here $\|\mathbf s_i^{\mathrm{sp}}\|_2=\|\mathbf s_j^{\mathrm{sp}}\|_2=1$, so \eqref{eq:D-tau} indeed reduces to the ratio in~\eqref{eq:D-SP}.

\noindent\emph{Top-$K$ sparse spherical normalization (TopK-SP):}
Let $K_i=\operatorname{TopK}(\{|y_{i,r}|\}_{r=1}^m)$ with $|K_i|=k\ll m$ and indicator vector $(\mathbf 1_{K_i})_r=\mathbf 1_{\{r\in K_i\}}$. Define
\begin{equation}\label{eq:D-TopK}
\mathbf s_i^{\mathrm{tk}}=\frac{\mathbf y_i\odot \mathbf 1_{K_i}}{\|\mathbf y_i\odot \mathbf 1_{K_i}\|_2},
\qquad
\tau^{\mathrm{tk}}_{s,ij}=\sum_{r=1}^m (s^{\mathrm{tk}}_{i,r})^2 (s^{\mathrm{tk}}_{j,r})^2,
\tag{D.3}
\end{equation}
where we have used that $\|\mathbf s_i^{\mathrm{tk}}\|_2=\|\mathbf s_j^{\mathrm{tk}}\|_2=1$ so that the denominator in \eqref{eq:D-tau} is again equal to $1$.

We work under the following mild conditions, used only to justify law-of-large-numbers and concentration steps.

\begin{itemize}
\item[(A1)] \textbf{(Channel i.i.d.\ and integrability)} For fixed $i$, the coordinates $\{y_{i,r}\}_{r=1}^m$ are i.i.d.\ with
\[
\mathbb E[y_{i,r}]=\mu_i,\qquad
\mathrm{Var}(y_{i,r})=\sigma_i^2\in(0,\infty),\qquad
\mathbb E[y_{i,r}^4]<\infty,
\]
and absolutely continuous. Moreover $y_{i,r}-\mu_i$ is sub-Gaussian with parameter $K_{\mathrm{sg}}$.
\item[(A2)] \textbf{(Same-channel cross-sample correlation)} For $i\neq j$,
\[
\mathrm{Cov}(y_{i,r},y_{j,r})=c_{ij}\sigma_i\sigma_j
\quad\text{(independent of $r$)},\qquad
\mathrm{Cov}(y_{i,r},y_{j,s})=0\ \text{for }r\neq s,
\]
where $c_{ij}\in[-1,1]$ is a correlation coefficient.
\item[(A3)] \textbf{(Moderate SNR)} The signal-to-noise ratio is uniformly bounded: $\mu_i^2/\sigma_i^2\le \kappa$ for all $i$ and $m$.
\item[(A4)] \textbf{(Large width)} The hidden width satisfies $m\to\infty$.
\item[(A5)] \textbf{(Weak rank correlation under small $c_{ij}$)} As $c_{ij}\to 0$, the (Spearman) rank correlation between the sequences $\{|y_{i,r}|\}_{r=1}^m$ and $\{|y_{j,r}|\}_{r=1}^m$ vanishes (e.g., joint Gaussian or Gaussian-copula constructions). Intuitively, when samples $i$ and $j$ are weakly correlated, their Top-$K$ index sets behave as if they were almost independently drawn from the same marginal distribution.
\end{itemize}

To connect normalization to the NTK variance proxy, we consider an energy-weighted version of the hidden similarity. Let a mask $\mathsf M_{i,r}\equiv 1$ for SP and $\mathsf M_{i,r}=\mathbf 1_{\{r\in K_i\}}$ for TopK-SP. Define
\begin{equation}\label{eq:D-energy}
S_i\ \triangleq\ \frac{1}{m}\sum_{r=1}^m y_{i,r}^2\,\mathsf M_{i,r},
\qquad
\overline S\ \triangleq\ \mathbb E[S_i],
\qquad
\mathcal M_{ij}\ \triangleq\ \tau_{s,ij}\,\overline S^{\,2}.
\tag{D.4}
\end{equation}
The quantity $\mathcal M_{ij}$ matches, up to architecture-specific constants, the hidden ``energy $\times$ similarity factor that appears in the NTK variance proxy in Appendix~A and Appendix~C.

\subsection{SP Baseline: Mean Scale and Energy}

We first characterize the baseline SP similarity and energy. Let
\[
X_r=y_{i,r},\quad Y_r=y_{j,r},\quad Z_r=X_r^2Y_r^2,
\]
and define
\begin{equation}\label{eq:D-SiSjN}
S_i^\Sigma=\sum_{r=1}^m X_r^2,\qquad
S_j^\Sigma=\sum_{r=1}^m Y_r^2,\qquad
N=\sum_{r=1}^m Z_r.
\tag{D.5}
\end{equation}
Then \eqref{eq:D-SP} can be written as
\[
\tau^{\mathrm{sp}}_{s,ij}
=\frac{N}{S_i^\Sigma S_j^\Sigma}.
\]

\begin{lemma}\label{lem:D-LLN}
Under \textnormal{(A1)--(A4)},
\[
\begin{gathered}
\frac{S_i^\Sigma}{m}\xrightarrow{\mathrm{a.s.}}\mathbb E[X^2]=\mu_i^2+\sigma_i^2,\\[2pt]
\frac{S_j^\Sigma}{m}\xrightarrow{\mathrm{a.s.}}\mathbb E[Y^2]=\mu_j^2+\sigma_j^2,\\[2pt]
\frac{N}{m}\xrightarrow{\mathrm{a.s.}}\mathbb E[Z]=\mathbb E[X^2Y^2].
\end{gathered}
\]
Consequently,
\begin{equation}\label{eq:D-Asp}
\mathbb E[\tau^{\mathrm{sp}}_{s,ij}]
\ =\ \frac{A_{\mathrm{sp}}}{m}+o\Big(\frac{1}{m}\Big),
\qquad
A_{\mathrm{sp}}:=\frac{\mathbb E[X^2Y^2]}{\mathbb E[X^2]\mathbb E[Y^2]}.
\tag{D.6}
\end{equation}
\end{lemma}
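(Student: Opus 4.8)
The plan is to establish the three almost-sure limits by the strong law of large numbers (SLLN), then promote the resulting a.s.\ convergence of the \emph{rescaled} similarity $m\,\tau^{\mathrm{sp}}_{s,ij}$ to convergence of its expectation by a uniform-integrability argument.

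First, fix $i$ and $j$. By (A1) the coordinates $\{y_{i,r}\}_{r=1}^m$ are i.i.d.\ with $\mathbb E[y_{i,r}^2]=\mu_i^2+\sigma_i^2<\infty$, so $\{X_r^2\}_r$ and $\{Y_r^2\}_r$ are i.i.d.\ and integrable; Kolmogorov's SLLN gives $S_i^\Sigma/m\to\mu_i^2+\sigma_i^2$ and $S_j^\Sigma/m\to\mu_j^2+\sigma_j^2$ a.s. For the cross term, the pairs $(X_r,Y_r)=(y_{i,r},y_{j,r})$ are i.i.d.\ across channels by (A1)--(A2), hence so are $Z_r=X_r^2Y_r^2$; by Cauchy--Schwarz and the finite fourth moments in (A1), $\mathbb E|Z_1|=\mathbb E[X^2Y^2]\le\sqrt{\mathbb E[X^4]\,\mathbb E[Y^4]}<\infty$, so SLLN again yields $N/m\to\mathbb E[X^2Y^2]$ a.s. Since the limiting denominators are strictly positive ($\sigma_i^2,\sigma_j^2>0$), the map $(a,b,c)\mapsto a/(bc)$ is continuous at the limit point, and the continuous mapping theorem gives
\[
m\,\tau^{\mathrm{sp}}_{s,ij}=\frac{N/m}{(S_i^\Sigma/m)(S_j^\Sigma/m)}\ \xrightarrow{\mathrm{a.s.}}\ \frac{\mathbb E[X^2Y^2]}{\mathbb E[X^2]\,\mathbb E[Y^2]}=A_{\mathrm{sp}}.
\]

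To conclude \eqref{eq:D-Asp} it suffices to show $\mathbb E[m\,\tau^{\mathrm{sp}}_{s,ij}]\to A_{\mathrm{sp}}$, for which I would prove that $\{m\,\tau^{\mathrm{sp}}_{s,ij}\}_m$ is uniformly integrable and combine this with the a.s.\ limit above. Split on the event $E_m=\{S_i^\Sigma/m\ge\tfrac12\mathbb E[X^2]\}\cap\{S_j^\Sigma/m\ge\tfrac12\mathbb E[Y^2]\}$. On $E_m$ one has $0\le m\,\tau^{\mathrm{sp}}_{s,ij}\le \tfrac{4}{\mathbb E[X^2]\mathbb E[Y^2]}\,(N/m)$; the averages $N/m$ of the nonnegative i.i.d.\ integrable $Z_r$ converge in $L^1$ by the $L^1$ SLLN, hence $\{N/m\}_m$ is uniformly integrable, and therefore so is the dominated family $\{m\,\tau^{\mathrm{sp}}_{s,ij}\mathbf 1_{E_m}\}_m$. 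On $E_m^c$ I use the crude bound $\tau^{\mathrm{sp}}_{s,ij}\le1$, so that $\mathbb E[m\,\tau^{\mathrm{sp}}_{s,ij}\mathbf 1_{E_m^c}]\le m\,\mathbb P(E_m^c)$; by the sub-Gaussian hypothesis in (A1) the variables $y_{i,r}^2$ are sub-exponential, and Bernstein's inequality (with the moderate-SNR bound (A3) controlling the constants) gives $\mathbb P(E_m^c)\le 4e^{-cm}$ for some $c>0$, whence $m\,\mathbb P(E_m^c)\to0$. Since $\mathbf 1_{E_m}\to1$ a.s.\ (Borel--Cantelli), $m\,\tau^{\mathrm{sp}}_{s,ij}\mathbf 1_{E_m}\to A_{\mathrm{sp}}$ a.s.\ and is uniformly integrable, so its expectation converges to $A_{\mathrm{sp}}$; adding the vanishing tail term $\mathbb E[m\,\tau^{\mathrm{sp}}_{s,ij}\mathbf 1_{E_m^c}]$ yields $\mathbb E[m\,\tau^{\mathrm{sp}}_{s,ij}]\to A_{\mathrm{sp}}$, i.e.\ \eqref{eq:D-Asp}.

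The SLLN steps are routine. The main obstacle is precisely the passage from almost-sure to $L^1$ convergence of the rescaled quantity $m\,\tau^{\mathrm{sp}}_{s,ij}$: although $\tau^{\mathrm{sp}}_{s,ij}\in[0,1]$, the factor $m$ makes $m\,\tau^{\mathrm{sp}}_{s,ij}$ as large as $m$ on the (rare) events where the hidden energies $S_i^\Sigma$ nearly degenerate, so controlling its expectation requires quantitative concentration of the normalizers $S_i^\Sigma/m$ away from zero — this is where the sub-Gaussian assumption in (A1) and (A3) are genuinely used — together with uniform integrability of i.i.d.\ partial-sum averages. A secondary technical point is to note explicitly that $(X_r,Y_r)$ are i.i.d.\ across channels under (A1)--(A2), so that SLLN legitimately applies to $N/m$.
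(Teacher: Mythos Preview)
Your proof is correct and follows essentially the same route as the paper: SLLN for the three sample averages, the continuous mapping theorem for $m\,\tau^{\mathrm{sp}}_{s,ij}\to A_{\mathrm{sp}}$ a.s., and then a uniform-integrability argument to pass to expectations. The paper's proof is terser, simply invoking ``a standard Delta-method expansion, together with uniform integrability guaranteed by $\mathbb E[X^4]<\infty$ and $\mathbb E[Y^4]<\infty$'', whereas you supply an explicit Bernstein-based truncation on the event $E_m$; your version is in fact more complete on precisely the step the paper glosses over.
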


\begin{proof}
By (A1), $\{X_r\}_{r=1}^m$ and $\{Y_r\}_{r=1}^m$ are i.i.d.\ with finite fourth moment, and the pairs $(X_r,Y_r)$ are independent across $r$ by (A2). The strong law of large numbers gives
\[
\frac{S_i^\Sigma}{m}=\frac1m\sum_{r=1}^m X_r^2\xrightarrow{\mathrm{a.s.}}\mathbb E[X^2],
\quad
\frac{S_j^\Sigma}{m}\xrightarrow{\mathrm{a.s.}}\mathbb E[Y^2],
\quad
\frac{N}{m}=\frac1m\sum_{r=1}^m X_r^2Y_r^2\xrightarrow{\mathrm{a.s.}}\mathbb E[X^2Y^2].
\]
Consider the smooth map
\[
g(a,b,c)=\frac{c}{ab},\qquad a,b>0.
\]
By (A1) we have $\mathbb E[X^2]>0$ and $\mathbb E[Y^2]>0$, and the SNR bound (A3) prevents degeneracy. Thus $(S_i^\Sigma/m,S_j^\Sigma/m,N/m)$ converges a.s.\ to $(\mathbb E[X^2],\mathbb E[Y^2],\mathbb E[X^2Y^2])$ in the domain of $g$, and the continuous mapping theorem yields
\[
\tau^{\mathrm{sp}}_{s,ij}
=g\Big(\frac{S_i^\Sigma}{m},\frac{S_j^\Sigma}{m},\frac{N}{m}\Big)
\xrightarrow{\mathrm{a.s.}}
g\big(\mathbb E[X^2],\mathbb E[Y^2],\mathbb E[X^2Y^2]\big)
=\frac{\mathbb E[X^2Y^2]}{\mathbb E[X^2]\mathbb E[Y^2]}\cdot\frac{1}{m}
\]
after factoring out the common $1/m$ scale. A standard Delta-method expansion, together with uniform integrability guaranteed by $\mathbb E[X^4]<\infty$ and $\mathbb E[Y^4]<\infty$, then gives the first-order expansion in \eqref{eq:D-Asp}.
\end{proof}

For the energy factor, SP corresponds to $\mathsf M_{i,r}\equiv 1$ in \eqref{eq:D-energy}, and hence
\begin{lemma}\label{lem:D-Sbar-sp}
For SP,
\begin{equation}\label{eq:D-Sbar-sp}
\overline S_{\mathrm{sp}}
=\mathbb E\!\Big[\frac{1}{m}\sum_{r=1}^m y_{i,r}^2\Big]
=\mathbb E[X^2]
=\mu_i^2+\sigma_i^2
\ \eqqcolon\ \mu_{2,i}.
\tag{D.7}
\end{equation}
\end{lemma}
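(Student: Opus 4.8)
The statement is essentially a one-line moment computation, so the plan is to unwind the definitions and apply elementary identities. First I would specialize the energy definition~\eqref{eq:D-energy} to the SP scheme: by construction the SP mask is $\mathsf M_{i,r}\equiv 1$ for every channel $r$, so $S_i=\frac1m\sum_{r=1}^m y_{i,r}^2$ with no truncation. Taking expectations and using linearity gives $\overline S_{\mathrm{sp}}=\mathbb E[S_i]=\frac1m\sum_{r=1}^m \mathbb E[y_{i,r}^2]$, and assumption (A1) (channel i.i.d.\ with finite fourth moment, hence finite second moment) ensures each term is finite and independent of $r$, so the average collapses to the single value $\mathbb E[y_{i,r}^2]=\mathbb E[X^2]$.

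Next I would evaluate $\mathbb E[X^2]$ via the bias--variance decomposition $\mathbb E[y_{i,r}^2]=\mathrm{Var}(y_{i,r})+\bigl(\mathbb E[y_{i,r}]\bigr)^2$, and substitute the moments fixed in (A1), namely $\mathbb E[y_{i,r}]=\mu_i$ and $\mathrm{Var}(y_{i,r})=\sigma_i^2$. This yields $\mathbb E[X^2]=\mu_i^2+\sigma_i^2$, which I would record as $\mu_{2,i}$, matching~\eqref{eq:D-Sbar-sp}. Since no sparsification is involved, there is no randomness in the mask and no concentration argument is needed; the equality is exact, not asymptotic.

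There is no real obstacle here: the only point worth stating explicitly is that $\mathbb E[y_{i,r}^2]<\infty$ is guaranteed by the finite fourth-moment condition in (A1), so the interchange of expectation and the finite sum is trivially valid. The slightly more substantive content is the contrast with TopK-SP, where the mask is data-dependent and the analogous computation of $\overline S_{\mathrm{tk}}$ requires controlling the expected energy captured by the top-$k$ coordinates; but that is deferred to the subsequent TopK analysis, and Lemma~\ref{lem:D-Sbar-sp} itself is purely the unmasked baseline.
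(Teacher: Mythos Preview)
Your proposal is correct and matches the paper's treatment exactly: the paper simply notes that SP corresponds to $\mathsf M_{i,r}\equiv 1$ in \eqref{eq:D-energy} and states the lemma without further proof, since the result follows immediately from linearity, the i.i.d.\ assumption (A1), and the identity $\mathbb E[X^2]=\mathrm{Var}(X)+(\mathbb E[X])^2$. Your unpacking of these steps is accurate and complete.
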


Thus, under SP, both the similarity term $\mathbb E[\tau^{\mathrm{sp}}_{s,ij}]$ and the energy factor $\overline S_{\mathrm{sp}}$ admit clean closed forms in terms of the second and mixed fourth moments of $(X,Y)$.

\subsection{Uniform-$k$ Sparsification as a Reference Baseline}

Before turning to deterministic TopK-SP, it is helpful to introduce a simplified baseline in which sparsification is random and independent of the hidden values. This allows us to isolate the effect of reducing the support size from $m$ to $k$ without the order-statistic bias of TopK.

Independently of $(\mathbf y_i,\mathbf y_j)$, draw $K_i,K_j\subset\{1,\ldots,m\}$ uniformly at random without replacement with $|K_i|=|K_j|=k$, and define
\begin{equation}\label{eq:D-Uk}
\begin{gathered}
D_i=\sum_{u\in K_i}y_{i,u}^2,\qquad
D_j=\sum_{v\in K_j}y_{j,v}^2,\\[2pt]
\tau^{\mathrm{uk}}_{s,ij}
=\frac{\sum_{r\in K_i\cap K_j}y_{i,r}^2y_{j,r}^2}{D_iD_j}.
\end{gathered}
\tag{D.8}
\end{equation}

\begin{lemma}\label{lem:D-Uk-conds}
Conditioned on $(\mathbf y_i,\mathbf y_j)$,
\[
\mathbb E\!\Big[\sum_{r\in K_i\cap K_j}y_{i,r}^2y_{j,r}^2\ \Big|\ \mathbf y_i,\mathbf y_j\Big]
=\Big(\frac{k}{m}\Big)^2\sum_{r=1}^m X_r^2Y_r^2,
\]
and Serfling’s finite-population concentration inequality yields
\[
\begin{gathered}
D_i=\frac{k}{m}\sum_{r=1}^m X_r^2\big(1+O_{\mathbb P}(\varepsilon_{m,k})\big),\\[2pt]
D_j=\frac{k}{m}\sum_{r=1}^m Y_r^2\big(1+O_{\mathbb P}(\varepsilon_{m,k})\big),
\end{gathered}
\]
with $\varepsilon_{m,k}=C\sqrt{\frac{1-k/m}{k}}$ for some absolute constant $C>0$.
\end{lemma}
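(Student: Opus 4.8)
The plan is to condition throughout on the hidden vectors $(\mathbf y_i,\mathbf y_j)$ and use that $K_i$ and $K_j$ are independent uniform $k$-subsets drawn independently of the data. For the first identity, write the intersection sum as $\sum_{r=1}^m \mathbf 1_{\{r\in K_i\}}\mathbf 1_{\{r\in K_j\}}\,y_{i,r}^2 y_{j,r}^2$. Conditionally on $(\mathbf y_i,\mathbf y_j)$ the coefficients $y_{i,r}^2 y_{j,r}^2$ are fixed, so linearity of conditional expectation reduces the claim to $\mathbb E[\mathbf 1_{\{r\in K_i\}}\mathbf 1_{\{r\in K_j\}}]=(k/m)^2$ for every $r$. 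This follows from $K_i\perp K_j$ together with the fact that a uniform $k$-subset contains a given index with marginal probability $k/m$; pulling the constant out of the sum gives $\mathbb E\big[\sum_{r\in K_i\cap K_j}y_{i,r}^2y_{j,r}^2\mid \mathbf y_i,\mathbf y_j\big]=(k/m)^2\sum_{r=1}^m X_r^2Y_r^2$.

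For the second part, fix $i$, condition on $\mathbf y_i$, and note that $D_i=\sum_{u\in K_i}a_u$ with $a_r\triangleq y_{i,r}^2$ is the sample sum of a size-$k$ simple random sample without replacement from the finite population $\{a_r\}_{r=1}^m$. Its conditional mean is $\mathbb E[D_i\mid\mathbf y_i]=\tfrac{k}{m}\sum_{r=1}^m a_r=\tfrac{k}{m}S_i^\Sigma=k\bar a$ with $\bar a=\tfrac1m\sum_r a_r$, and the classical finite-population variance formula gives $\mathrm{Var}(D_i\mid\mathbf y_i)\asymp k\,(1-k/m)\,\widehat\sigma_a^2$, where $\widehat\sigma_a^2=\tfrac1m\sum_r(a_r-\bar a)^2$. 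Dividing by $(k\bar a)^2$, the relative fluctuation of $D_i$ has conditional standard deviation of order $\sqrt{(1-k/m)/k}\cdot(\widehat\sigma_a/\bar a)$. Under (A1) the strong law (using $\mathbb E[y^4]<\infty$) gives $\widehat\sigma_a/\bar a\to (\mathbb E[y^4]-(\mathbb E[y^2])^2)^{1/2}/\mathbb E[y^2]$, a finite constant, so Chebyshev applied conditionally and then integrated yields $D_i=\tfrac{k}{m}\sum_r X_r^2\,(1+O_{\mathbb P}(\varepsilon_{m,k}))$ with $\varepsilon_{m,k}\asymp\sqrt{(1-k/m)/k}$ (which tends to $0$ provided $k\to\infty$). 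Serfling's Hoeffding-type inequality for sampling without replacement then upgrades this to a sub-Gaussian tail at the same rate, and the identical computation with $\mathbf y_j$ in place of $\mathbf y_i$ gives the statement for $D_j$. Only (A1) and (A4) (plus $k\to\infty$) are needed; (A2)--(A5) play no role here.

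The one genuinely non-routine point is that Serfling's inequality in its textbook form requires the population values $a_r=y_{i,r}^2$ to lie in a bounded interval, whereas (A1) only provides sub-Gaussian $y_{i,r}$ and hence sub-exponential $a_r$. I would dispose of this by a standard truncation: set $t_m=C'K_{\mathrm{sg}}^2\log m$, so that by the sub-Gaussian maximal inequality all $m$ coordinates satisfy $a_r\le t_m$ with probability $1-o(1)$; on that event Serfling applies to the truncated population with range $O(\log m)$, the truncation bias is $o(1/m)$ times the population sum, and the extra $\log m$ is absorbed into the $O_{\mathbb P}$ constant. For the weaker $O_{\mathbb P}$ conclusion actually used downstream, the Chebyshev route via the exact finite-population variance already suffices and needs no truncation, so the Serfling refinement is optional polish rather than a bottleneck.
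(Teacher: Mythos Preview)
Your argument is correct and is exactly the natural way to fill this in; the paper in fact states this lemma without any proof, merely invoking Serfling's inequality by name and then using the conclusions in the subsequent proposition. Your observation that Serfling's textbook form needs bounded population values, and your truncation-at-$O(K_{\mathrm{sg}}^2\log m)$ fix (or, equivalently, the remark that Chebyshev via the exact finite-population variance already yields the $O_{\mathbb P}$ statement), is a genuine detail the paper glosses over, so your write-up is actually more careful than the paper on this point.
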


\begin{proposition}\label{prop:D-UkSP}
If $k\to\infty$ and $k=o(m)$, then
\begin{equation}\label{eq:D-UkSP-eq}
\mathbb E[\tau^{\mathrm{uk}}_{s,ij}]
=\mathbb E[\tau^{\mathrm{sp}}_{s,ij}]\,\big(1+O(\varepsilon_{m,k})\big),
\qquad \varepsilon_{m,k}=C\sqrt{\frac{1-k/m}{k}}.
\tag{D.9}
\end{equation}
\end{proposition}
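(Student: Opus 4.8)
The plan is to condition on the pair of hidden vectors $(\mathbf y_i,\mathbf y_j)$, so that the only remaining randomness in $\tau^{\mathrm{uk}}_{s,ij}$ comes from the two index sets $K_i,K_j$, and then to show that the conditional expectation equals $\tau^{\mathrm{sp}}_{s,ij}$ up to a multiplicative factor $1+O(\varepsilon_{m,k})$ with a constant that does not depend on the realization; the claimed identity then follows by taking expectations over the data. The leading order is forced by the exact conditional means in Lemma~\ref{lem:D-Uk-conds}: since $K_i$ and $K_j$ are drawn independently and uniformly, $\mathbb E\bigl[\sum_{r\in K_i\cap K_j}X_r^2Y_r^2\mid\mathbf y_i,\mathbf y_j\bigr]=(k/m)^2N$, while entrywise $\mathbb E[D_i\mid\mathbf y_i]=(k/m)S_i^\Sigma$ and $\mathbb E[D_j\mid\mathbf y_j]=(k/m)S_j^\Sigma$; the factor $(k/m)^2$ in the numerator and the factors $(k/m)\cdot(k/m)$ in the denominator cancel, leaving exactly $N/(S_i^\Sigma S_j^\Sigma)=\tau^{\mathrm{sp}}_{s,ij}$.

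First I would collect the fluctuation estimates. Writing $D_i=(k/m)S_i^\Sigma(1+\delta_i)$ and $D_j=(k/m)S_j^\Sigma(1+\delta_j)$, Serfling's inequality (as invoked in Lemma~\ref{lem:D-Uk-conds}) shows that $\delta_i,\delta_j$ are mean-zero given the data and satisfy both a super-polynomially small tail $\mathbb P(|\delta_i|>\tfrac12\mid\text{data})\le e^{-ck}$ (up to logarithmic corrections from the sub-Gaussian range in (A1)) and a second-moment bound $\mathbb E[\delta_i^2\mid\text{data}]=O(\varepsilon_{m,k}^2)$, with $\varepsilon_{m,k}=C\sqrt{(1-k/m)/k}$. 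In parallel I would bound the relative variance of the numerator $N_{ij}\triangleq\sum_{r\in K_i\cap K_j}X_r^2Y_r^2$: using the covariance structure of sampling without replacement together with the fourth-moment/sub-Gaussian control of (A1)--(A3) to tame the weights $X_r^2Y_r^2$, one obtains $\mathbb E[N_{ij}^2\mid\text{data}]=\bigl((k/m)^2N\bigr)^2\bigl(1+O(\varepsilon_{m,k}^2)\bigr)$, hence $\sqrt{\mathbb E[N_{ij}^2\mid\text{data}]}=(k/m)^2N\,(1+O(\varepsilon_{m,k}))$.

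Then I would decompose on the high-probability event $\mathcal E=\{|\delta_i|\le\tfrac12,\ |\delta_j|\le\tfrac12\}$. On $\mathcal E^c$ I use only $\tau^{\mathrm{uk}}_{s,ij}\in[0,1]$, so this event contributes at most $\mathbb P(\mathcal E^c\mid\text{data})=o(\varepsilon_{m,k})$. On $\mathcal E$ I write $\tau^{\mathrm{uk}}_{s,ij}=\dfrac{(m/k)^2}{S_i^\Sigma S_j^\Sigma}\cdot\dfrac{N_{ij}}{(1+\delta_i)(1+\delta_j)}$ and expand $\dfrac{1}{(1+\delta_i)(1+\delta_j)}=1+\theta$, where $|\theta|\le C(|\delta_i|+|\delta_j|)$ on $\mathcal E$. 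The contribution of the ``$1$'' reproduces $\tfrac{(m/k)^2}{S_i^\Sigma S_j^\Sigma}\,\mathbb E[N_{ij}\mathbf 1_{\mathcal E}\mid\text{data}]=\tau^{\mathrm{sp}}_{s,ij}+O\bigl(\tau^{\mathrm{sp}}_{s,ij}\,(m/k)^2 e^{-ck}\bigr)$, and the cross term is handled by Cauchy--Schwarz, $|\mathbb E[N_{ij}\,\theta\,\mathbf 1_{\mathcal E}\mid\text{data}]|\le C\sqrt{\mathbb E[N_{ij}^2\mid\text{data}]}\,\sqrt{\mathbb E[(|\delta_i|+|\delta_j|)^2\mid\text{data}]}=O\bigl((k/m)^2N\,\varepsilon_{m,k}\bigr)$, which after multiplication by $(m/k)^2/(S_i^\Sigma S_j^\Sigma)$ is exactly $O(\tau^{\mathrm{sp}}_{s,ij}\varepsilon_{m,k})$. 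Assembling the pieces gives $\mathbb E[\tau^{\mathrm{uk}}_{s,ij}\mid\text{data}]=\tau^{\mathrm{sp}}_{s,ij}\bigl(1+O(\varepsilon_{m,k})\bigr)$ with a uniform constant, and taking expectations over $(\mathbf y_i,\mathbf y_j)$ (restricting if needed to a high-probability event for the coordinates, guaranteed by (A1), on which the numerator's relative-variance constant is controlled, the complement contributing negligibly to both sides since $\tau^{\mathrm{uk}}_{s,ij},\tau^{\mathrm{sp}}_{s,ij}\le1$) yields $\mathbb E[\tau^{\mathrm{uk}}_{s,ij}]=\mathbb E[\tau^{\mathrm{sp}}_{s,ij}]\bigl(1+O(\varepsilon_{m,k})\bigr)$.

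The hard part is the ratio of correlated quantities: $N_{ij},D_i,D_j$ are all built from the same random index sets, so $\mathbb E[\tau^{\mathrm{uk}}_{s,ij}\mid\text{data}]$ is genuinely \emph{not} $\mathbb E[N_{ij}\mid\text{data}]/\bigl(\mathbb E[D_i\mid\text{data}]\,\mathbb E[D_j\mid\text{data}]\bigr)$, and the naive bound $N_{ij}\le N$ is too lossy for the cross term (it forfeits the $(k/m)^2$ scale and leaves a spurious factor $(m/k)^2$). The numerator's relative-variance estimate under sampling without replacement is therefore the essential — and most calculation-heavy — step, and it is precisely there that the fourth-moment/sub-Gaussian assumptions of (A1) are used to control the heavy-ish weights $X_r^2Y_r^2$.
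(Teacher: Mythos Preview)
Your proposal is correct and follows essentially the same conditioning-on-data strategy as the paper, which also invokes Lemma~\ref{lem:D-Uk-conds} to match the numerator's conditional mean to $(k/m)^2N$ and to concentrate $D_iD_j$ around $(k/m)^2S_i^\Sigma S_j^\Sigma$ via Serfling. Your treatment is in fact more careful than the paper's sketch, which simply writes $\tau^{\mathrm{uk}}_{s,ij}=\tau^{\mathrm{sp}}_{s,ij}\,(1+O_{\mathbb P}(\varepsilon_{m,k}))$ and then appeals to bounded fourth moments for the expectation step, without explicitly handling the correlation between $N_{ij}$ and $D_iD_j$ as you do via the event decomposition and Cauchy--Schwarz.
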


\begin{proof}
Condition on $(\mathbf y_i,\mathbf y_j)$. By Lemma~\ref{lem:D-Uk-conds}, the numerator is equal in expectation to $(k/m)^2 N$, while $D_iD_j$ concentrates around $(k/m)^2 S_i^\Sigma S_j^\Sigma$ with relative error $O_{\mathbb P}(\varepsilon_{m,k})$. Thus
\[
\tau^{\mathrm{uk}}_{s,ij}
=\frac{N}{S_i^\Sigma S_j^\Sigma}\,\big(1+O_{\mathbb P}(\varepsilon_{m,k})\big)
=\tau^{\mathrm{sp}}_{s,ij}\,\big(1+O_{\mathbb P}(\varepsilon_{m,k})\big).
\]
Taking expectations over $(\mathbf y_i,\mathbf y_j)$ and using bounded fourth moments to control tails yields the stated expansion.
\end{proof}

Uniform-$k$ sparsification is therefore essentially neutral with respect to the mean scale of $\tau_s$: it preserves the SP behavior up to a vanishing $O(\varepsilon_{m,k})$ error, so any substantial change in hidden similarity must come from the deterministic, value-dependent nature of TopK.

\subsection{Deterministic Top-$K$: Mean Behavior of $\tau_s$}

We now return to TopK-SP, where the sparsity pattern is strongly coupled to the hidden values through order statistics. Our goal is to show that, despite this coupling, the mean similarity scale remains $\Theta(1/m)$, while the energy factor shrinks.

Let $t>0$ be such that
\[
\mathbb P(|y_{i,r}|\ge t)=\bar p=\frac{k}{m},
\]
which is uniquely defined by absolute continuity in (A1). We introduce the threshold sets
\[
K_i=\{r:\ |y_{i,r}|\ge t\},\qquad
K_j=\{r:\ |y_{j,r}|\ge t\},
\]
and note that $|K_i|\approx k$ and $|K_j|\approx k$ with high probability by Hoeffding-type concentration for Bernoulli indicators.

To understand the intersection size $|K_i\cap K_j|$, consider the standardized pair
\[
(X,Y)\sim \mathcal N\!\Big(
0,\begin{bmatrix}1 & c_{ij}\\ c_{ij} & 1\end{bmatrix}
\Big),
\]
which models the same-channel pair $(y_{i,r},y_{j,r})$ after centering and rescaling under (A1)--(A3). Define
\[
p_{c_{ij}}(t)=\mathbb P(|X|\ge t,\ |Y|\ge t).
\]
A standard bivariate normal expansion around $c_{ij}=0$ yields (for $|c_{ij}|\ll 1$)
\begin{equation}\label{eq:D-bivar-tail}
\begin{gathered}
p_{c_{ij}}(t)=p_0(t)+4\phi(t)^2\,c_{ij}+O(c_{ij}^2),
\\
p_0(t)=(2\bar\Phi(t))^2,
\end{gathered}
\tag{D.10}
\end{equation}
with $\phi,\Phi$ the standard normal pdf/cdf and $\bar\Phi(t)=1-\Phi(t)$. Since $2\bar\Phi(t)=\bar p=k/m$ by construction, we obtain
\begin{equation}\label{eq:D-Kcap}
\begin{gathered}
\mathbb E[|K_i\cap K_j|]
=m\,p_{c_{ij}}(t)
=\frac{k^2}{m}\Big(1+\alpha(t)c_{ij}+O(c_{ij}^2)\Big),
\\[2pt]
\alpha(t)=\frac{4\phi(t)^2}{(2\bar\Phi(t))^2}\ge 0.
\end{gathered}
\tag{D.11}
\end{equation}
Assumption (A5) ensures that, as $c_{ij}\to 0$, the rank correlation between $\{|y_{i,r}|\}_r$ and $\{|y_{j,r}|\}_r$ vanishes , so that the threshold approximation (and hence \eqref{eq:D-Kcap}) accurately describes the mean behavior of the deterministic TopK index sets.

Next,we show that TopK-SP does not collapse the energy of $\mathbf y_i$ onto a single coordinate. This guaranties that the normalization denominators remain well-behaved and that the per-coordinate weights are of order $1/k$.

The\footnote{ gate/activation is bounded (e.g., $\tanh$ or sigmoidal gates), or when we apply an explicit clipping layer. Bounded random variables are sub-Gaussian with parameter depending only on $B$, so this assumption is compatible with (A1).} Let $K_i$ denote the indices of the top entries $k$ of $\{|y_{i,r}|\}_{r=1}^m$, and put in place that there exists a constant $B<\infty$ (independent of $m,k$) such that $|y_{i,r}|\le B$ almost surely. This is naturally satisfied when the gate
\[
S_{k} \triangleq \sum_{u\in K_i} y_{i,u}^2,
\qquad
M_{k}\triangleq \max_{r\in K_i} y_{i,r}^2.
\]

\begin{lemma}[Non-concentration of Top-$K$ energy]\label{lem:D-maxentry}
If $k\ge c_0\log m$ for a sufficiently large constant $c_0>0$, then
\begin{equation}\label{eq:D-maxentry}
\max_{r\in K_i}\frac{y_{i,r}^2}{\sum_{u\in K_i}y_{i,u}^2}
=O_{\mathbb P}\!\Big(\frac{\log k}{k}\Big),
\qquad
\sum_{u\in K_i}y_{i,u}^2=\Theta_{\mathbb P}(k).
\tag{D.12}
\end{equation}
\end{lemma}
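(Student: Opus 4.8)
The plan is to reduce the entire statement to the a.s.\ bound $|y_{i,r}|\le B$ together with a single empirical--second--moment concentration estimate. Since $K_i$ consists of the $k$ largest of the values $|y_{i,r}|$, the overall maximum lies in $K_i$, so
\[
M_k=\max_{r\in K_i}y_{i,r}^2=\max_{r\le m}y_{i,r}^2\le B^2\quad\text{a.s.},\qquad
S_k=\sum_{u\in K_i}y_{i,u}^2\le kB^2\quad\text{a.s.},
\]
the latter because each of the $k$ summands is at most $B^2$. Hence the only substantive work is the matching lower bound $S_k=\Omega_{\mathbb P}(k)$: once $S_k\ge ck$ holds with probability $1-o(1)$ for some constant $c>0$, the second assertion $S_k=\Theta_{\mathbb P}(k)$ is immediate from the trivial upper bound, and the first follows from $M_k/S_k\le B^2/(ck)=O_{\mathbb P}(1/k)\subseteq O_{\mathbb P}(\log k/k)$.

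For the lower bound I would avoid order-statistic asymptotics and instead use the elementary fact that the mean of the $k$ largest entries dominates the global mean. Sorting the squares as $y_{(1)}^2\ge\dots\ge y_{(m)}^2$, one has $\sum_{\ell\le k}y_{(\ell)}^2\ge k\,y_{(k)}^2$ and $\sum_{\ell>k}y_{(\ell)}^2\le(m-k)\,y_{(k)}^2$, which combine to give
\[
S_k=\sum_{\ell=1}^k y_{(\ell)}^2\ \ge\ \frac{k}{m}\sum_{r=1}^m y_{i,r}^2\ =\ k\,\widehat\mu_{2,i},
\qquad \widehat\mu_{2,i}:=\frac1m\sum_{r=1}^m y_{i,r}^2 .
\]
By (A1) the $y_{i,r}^2$ are i.i.d.\ and lie in $[0,B^2]$ with mean $\mu_{2,i}=\mu_i^2+\sigma_i^2\ge\sigma_i^2>0$, so Hoeffding's inequality gives $\mathbb P(\widehat\mu_{2,i}<\tfrac12\mu_{2,i})\le\exp(-c\,m)$ for a constant $c=c(\mu_{2,i},B)>0$; on the complementary event $S_k\ge\tfrac12\mu_{2,i}\,k$. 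Since $m\to\infty$ by (A4), this failure probability is $o(1)$, in fact exponentially small in $m$. Combining the two sides, $S_k\in[\tfrac12\mu_{2,i}k,\ B^2k]$ on an event of probability $1-o(1)$, which yields $S_k=\Theta_{\mathbb P}(k)$ and, on that event, $M_k/S_k\le(2B^2/\mu_{2,i})/k$, i.e.\ $M_k/S_k=O_{\mathbb P}(1/k)=O_{\mathbb P}(\log k/k)$.

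There is no genuine obstacle in this argument --- it is essentially a two-line consequence of boundedness --- but two caveats deserve recording. First, the a.s.\ bound $|y_{i,r}|\le B$ is doing real work: for merely sub-Gaussian coordinates the $\ell$-th largest square scales like $\log(m/\ell)$, so $S_k\asymp k\log(m/k)$ rather than $\Theta(k)$ as soon as $m\gg k$, and the claimed scaling would fail; this is precisely the regime the footnote excludes by appealing to bounded gate activations. Second, the $\log k$ factor in the conclusion is slack --- the argument delivers $1/k$ --- and, with the exact top-$k$ definition of $K_i$ used in the statement, the hypothesis $k\ge c_0\log m$ is not strictly necessary for this lemma; it is inherited from the threshold analysis of TopK-SP (where it is needed to concentrate $|K_i\cap K_j|$ around $k^2/m$ as in~\eqref{eq:D-Kcap} and to control the Serfling error $\varepsilon_{m,k}$ of Proposition~\ref{prop:D-UkSP}), and is kept here only so that every high-probability statement in this subsection holds at a common rate.
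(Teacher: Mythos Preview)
Your argument is correct and in fact cleaner than the paper's. Both proofs dispatch the upper bounds $M_k\le B^2$ and $S_k\le kB^2$ trivially from boundedness; the difference is in the lower bound on $S_k$. The paper works with the population $(1-\bar p)$-quantile $t$ of $|y_{i,1}|$, concentrates the empirical count $N_t=\sum_r\mathbf 1\{|y_{i,r}|\ge t\}$ around $k$ via Hoeffding, and on the good event argues that every element of $K_i$ exceeds $t$, whence $S_k\ge kt^2$ with $t\ge t_\star>0$; the condition $k\ge c_0\log m$ is invoked to keep $\bar p=k/m$ small so that $t$ stays near the upper tail. You instead use the elementary rearrangement fact that the top-$k$ sum dominates $k$ times the full-sample mean, $S_k\ge k\,\widehat\mu_{2,i}$, and then concentrate $\widehat\mu_{2,i}$ directly. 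This buys you two things: (i) you avoid any quantile bookkeeping, and (ii) the lower bound holds for \emph{every} $k\ge 1$ with failure probability $e^{-cm}$, so the hypothesis $k\ge c_0\log m$ is, as you observe, not actually needed for this lemma. Your remarks that the $\log k$ in the conclusion is slack and that boundedness (not mere sub-Gaussianity) is essential for $S_k=\Theta_{\mathbb P}(k)$ are both accurate and useful context.
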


\begin{proof}
Let $t$ be the $(1-\bar p)$-quantile of $|y_{i,1}|$ with $\bar p=k/m$, i.e.,
$\mathbb P(|y_{i,1}|\ge t)=\bar p$. Consider the indicators
\[
Z_r=\mathbf 1\{|y_{i,r}|\ge t\},\qquad N_t=\sum_{r=1}^m Z_r.
\]
By Hoeffding’s inequality, $N_t$ concentrates around its mean $m\bar p=k$, and in particular
\[
\mathbb P\big(|N_t-k|\ge k/2\big)\le 2e^{-c k}
\]
for some absolute constant $c>0$. Hence, with probability $1-o(1)$, at least $k/2$ coordinates of $\{|y_{i,r}|\}$ are above $t$, and therefore the $k$-th largest magnitude satisfies $|y_{i}|_{(k)}\ge t$. On this event,
\begin{equation}\label{eq:lower-tail-threshold}
\min_{u\in K_i} |y_{i,u}|\ \ge\ t
\quad\Longrightarrow\quad
\sum_{u\in K_i} y_{i,u}^2 \ \ge\ k\,t^2.
\tag{D.13}
\end{equation}

By boundedness, $|y_{i,r}|\le B$ almost surely and the CDF of $|y_{i,1}|$ is continuous on $[0,B]$, strictly increasing near its upper tail. When $\bar p=k/m\to 0$ slowly (e.g., $k\ge c_0\log m$), the $(1-\bar p)$-quantile $t$ remains bounded away from zero: there exists $t_\star>0$ (independent of $m,k$) such that $t\ge t_\star$ for all sufficiently large $m$. Combining this with \eqref{eq:lower-tail-threshold} yields
\begin{equation}\label{eq:Sk-lower}
S_k=\sum_{u\in K_i} y_{i,u}^2 \ \ge\ k\,t_\star^2
\qquad \text{with probability } 1-o(1).
\tag{D.14}
\end{equation}

On the other hand, $y_{i,u}^2\le B^2$ for all $u$, so
\begin{equation}\label{eq:Sk-upper}
S_k=\sum_{u\in K_i} y_{i,u}^2 \ \le\ k\,B^2
\qquad \text{always}.
\tag{D.15}
\end{equation}
Thus $S_k\in[k\,t_\star^2,k\,B^2]$ with probability $1-o(1)$, i.e., $S_k=\Theta_{\mathbb P}(k)$.

Finally, $M_k=\max_{r\in K_i} y_{i,r}^2 \le B^2$ almost surely. Using \eqref{eq:Sk-lower},
\[
\frac{M_k}{S_k}
\ \le\
\frac{B^2}{k\,t_\star^2}
\ =\
\frac{C}{k},
\]
for $C=B^2/t_\star^2$. Since $k\ge c_0\log m\to\infty$, we have $C/k=O((\log k)/k)$ (trivially, as $\log k\le k$ for large $k$), which gives \eqref{eq:D-maxentry}.
\end{proof}

In words, Lemma~\ref{lem:D-maxentry} states that TopK-SP spreads the retained energy across $\Theta(k)$ channels, with each coordinate contributing at most $O((\log k)/k)$ of the total energy. This will directly control the weights in $\tau^{\mathrm{tk}}_{s,ij}$.

\begin{proposition}[TopK-SP preserves the $1/m$ similarity scale]\label{prop:D-TopK-tau}
Under \textnormal{(A1)--(A5)} and $k\ge c_0\log m$, there exists $\xi_{k,m}(c_{ij})\ge 0$, with
\[
\xi_{k,m}(c_{ij})
=O\!\Big(\frac{(\log k)^2}{k}\,(1+\alpha(t)c_{ij})\Big)
\xrightarrow[k\to\infty,\ c_{ij}\to 0]{}0,
\]
such that
\begin{equation}\label{eq:D-TopK-tau}
\mathbb E[\tau^{\mathrm{tk}}_{s,ij}]
=\mathbb E[\tau^{\mathrm{sp}}_{s,ij}]\,\big(1+\xi_{k,m}(c_{ij})\big).
\tag{D.16}
\end{equation}
\end{proposition}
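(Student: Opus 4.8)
The plan is to reduce $\tau^{\mathrm{tk}}_{s,ij}$ to a ratio of threshold-restricted empirical moments, evaluate its numerator and denominator by the law of large numbers, and then compare the resulting constant with $A_{\mathrm{sp}}$ of \eqref{eq:D-Asp} through an expansion around $c_{ij}=0$. Writing, as in the definition \eqref{eq:D-TopK},
\[
\tau^{\mathrm{tk}}_{s,ij}
=\frac{\sum_{r\in K_i\cap K_j} y_{i,r}^2\,y_{j,r}^2}{S_{k,i}\,S_{k,j}},
\qquad
S_{k,i}=\sum_{u\in K_i} y_{i,u}^2,
\]
I would first replace the deterministic Top-$K$ index sets $K_i,K_j$ by the fixed-threshold sets $\{r:|y_{i,r}|\ge t\}$ and $\{r:|y_{j,r}|\ge t\}$, with $\mathbb P(|y_{i,1}|\ge t)=k/m$ as chosen before \eqref{eq:D-bivar-tail}. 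The set sizes are sums of i.i.d.\ Bernoulli indicators, so Hoeffding's inequality gives $|K_i|=k\,(1+O_{\mathbb P}(\sqrt{(\log m)/k}))$ and an equally small bound on the symmetric difference between the Top-$K$ set and the threshold set; under $k\ge c_0\log m$ this relative error is $o(1)$, and swapping one family for the other in both numerator and denominator costs at most a multiplicative factor $1+O_{\mathbb P}(\sqrt{(\log m)/k})$.

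Next I would take expectations channelwise. By \textbf{(A1)}--\textbf{(A4)} the numerator has mean $m\,\mathbb E[X^2Y^2\,\mathbf 1\{|X|\ge t,|Y|\ge t\}]$, which by the bivariate-tail expansion \eqref{eq:D-bivar-tail} and the intersection-size formula \eqref{eq:D-Kcap} equals $\tfrac{k^2}{m}$ times a threshold-conditioned mixed moment with a $(1+\alpha(t)c_{ij}+O(c_{ij}^2))$ correction. For the denominator, Lemma~\ref{lem:D-maxentry} gives $S_{k,i}=\Theta_{\mathbb P}(k)$ together with a per-coordinate share of at most $O((\log k)/k)$, which is precisely the handle needed to bound the relative fluctuation of $S_{k,i}$ about its mean. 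A ratio (Delta-method) expansion of $\mathbb E[\mathrm{num}/(S_{k,i}S_{k,j})]$ around the means then contributes $O((\log k)/k)$ from each of the three factors and $O((\log k)^2/k)$ from their products and cross-terms; dividing the $k^2/m$ numerator scale by the $\Theta(k^2)$ denominator recovers the $1/m$ scale and yields $\mathbb E[\tau^{\mathrm{tk}}_{s,ij}]=\tfrac1m A_{\mathrm{tk}}(t)\,(1+O((\log k)^2/k))$, where $A_{\mathrm{tk}}(t)$ is the threshold analogue of $A_{\mathrm{sp}}$.

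I would then compare $A_{\mathrm{tk}}(t)$ with $A_{\mathrm{sp}}$. The key point is that at $c_{ij}=0$ the pair $(X,Y)$ is independent, so both constants factor and equal $1$; expanding each around $c_{ij}=0$ — via \eqref{eq:D-bivar-tail} for the threshold version and the corresponding linear-in-$c_{ij}$ expansion of $\mathbb E[X^2Y^2]$ for $A_{\mathrm{sp}}$ — the deviation $A_{\mathrm{tk}}(t)/A_{\mathrm{sp}}-1$ is $O(c_{ij})$ with the tail-sensitivity coefficient $\alpha(t)$ of \eqref{eq:D-Kcap}. Collecting the threshold-to-Top-$K$ error, the LLN/Delta-method error, and this $c_{ij}$-linear term gives $\mathbb E[\tau^{\mathrm{tk}}_{s,ij}]=\mathbb E[\tau^{\mathrm{sp}}_{s,ij}]\,(1+\xi_{k,m}(c_{ij}))$ with $\xi_{k,m}(c_{ij})=O\!\big(\tfrac{(\log k)^2}{k}\,(1+\alpha(t)c_{ij})\big)\to0$ as $k\to\infty$, $c_{ij}\to0$; nonnegativity of $\xi_{k,m}$ follows from $\alpha(t)\ge0$ in \eqref{eq:D-Kcap} together with the non-concentration lower bound on $S_{k,i}$ in Lemma~\ref{lem:D-maxentry}, both of which can only raise the retained intersection mass relative to the independent-threshold baseline.

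The step I expect to be the main obstacle is exactly $A_{\mathrm{tk}}(t)\approx A_{\mathrm{sp}}$ away from $c_{ij}=0$: since Top-$K$ selects coordinates by magnitude, the retained sample is biased, and the threshold-conditioned moment ratio has no a priori reason to match the unconditional one. Assumption~\textbf{(A5)}, which forces the rank correlation of $\{|y_{i,r}|\}_r$ and $\{|y_{j,r}|\}_r$ to vanish as $c_{ij}\to0$, is what licenses treating the two Top-$K$ index sets as drawn almost independently from a common marginal, so that this order-statistic bias cancels between $A_{\mathrm{tk}}(t)$ and $A_{\mathrm{sp}}$ up to the stated residuals; pushing the cancellation through while keeping every approximation error inside the single bound $O((\log k)^2/k\,(1+\alpha(t)c_{ij}))$, with no hidden dependence on $n$ or on $m$ beyond logarithms, is where most of the technical effort lies. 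A secondary subtlety is that the Delta-method step needs the \emph{joint} fluctuations of the correlated numerator and the two denominators, not merely their marginals, which is again furnished by the uniform per-coordinate share bound of Lemma~\ref{lem:D-maxentry}.
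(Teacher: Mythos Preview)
Your proposal is correct and follows essentially the same route as the paper's proof: replace the deterministic Top-$K$ sets by threshold sets, use \eqref{eq:D-Kcap} for the intersection size, invoke Lemma~\ref{lem:D-maxentry} for the $O((\log k)/k)$ per-coordinate share, and rely on \textnormal{(A5)} to decouple the selection event from the retained values in the weak-correlation regime. Your presentation is in fact more explicit than the paper's---you name the threshold-conditioned ratio $A_{\mathrm{tk}}(t)$, argue it equals $A_{\mathrm{sp}}$ at $c_{ij}=0$, and isolate the order-statistic bias as the crux---whereas the paper simply writes $\mathbb E[\tau^{\mathrm{tk}}_{s,ij}]=\mathbb E[|K_i\cap K_j|]\cdot\mathbb E[w_{i,r}w_{j,r}]+\text{(higher order)}$ and appeals to the same ingredients; the two decompositions are equivalent.
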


\begin{proof}
By definition of TopK-SP,
\[
\tau^{\mathrm{tk}}_{s,ij}
=\sum_{r\in K_i\cap K_j}
\frac{y_{i,r}^2}{\sum_{u\in K_i}y_{i,u}^2}\cdot
\frac{y_{j,r}^2}{\sum_{v\in K_j}y_{j,v}^2}.
\]
Each factor
\[
w_{i,r}=\frac{y_{i,r}^2}{\sum_{u\in K_i}y_{i,u}^2},
\qquad
w_{j,r}=\frac{y_{j,r}^2}{\sum_{v\in K_j}y_{j,v}^2}
\]
is bounded by $O_{\mathbb P}((\log k)/k)$ by Lemma~\ref{lem:D-maxentry}. Hence $w_{i,r}w_{j,r}=O_{\mathbb P}((\log k)^2/k^2)$ uniformly over $r\in K_i\cap K_j$. Summing over $|K_i\cap K_j|$ terms and taking expectations,
\[
\mathbb E[\tau^{\mathrm{tk}}_{s,ij}]
=\mathbb E\!\Big[\sum_{r\in K_i\cap K_j} w_{i,r}w_{j,r}\Big]
=\mathbb E[|K_i\cap K_j|]\,\mathbb E[w_{i,r}w_{j,r}]+\text{(higher-order terms)}.
\]
Under (A5), the rank correlation between $\{|y_{i,r}|\}_r$ and $\{|y_{j,r}|\}_r$ vanishes as $c_{ij}\to 0$, so the dependence between the event $\{r\in K_i\cap K_j\}$ and the pair $(y_{i,r},y_{j,r})$ becomes negligible in the weak-correlation regime. Using \eqref{eq:D-Kcap} and Lemma~\ref{lem:D-maxentry}, we obtain
\[
\mathbb E[\tau^{\mathrm{tk}}_{s,ij}]
=\Theta\Big(\frac{k^2}{m}\Big)\cdot\Theta\Big(\frac{1}{k^2}\Big)
\cdot\Big(1+O\Big(\frac{(\log k)^2}{k}\,(1+\alpha(t)c_{ij})\Big)\Big)
=\Theta\Big(\frac{1}{m}\Big)\cdot\big(1+\xi_{k,m}(c_{ij})\big),
\]
with $\xi_{k,m}(c_{ij})$ as stated. Comparing with Lemma~\ref{lem:D-LLN}, which shows $\mathbb E[\tau^{\mathrm{sp}}_{s,ij}]=\Theta(1/m)$, yields \eqref{eq:D-TopK-tau}.
\end{proof}

Proposition~\ref{prop:D-TopK-tau} formalizes the intuition that TopK-SP preserves the coarse spectral scale of SP: both yield $\mathbb E[\tau_{s,ij}]=\Theta(1/m)$, and the multiplicative distortion $\xi_{k,m}(c_{ij})$ can be made arbitrarily small by taking $k$ slightly super-logarithmic and $c_{ij}$ small.

\subsection{Incorporating the Energy Factor $\overline S^{\,2}$}

We now quantify how TopK-SP modifies the energy factor in \eqref{eq:D-energy}.

\paragraph{SP baseline.}
From Lemma~\ref{lem:D-Sbar-sp},
\[
\overline S_{\mathrm{sp}}=\mu_{2,i}=\mu_i^2+\sigma_i^2.
\]

\paragraph{TopK-SP energy bound.}
For TopK-SP, the mask selects the upper-$\bar p$ tail in magnitude, and the corresponding energy is
\[
\overline S_{\mathrm{tk}}
=\mathbb E\big[X^2\mathbf 1_{\{|X|\ge t\}}\big],
\qquad
\bar p=\mathbb P(|X|\ge t)=\frac{k}{m}.
\]
Using a standard sub-Gaussian tail integral argument (see, e.g., generic bounds for $\psi_2$-random variables), one obtains
\begin{equation}\label{eq:D-Stopk}
\overline S_{\mathrm{tk}}
\ \le\ C\,(\mu_i^2+\sigma_i^2)\,\bar p\Big(1+\log\tfrac{1}{\bar p}\Big),
\tag{D.17}
\end{equation}
for a constant $C$ depending only on the sub-Gaussian proxy $K_{\mathrm{sg}}$ in (A1). Squaring and comparing with $\overline S_{\mathrm{sp}}$,
\begin{equation}\label{eq:D-Stopk-square}
\overline S_{\mathrm{tk}}^{\,2}
\ \le\ C_\star\,\overline S_{\mathrm{sp}}^{\,2}\,
\Big[\bar p\Big(1+\log\tfrac{1}{\bar p}\Big)\Big]^2,
\qquad \bar p=\frac{k}{m},
\tag{D.18}
\end{equation}
for some absolute constant $C_\star>0$.

\begin{theorem}[Energy-weighted similarity under TopK-SP]\label{thm:D-energy-weighted}
Let $\mathcal M_{ij}=\tau_{s,ij}\,\overline S^{\,2}$ be the energy-weighted similarity in \eqref{eq:D-energy}. Under \textnormal{(A1)--(A5)} and $k\ge c_0\log m$,
\begin{equation}\label{eq:D-energy-weighted}
\begin{gathered}
\mathbb E[\mathcal M^{\mathrm{tk}}_{ij}]
\ \le\
\mathbb E[\mathcal M^{\mathrm{sp}}_{ij}]\cdot
\Big(1+\xi_{k,m}(c_{ij})\Big)\cdot
C_\star\,
\Big[\bar p\Big(1+\log\tfrac{1}{\bar p}\Big)\Big]^2,
\\[2pt]
\bar p=\frac{k}{m},
\end{gathered}
\tag{D.19}
\end{equation}
with $\xi_{k,m}(c_{ij})$ from Proposition~\ref{prop:D-TopK-tau}. In particular, when $c_{ij}\approx 0$ and $\bar p\ll 1$ (e.g., $\bar p\le e^{-2}$ and $k$ modest relative to $m$), the right-hand factor can be made strictly smaller than $1$, and hence
\[
\mathbb E[\mathcal M^{\mathrm{tk}}_{ij}]<\mathbb E[\mathcal M^{\mathrm{sp}}_{ij}].
\]
\end{theorem}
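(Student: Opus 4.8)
The plan is to obtain \eqref{eq:D-energy-weighted} as a direct composition of the two estimates already established in this appendix: Proposition~\ref{prop:D-TopK-tau} for the hidden-similarity factor $\tau_{s,ij}$ and inequality~\eqref{eq:D-Stopk-square} for the squared energy factor $\overline S^{\,2}$. The one preliminary observation needed is that in~\eqref{eq:D-energy} the quantity $\overline S$ is a deterministic constant (it is an expectation), not a random variable, so for either scheme $\bullet\in\{\mathrm{sp},\mathrm{tk}\}$ the energy and similarity pieces separate under expectation: $\mathbb E[\mathcal M^{\bullet}_{ij}]=\overline S_{\bullet}^{\,2}\,\mathbb E[\tau^{\bullet}_{s,ij}]$.

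First I would write $\mathbb E[\mathcal M^{\mathrm{tk}}_{ij}]=\overline S_{\mathrm{tk}}^{\,2}\,\mathbb E[\tau^{\mathrm{tk}}_{s,ij}]$ and substitute Proposition~\ref{prop:D-TopK-tau}, which supplies the \emph{exact} identity $\mathbb E[\tau^{\mathrm{tk}}_{s,ij}]=\mathbb E[\tau^{\mathrm{sp}}_{s,ij}]\,(1+\xi_{k,m}(c_{ij}))$ with $\xi_{k,m}(c_{ij})\ge 0$. I would then invoke the energy bound~\eqref{eq:D-Stopk-square}, namely $\overline S_{\mathrm{tk}}^{\,2}\le C_\star\,\overline S_{\mathrm{sp}}^{\,2}\,[\bar p(1+\log\tfrac1{\bar p})]^2$. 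Since $1+\xi_{k,m}(c_{ij})\ge 0$ and both sides of~\eqref{eq:D-Stopk-square} are nonnegative, multiplying the similarity identity by the energy inequality preserves the direction, and regrouping $\overline S_{\mathrm{sp}}^{\,2}\,\mathbb E[\tau^{\mathrm{sp}}_{s,ij}]=\mathbb E[\mathcal M^{\mathrm{sp}}_{ij}]$ yields exactly~\eqref{eq:D-energy-weighted}. To justify that the $\overline S_{\mathrm{tk}}$ appearing here is the tail energy $\mathbb E[X^2\mathbf 1_{\{|X|\ge t\}}]$ used in~\eqref{eq:D-Stopk}, I would appeal to assumption (A5) and the concentration $|K_i|\approx k$ already used in Lemma~\ref{lem:D-maxentry} and Proposition~\ref{prop:D-TopK-tau}, so that the deterministic Top-$K$ mask and the threshold mask agree to leading order.

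For the ``in particular'' claim I would analyze the function $\bar p\mapsto C_\star[\bar p(1+\log\tfrac1{\bar p})]^2$ on $(0,1)$. Its inner factor $\bar p(1+\log\tfrac1{\bar p})=\bar p-\bar p\log\bar p$ has derivative $\log\tfrac1{\bar p}>0$, hence is strictly increasing and tends to $0$ as $\bar p\to 0^+$; therefore there is a threshold $\bar p_\star=\bar p_\star(C_\star)>0$ with $C_\star[\bar p(1+\log\tfrac1{\bar p})]^2\le\tfrac12$ for all $\bar p\le\bar p_\star$. On the other side, Proposition~\ref{prop:D-TopK-tau} gives $\xi_{k,m}(c_{ij})\to 0$ as $k\to\infty$ and $c_{ij}\to 0$, so for $k$ large and $|c_{ij}|$ small one has $1+\xi_{k,m}(c_{ij})<2$. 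Combining, the right-hand multiplicative factor in~\eqref{eq:D-energy-weighted} is strictly below $1$, whence $\mathbb E[\mathcal M^{\mathrm{tk}}_{ij}]<\mathbb E[\mathcal M^{\mathrm{sp}}_{ij}]$. A concrete admissible regime is $k=\lceil m^{1/2}\rceil$: then $\bar p=k/m\to 0$ (so eventually $\bar p\le\bar p_\star$) while simultaneously $k\ge c_0\log m$ and $\xi_{k,m}\to 0$ for large $m$.

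The computation is essentially bookkeeping once Proposition~\ref{prop:D-TopK-tau} and~\eqref{eq:D-Stopk-square} are in hand, so I do not anticipate a serious obstacle; the only delicate point is honesty about constants. Because $C_\star$ in~\eqref{eq:D-Stopk-square} is an absolute constant outside our control, the value $\bar p\le e^{-2}$ quoted in the statement should be read as ``$\bar p$ below some $\bar p_\star$ depending on $C_\star$'', and the monotonicity argument above is precisely what makes this well posed. A secondary subtlety is the identification of the deterministic Top-$K$ energy with the tail expectation $\mathbb E[X^2\mathbf 1_{\{|X|\ge t\}}]$, which rests on (A5) and on $|K_i|$ concentrating at $k$; both were already established, so no new estimate is required.
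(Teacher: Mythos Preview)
Your proposal is correct and follows essentially the same approach as the paper: write $\mathbb E[\mathcal M^{\mathrm{tk}}_{ij}]=\overline S_{\mathrm{tk}}^{\,2}\,\mathbb E[\tau^{\mathrm{tk}}_{s,ij}]$, substitute Proposition~\ref{prop:D-TopK-tau} for the similarity factor and the bound~\eqref{eq:D-Stopk-square} for the energy factor, then regroup. Your additional remarks---that $\overline S$ is deterministic so the factorization under expectation is immediate, the explicit monotonicity of $\bar p\mapsto\bar p(1+\log\tfrac1{\bar p})$, and the honest caveat that the quoted $e^{-2}$ should really be read as a threshold $\bar p_\star(C_\star)$---are helpful clarifications that the paper leaves implicit.
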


\begin{proof}
By construction,
\[
\mathcal M^{\mathrm{tk}}_{ij}=\tau^{\mathrm{tk}}_{s,ij}\,\overline S_{\mathrm{tk}}^{\,2},
\qquad
\mathcal M^{\mathrm{sp}}_{ij}=\tau^{\mathrm{sp}}_{s,ij}\,\overline S_{\mathrm{sp}}^{\,2}.
\]
Taking expectations and combining \eqref{eq:D-TopK-tau} with \eqref{eq:D-Stopk-square} yields
\[
\mathbb E[\mathcal M^{\mathrm{tk}}_{ij}]
\le
\mathbb E[\tau^{\mathrm{sp}}_{s,ij}]\,\big(1+\xi_{k,m}(c_{ij})\big)\cdot
C_\star\,\overline S_{\mathrm{sp}}^{\,2}
\Big[\bar p\Big(1+\log\tfrac{1}{\bar p}\Big)\Big]^2,
\]
which is exactly \eqref{eq:D-energy-weighted}. The strict inequality for small enough $\bar p$ and $|c_{ij}|$ follows from the fact that $\xi_{k,m}(c_{ij})\to 0$ as $k\to\infty$, $c_{ij}\to 0$, while the function $\bar p\mapsto \bar p(1+\log(1/\bar p))$ vanishes at $\bar p=0$.
\end{proof}

Even though TopK-SP selects the largest-magnitude channels, Proposition~\ref{prop:D-TopK-tau} shows that it does not inflate the hidden similarity scale beyond the baseline SP order. The dominant effect is instead the energy sparsification captured by $\overline S_{\mathrm{tk}}^{\,2}$, which decays roughly quadratically in $\bar p=k/m$ (up to a mild logarithmic correction). Theorem~\ref{thm:D-energy-weighted} therefore formalizes the intuitive picture that TopK-SP reduces the energy-weighted overlap $\mathcal M_{ij}$ by a factor that is almost quadratic in the sparsity level.

\subsection{Implications for NTK Spectral Statistics and Choice of $k$}

Recall the NTK variance proxy from Appendix~A and Appendix~C:
\begin{equation}\label{eq:D-NTK}
v_\lambda\ \approx\ \frac{a^4R_x^4\,\overline S^{\,2}\,\overline P^{\,2}}{n\,m^2}
\sum_{i\ne j}\tau_{x,ij}\,\tau_{s,ij}\,\tau_{p,ij}\,\tau_{q,ij},
\tag{D.20}
\end{equation}
where $\tau_{x,ij}$, $\tau_{p,ij}$, and $\tau_{q,ij}$ denote the input, PE, and modulation similarity factors, respectively. Holding $\tau_{x,ij}$, $\tau_{p,ij}$, $\tau_{q,ij}$ and $\overline P$ comparable across normalizations, Theorem~\ref{thm:D-energy-weighted} implies
\begin{equation}\label{eq:D-NTK-bound}
\mathbb E\!\big[v_\lambda^{(\mathrm{TopK})}\big]
\ \le\
\mathbb E\!\big[v_\lambda^{(\mathrm{SP})}\big]\cdot
\Big(1+\xi_{k,m}(c_{ij})\Big)\cdot
C_\star\,\Big[\bar p\Big(1+\log\tfrac{1}{\bar p}\Big)\Big]^2.
\tag{D.21}
\end{equation}
Thus, at initialization or early training, in regimes where cross-sample correlations $c_{ij}$ are weak and $\bar p=k/m\ll 1$, TopK-SP produces at least a near-quadratic suppression of the NTK spectral variance proxy relative to SP.

The theory above highlights two competing requirements for the choice of $k$:
\begin{enumerate}
\item A small sparsity ratio $\bar p=k/m$ strengthens the reduction factor in \eqref{eq:D-NTK-bound}, as $\bar p(1+\log(1/\bar p))\to 0$ when $\bar p\to 0$.
\item Stability demands that TopK energy does not concentrate on a single channel, which in our analysis is encoded by the condition $k\ge c_0\log m$ in Lemma~\ref{lem:D-maxentry}.
\end{enumerate}
A practical rule that meets both constraints and matches our empirical choices is
\begin{equation}\label{eq:D-k-choice}
k\ =\ \max\Big\{\ \lfloor \eta m\rfloor,\ \ \lceil c\log m\rceil\ \Big\},
\qquad \eta\in[1/6,1),\ \ c\in[2,4].
\tag{D.22}
\end{equation}
Here $\eta$ sets the effective sparsity level $\bar p=\eta$ in the large-$m$ limit, while the $c\log m$ floor guarantees non-concentration and well-behaved denominators in \eqref{eq:D-TopK-tau} even for moderate widths. When significant positive correlations $c_{ij}$ are observed, increasing $k$ (larger $\bar p$) or introducing a soft stochastic threshold (soft-TopK) mitigates the amplification of intersection size due to the $\alpha(t)c_{ij}$ term in \eqref{eq:D-Kcap}.

Under standard sub-Gaussian assumptions and weak cross-sample correlation, TopK-SP preserves the $1/m$ scaling of hidden similarities while strictly reducing the energy-weighted overlap $\mathcal M_{ij}$ by a nearly quadratic factor in the sparsity ratio. Through the NTK variance identity of Appendix~A and Appendix~C, this leads to a principled reduction in NTK spectral dispersion and, consequently, a mitigation of spectral bias at initialization.

\section{NTK Convergence and Stability}
\label{sec:appE}

In this appendix we provide detailed proofs of the convergence results stated as Theorems~\ref{thm:ct} and~\ref{thm:gd} in the main text. Throughout, we work under the architectural setup and notation of Appendix~\ref{sec:appA} (baseline NTK dynamics) and assume \textbf{A1}--\textbf{A3}.

Our analysis follows the now-standard NTK framework introduced by Jacot et al.\ \cite{ref15} and further developed in the over-parameterized optimization literature \cite{ref35,ref32,ref67,ref16}. Both the gradient-flow result (Theorem~\ref{thm:ct}) and the discrete-time GD result (Theorem~\ref{thm:gd}) rely on the same linear-dynamics viewpoint: in the NTK regime, the network behaves like a kernel model with a (frozen) NTK Gram matrix. Our contribution here is to make explicit how the spherical normalization and Hadamard modulation of Appendix~\ref{sec:appA} enter these dynamics through the spectrum of the NTK.

Recall that we denote
\[
  \tilde{\boldsymbol{x}}_i=\gamma(\boldsymbol{x}_i)\in\mathbb{R}^d,
  \qquad
  \rho_{ij}=\tilde{\boldsymbol{x}}_i^\top \tilde{\boldsymbol{x}}_j,
\]
and that the NTK Gram matrix on the training set is $\mathbf{H}(\mathbf{W})\in\mathbb{R}^{n\times n}$ as in Eq.~\eqref{eq:A-NTK-def}. The spherical normalization and Hadamard modulation only affect the entries of $\mathbf{H}$, not the abstract convergence arguments that follow.

\subsection{From $\textbf{H}^{\infty}$ to the Finite-Width NTK}

We begin by isolating the finite-width NTK at initialization:
\begin{equation}
  \mathbf{H}_0
  \;\triangleq\;
  \mathbf{H}(\mathbf{W}(0))\in\mathbb{R}^{n\times n}.
  \tag{E.1}\label{eq:E.H0}
\end{equation}
Assumption~\ref{assump:A-pd} asserts that the infinite-width NTK
\[
  \mathbf{H}^\infty\;\triangleq\;
  \lim_{m\to\infty}\mathbb{E}\big[\mathbf{H}(\mathbf{W}(0))\big]
\]
exists and is positive definite on the training set, with
\[
  \lambda_0
  \;\triangleq\;
  \lambda_{\min}(\mathbf{H}^\infty)>0.
\]

In addition, as in the main text, we impose a finite-width concentration
condition at initialization:
\begin{equation}
  \big\|\mathbf{H}_0 - \mathbf{H}^\infty\big\|_2
  \;\le\; \frac{1}{2}\lambda_0.
  \tag{E.2}\label{eq:E.H0-close}
\end{equation}
This condition can be guaranteed with high probability once $m$ is
polynomially large in $n$; see, e.g., the concentration results for
two-layer NTKs in~\cite{ref67}. Here we treat \eqref{eq:E.H0-close} as
a deterministic assumption and focus on its consequences for the
spectrum.

\begin{lemma}[Finite-width NTK inherits the spectral gap]\label{lem:E-eig-lower}
Under Assumption~\ref{assump:A-pd} and \eqref{eq:E.H0-close}, we have
\[
  \lambda_{\min}(\mathbf{H}_0)
  \;\ge\; \frac{1}{2}\lambda_0.
\]
\end{lemma}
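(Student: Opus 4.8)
The plan is to derive Lemma~\ref{lem:E-eig-lower} as a direct consequence of eigenvalue perturbation theory for symmetric matrices, using the spectral-gap hypothesis in Assumption~\ref{assump:A-pd} together with the finite-width concentration bound \eqref{eq:E.H0-close}. The key observation is that both $\mathbf{H}_0=\mathbf{H}(\mathbf{W}(0))$ and $\mathbf{H}^\infty$ are real symmetric (indeed positive semidefinite) Gram matrices on the $n$ training points, so their eigenvalues are real and admit the Courant--Fischer variational characterization; in particular the map $\mathbf{M}\mapsto\lambda_{\min}(\mathbf{M})$ is $1$-Lipschitz with respect to the spectral norm on the space of symmetric matrices.

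First I would record that $\mathbf{E}\triangleq\mathbf{H}_0-\mathbf{H}^\infty$ is symmetric, so $\|\mathbf{E}\|_2=\max_i|\mu_i(\mathbf{E})|$ where $\{\mu_i(\mathbf{E})\}$ are its real eigenvalues, and hence $|\mathbf{v}^\top\mathbf{E}\mathbf{v}|\le\|\mathbf{E}\|_2$ for every unit vector $\mathbf{v}$. Next I would invoke Weyl's inequality, $|\lambda_{\min}(\mathbf{H}_0)-\lambda_{\min}(\mathbf{H}^\infty)|\le\|\mathbf{E}\|_2$; equivalently, letting $\mathbf{v}$ be a unit vector with $\mathbf{v}^\top\mathbf{H}_0\mathbf{v}=\lambda_{\min}(\mathbf{H}_0)$, the variational bound gives $\lambda_{\min}(\mathbf{H}^\infty)\le\mathbf{v}^\top\mathbf{H}^\infty\mathbf{v}=\mathbf{v}^\top\mathbf{H}_0\mathbf{v}-\mathbf{v}^\top\mathbf{E}\mathbf{v}\le\lambda_{\min}(\mathbf{H}_0)+\|\mathbf{E}\|_2$, i.e.\ $\lambda_{\min}(\mathbf{H}_0)\ge\lambda_{\min}(\mathbf{H}^\infty)-\|\mathbf{E}\|_2$. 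Finally, substituting $\lambda_{\min}(\mathbf{H}^\infty)=\lambda_0$ from Assumption~\ref{assump:A-pd} and $\|\mathbf{E}\|_2\le\tfrac12\lambda_0$ from \eqref{eq:E.H0-close} yields $\lambda_{\min}(\mathbf{H}_0)\ge\lambda_0-\tfrac12\lambda_0=\tfrac12\lambda_0$, which is the claim.

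There is essentially no analytic obstacle in the lemma itself: it is a one-line corollary of Weyl's inequality once symmetry is noted, and the Lipschitz estimate is what feeds the energy argument in Theorem~\ref{thm:ct}. The only point requiring genuine work lives \emph{outside} this statement, namely the hypothesis \eqref{eq:E.H0-close}, which here is imposed as a deterministic assumption rather than proved; turning it into a true high-probability bound requires the finite-width NTK concentration estimates for two-layer networks (e.g.\ \cite{ref67}), and in the normalized Hadamard model one must additionally verify that the extra factors $\beta_{r,i}$, $1/\sqrt{S_i}$, and $c_r$ entering \eqref{eq:hada-ntk-two-layer-final} stay bounded and concentrate, so that the per-neuron contributions to $\mathbf{H}_0$ remain sub-exponential and their average concentrates around $\mathbf{H}^\infty$. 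That quantitative step is deferred; for Lemma~\ref{lem:E-eig-lower} we simply propagate \eqref{eq:E.H0-close} through the Lipschitz bound above.
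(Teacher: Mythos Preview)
Your proof is correct and follows essentially the same approach as the paper: both apply Weyl's inequality $\lambda_{\min}(\mathbf{H}_0)\ge\lambda_{\min}(\mathbf{H}^\infty)-\|\mathbf{H}_0-\mathbf{H}^\infty\|_2$ and then substitute $\lambda_{\min}(\mathbf{H}^\infty)=\lambda_0$ and the bound \eqref{eq:E.H0-close}. Your added variational justification of Weyl's inequality and your remarks on the status of \eqref{eq:E.H0-close} are accurate extras, but the core argument is identical.
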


\begin{proof}
Since $\mathbf{H}^\infty$ is symmetric positive definite,
$\lambda_{\min}(\mathbf{H}^\infty)=\lambda_0>0$. For any two symmetric
matrices $\mathbf{A},\mathbf{B}$, Weyl’s inequality gives
\[
  \lambda_{\min}(\mathbf{A})
  \;\ge\;
  \lambda_{\min}(\mathbf{B}) - \|\mathbf{A}-\mathbf{B}\|_2.
\]
Taking $\mathbf{A}=\mathbf{H}_0$ and $\mathbf{B}=\mathbf{H}^\infty$ yields
\[
  \lambda_{\min}(\mathbf{H}_0)
  \;\ge\;
  \lambda_{\min}(\mathbf{H}^\infty)
  -
  \big\|\mathbf{H}_0-\mathbf{H}^\infty\big\|_2.
\]
Substituting $\lambda_{\min}(\mathbf{H}^\infty)=\lambda_0$ and
\eqref{eq:E.H0-close} gives
\[
  \lambda_{\min}(\mathbf{H}_0)
  \;\ge\;
  \lambda_0 - \frac{1}{2}\lambda_0
  = \frac{1}{2}\lambda_0,
\]
as claimed.
\end{proof}

In the remainder of this appendix we work on the event where
Lemma~\ref{lem:E-eig-lower} holds, and thus freely use the lower bound
\[
  \lambda_{\min}(\mathbf{H}_0)\;\ge\;\frac{\lambda_0}{2}.
\]

\subsection{Proof of Theorem~\ref{thm:ct} (Gradient Flow)}

We now turn to the continuous-time setting. In the NTK regime, the
parameter dynamics induce a kernel gradient flow on the training
predictions; freezing the NTK at its initialization value
$\mathbf{H}_0$ leads to a linear ODE that we can solve explicitly.

Let
\[
  \boldsymbol{u}(t)
  =
  \big(f(\tilde{\boldsymbol{x}}_1;t),\dots,f(\tilde{\boldsymbol{x}}_n;t)\big)^\top
  \in\mathbb{R}^n
\]
denote the vector of predictions at time $t$. In the frozen-kernel model,
the NTK dynamics \eqref{eq:A-NTK-ODE-const} specialize to
\begin{equation}
  \dot{\boldsymbol{u}}(t)
  = -\,\mathbf{H}_0\big(\boldsymbol{u}(t)-\boldsymbol{y}\big),
  \qquad t\ge0.
  \tag{E.3}\label{eq:E.gflow-u}
\end{equation}
We define the error vector
\begin{equation}
  \boldsymbol{e}(t)
  \;\triangleq\;
  \boldsymbol{u}(t)-\boldsymbol{y}.
  \tag{E.4}\label{eq:E.e-def}
\end{equation}
Subtracting $\dot{\boldsymbol{y}}(t)\equiv\boldsymbol{0}$ from both sides of
\eqref{eq:E.gflow-u} gives the linear ODE
\begin{equation}
  \dot{\boldsymbol{e}}(t)
  = -\,\mathbf{H}_0\,\boldsymbol{e}(t),
  \qquad t\ge0.
  \tag{E.5}\label{eq:E.gflow-e}
\end{equation}

To track the decay of the error, we consider the squared $\ell_2$ norm
\begin{equation}
  \phi(t)\;\triangleq\; \|\boldsymbol{e}(t)\|_2^2
  = \boldsymbol{e}(t)^\top\boldsymbol{e}(t).
  \tag{E.6}\label{eq:E.phi-def}
\end{equation}
Differentiating \eqref{eq:E.phi-def} and using the product rule, we obtain
\begin{align}
  \frac{d}{dt}\phi(t)
  &= \frac{d}{dt}\big(\boldsymbol{e}(t)^\top\boldsymbol{e}(t)\big)\nonumber\\
  &= \dot{\boldsymbol{e}}(t)^\top\boldsymbol{e}(t)
     + \boldsymbol{e}(t)^\top\dot{\boldsymbol{e}}(t)
     \nonumber\\
  &= 2\,\boldsymbol{e}(t)^\top\dot{\boldsymbol{e}}(t),
  \tag{E.7}\label{eq:E.dphi-1}
\end{align}
where the last equality uses symmetry of the inner product.
Substituting \eqref{eq:E.gflow-e} into \eqref{eq:E.dphi-1} yields
\begin{align}
  \frac{d}{dt}\phi(t)
  &= 2\,\boldsymbol{e}(t)^\top\big(-\mathbf{H}_0\boldsymbol{e}(t)\big)
  \nonumber\\
  &= -2\,\boldsymbol{e}(t)^\top\mathbf{H}_0\boldsymbol{e}(t).
  \tag{E.8}\label{eq:E.dphi-2}
\end{align}

Because $\mathbf{H}_0$ is symmetric positive semidefinite, the Rayleigh
quotient inequality gives, for any nonzero $\boldsymbol{v}\in\mathbb{R}^n$,
\begin{equation}
  \boldsymbol{v}^\top\mathbf{H}_0\boldsymbol{v}
  \;\ge\;
  \lambda_{\min}(\mathbf{H}_0)\,\|\boldsymbol{v}\|_2^2.
  \tag{E.9}\label{eq:E.Rayleigh}
\end{equation}
Applying \eqref{eq:E.Rayleigh} with $\boldsymbol{v}=\boldsymbol{e}(t)$,
\begin{equation}
  \boldsymbol{e}(t)^\top\mathbf{H}_0\boldsymbol{e}(t)
  \;\ge\;
  \lambda_{\min}(\mathbf{H}_0)\,\phi(t).
  \tag{E.10}\label{eq:E.Rayleigh-e}
\end{equation}
Combining \eqref{eq:E.dphi-2} and \eqref{eq:E.Rayleigh-e}, we obtain
\begin{equation}
  \frac{d}{dt}\phi(t)
  \;\le\; -2\,\lambda_{\min}(\mathbf{H}_0)\,\phi(t).
  \tag{E.11}\label{eq:E.dphi-ineq1}
\end{equation}
By Lemma~\ref{lem:E-eig-lower},
$\lambda_{\min}(\mathbf{H}_0)\ge \lambda_0/2$, hence
\begin{equation}
  \frac{d}{dt}\phi(t)
  \;\le\; -\lambda_0\,\phi(t).
  \tag{E.12}\label{eq:E.dphi-ineq2}
\end{equation}

At this point we can invoke a standard Grönwall argument. Define
\[
  \psi(t)\;\triangleq\; e^{\lambda_0 t}\,\phi(t).
\]
Differentiating and applying \eqref{eq:E.dphi-ineq2},
\begin{align}
  \frac{d}{dt}\psi(t)
  &= e^{\lambda_0 t}\frac{d}{dt}\phi(t)
     + \lambda_0 e^{\lambda_0 t}\phi(t)
     \nonumber\\
  &\le e^{\lambda_0 t}\big(-\lambda_0\phi(t)\big)
     + \lambda_0 e^{\lambda_0 t}\phi(t)
     \nonumber\\
  &= 0.
  \tag{E.13}\label{eq:E.dpsi}
\end{align}
Thus $\psi(t)$ is nonincreasing in $t$, i.e.,
\[
  e^{\lambda_0 t}\phi(t)
  \le \phi(0),\qquad t\ge0,
\]
which is equivalent to
\[
  \phi(t) \le e^{-\lambda_0 t}\phi(0),\qquad t\ge0.
\]
Using the definition \eqref{eq:E.phi-def} and
$\phi(0)=\|\boldsymbol{u}(0)-\boldsymbol{y}\|_2^2$, we arrive at
\begin{equation}
  \|\boldsymbol{u}(t)-\boldsymbol{y}\|_2^2
  \;\le\;
  e^{-\lambda_0 t}\,\|\boldsymbol{u}(0)-\boldsymbol{y}\|_2^2,
  \qquad t\ge0.
  \tag{E.14}\label{eq:E.gf-final}
\end{equation}
This is precisely the exponential convergence statement of
Theorem~\ref{thm:ct} for the gradient-flow dynamics in the NTK regime.

\subsection{Proof of Theorem~\ref{thm:gd} (Gradient Descent)}

We next turn to discrete-time gradient descent. In the NTK regime with
frozen kernel $\mathbf{H}_0$, GD on the training set obeys
\begin{equation}
  \boldsymbol{u}(k+1)
  = \boldsymbol{u}(k)
    - \eta\,\mathbf{H}_0\big(\boldsymbol{u}(k)-\boldsymbol{y}\big),
  \qquad k=0,1,2,\dots
  \tag{E.15}\label{eq:E.gd-u}
\end{equation}
where $\eta>0$ is the learning rate. As before, define the error
\begin{equation}
  \boldsymbol{e}(k)\;\triangleq\;\boldsymbol{u}(k)-\boldsymbol{y}.
  \tag{E.16}\label{eq:E.e-k-def}
\end{equation}
Then \eqref{eq:E.gd-u} is equivalent to the linear recursion
\begin{equation}
  \boldsymbol{e}(k+1)
  = \big(\mathbf{I}-\eta\mathbf{H}_0\big)\boldsymbol{e}(k),
  \qquad k=0,1,2,\dots.
  \tag{E.17}\label{eq:E.gd-e}
\end{equation}

\paragraph{Closed-form solution and spectral decomposition.}

Iterating \eqref{eq:E.gd-e} gives the explicit solution
\begin{equation}
  \boldsymbol{e}(k)
  = \big(\mathbf{I}-\eta\mathbf{H}_0\big)^k\,\boldsymbol{e}(0),
  \qquad k\ge0.
  \tag{E.18}\label{eq:E.e-k-closed}
\end{equation}
Since $\mathbf{H}_0$ is symmetric positive semidefinite, it admits an
eigendecomposition
\begin{equation}
  \mathbf{H}_0
  = \sum_{i=1}^n \lambda_i\,\boldsymbol{v}_i\boldsymbol{v}_i^\top,
  \tag{E.19}\label{eq:E.eig-H0}
\end{equation}
where $\{\boldsymbol{v}_i\}_{i=1}^n$ is an orthonormal basis of
$\mathbb{R}^n$ and $\lambda_i\ge0$ are the eigenvalues.

Using \eqref{eq:E.eig-H0}, we can rewrite
\begin{equation}
  \mathbf{I}-\eta\mathbf{H}_0
  = \sum_{i=1}^n (1-\eta\lambda_i)\,\boldsymbol{v}_i\boldsymbol{v}_i^\top,
  \tag{E.20}\label{eq:E.IminusH}
\end{equation}
and therefore
\begin{equation}
  \big(\mathbf{I}-\eta\mathbf{H}_0\big)^k
  = \sum_{i=1}^n (1-\eta\lambda_i)^k
    \,\boldsymbol{v}_i\boldsymbol{v}_i^\top.
  \tag{E.21}\label{eq:E.IminusH-k}
\end{equation}

We expand the initial error in this eigenbasis:
\begin{equation}
  \boldsymbol{e}(0)
  = \sum_{i=1}^n \alpha_i\,\boldsymbol{v}_i,
  \qquad
  \alpha_i\;\triangleq\;
  \boldsymbol{v}_i^\top\boldsymbol{e(0)}
  = \boldsymbol{v}_i^\top(\boldsymbol{u}(0)-\boldsymbol{y}).
  \tag{E.22}\label{eq:E.e0-expansion}
\end{equation}
Plugging \eqref{eq:E.IminusH-k} and \eqref{eq:E.e0-expansion} into
\eqref{eq:E.e-k-closed}, we obtain
\begin{align}
  \boldsymbol{e}(k)
  &= \Big(\sum_{i=1}^n (1-\eta\lambda_i)^k
    \,\boldsymbol{v}_i\boldsymbol{v}_i^\top\Big)
     \Big(\sum_{j=1}^n\alpha_j\boldsymbol{v}_j\Big)
     \nonumber\\
  &= \sum_{i,j} (1-\eta\lambda_i)^k\alpha_j
     \boldsymbol{v}_i(\boldsymbol{v}_i^\top\boldsymbol{v}_j)
     \nonumber\\
  &= \sum_{i=1}^n (1-\eta\lambda_i)^k\alpha_i\,\boldsymbol{v}_i,
  \tag{E.23}\label{eq:E.e-k-expansion}
\end{align}
where we used orthonormality $\boldsymbol{v}_i^\top\boldsymbol{v}_j=\delta_{ij}$.

Consequently, the squared error norm has the exact spectral representation
\begin{align}
  \|\boldsymbol{e}(k)\|_2^2
  &= \boldsymbol{e}(k)^\top\boldsymbol{e}(k)
     \nonumber\\
  &= \Big(\sum_{i=1}^n (1-\eta\lambda_i)^k\alpha_i\boldsymbol{v}_i\Big)^\top
     \Big(\sum_{j=1}^n (1-\eta\lambda_j)^k\alpha_j\boldsymbol{v}_j\Big)
     \nonumber\\
  &= \sum_{i,j}
     (1-\eta\lambda_i)^k(1-\eta\lambda_j)^k\alpha_i\alpha_j
     \boldsymbol{v}_i^\top\boldsymbol{v}_j
     \nonumber\\
  &= \sum_{i=1}^n (1-\eta\lambda_i)^{2k}\alpha_i^2.
  \tag{E.24}\label{eq:E.gd-norm-exact}
\end{align}
Restating this in terms of $\boldsymbol{u}(k)$ and $\boldsymbol{y}$ gives
\begin{equation}
  \|\boldsymbol{u}(k)-\boldsymbol{y}\|_2
  =
  \sqrt{
    \sum_{i=1}^n (1-\eta\lambda_i)^{2k}
    \big(\boldsymbol{v}_i^\top(\boldsymbol{u}(0)-\boldsymbol{y})\big)^2
  },
  \tag{E.25}\label{eq:E.gd-norm-exact-u}
\end{equation}
which is exactly the frozen-kernel part of Theorem~\ref{thm:gd}. In the small-initialization regime where $\|\boldsymbol{u}(0)\|_2$ is negligible compared to $\|\boldsymbol{y}\|_2$, we have
$\boldsymbol{v}_i^\top(\boldsymbol{u}(0)-\boldsymbol{y}) \approx -\,\boldsymbol{v}_i^\top\boldsymbol{y}$, so the formula reduces, up to this sign convention, to the familiar label-projection view of Arora et al.\ (2019)~\cite{ref16}..

\paragraph{Stepsize condition and linear rate.}

To convert \eqref{eq:E.gd-norm-exact-u} into a clean linear rate, we
impose the stepsize condition from Theorem~\ref{thm:gd}:
\begin{equation}
  0<\eta\le L^{-1},
  \qquad
  L\;\triangleq\; \|\mathbf{H}_0\|_2=\max_i\lambda_i.
  \tag{E.26}\label{eq:E.eta-cond}
\end{equation}
Then $0\le\eta\lambda_i\le1$ for all $i$, so
\begin{equation}
  0\le 1-\eta\lambda_i\le 1,
  \qquad i=1,\dots,n.
  \tag{E.27}\label{eq:E.1-eta-lambda}
\end{equation}
On the other hand, by Lemma~\ref{lem:E-eig-lower},
$\lambda_i\ge\lambda_{\min}(\mathbf{H}_0)\ge\lambda_0/2$, hence
\begin{equation}
  1-\eta\lambda_i
  \le 1-\eta\frac{\lambda_0}{2}
  \;\triangleq\;\rho,
  \qquad i=1,\dots,n,
  \tag{E.28}\label{eq:E.rho-def}
\end{equation}
with $0\le\rho<1$.

Using \eqref{eq:E.rho-def}, we obtain the uniform bound
\begin{equation}
  (1-\eta\lambda_i)^{2k}
  \le \rho^{2k},
  \qquad i=1,\dots,n.
  \tag{E.29}\label{eq:E.rho-2k-bound}
\end{equation}
Substituting \eqref{eq:E.rho-2k-bound} into \eqref{eq:E.gd-norm-exact-u},
\begin{align}
  \|\boldsymbol{u}(k)-\boldsymbol{y}\|_2^2
  &\le
  \rho^{2k}
  \sum_{i=1}^n
  \big(\boldsymbol{v}_i^\top(\boldsymbol{u}(0)-\boldsymbol{y})\big)^2
  \nonumber\\
  &= \rho^{2k}\,\|\boldsymbol{u}(0)-\boldsymbol{y}\|_2^2.
  \tag{E.30}\label{eq:E.gd-rho-2k}
\end{align}
Since $0\le\rho<1$ implies $\rho^{2k}\le\rho^k$ for all $k\ge0$, we can
further simplify to
\begin{equation}
  \|\boldsymbol{u}(k)-\boldsymbol{y}\|_2^2
  \le \rho^k\,\|\boldsymbol{u}(0)-\boldsymbol{y}\|_2^2
  = \Big(1-\eta\frac{\lambda_0}{2}\Big)^k
    \|\boldsymbol{u}(0)-\boldsymbol{y}\|_2^2,
  \tag{E.31}\label{eq:E.gd-linear}
\end{equation}
which is the linear convergence rate claimed in Theorem~\ref{thm:gd}
for the frozen-NTK model.

\paragraph{Kernel model vs.\ finite-width network.}

The expression \eqref{eq:E.gd-norm-exact-u} describes the idealized
\emph{kernel model} obtained by freezing the NTK at $\mathbf{H}_0$.
For a true finite-width network, the NTK
$\mathbf{H}(\mathbf{W}(k))$ evolves with $k$.
In the main text we denote by $\epsilon(k)$ the discrepancy between the
finite-width model and its frozen-kernel counterpart. The next subsection
makes this precise, via a standard NTK stability argument.

\subsection{NTK Stability and the Residual Term $\epsilon(k)$}

We now quantify how much the NTK can drift during training, and how this
drift translates into a discrepancy between the network trajectory and
the frozen-kernel dynamics. The following lemma is a standard NTK
stability result, adapted to our architecture and notation.

\begin{lemma}[NTK stability in the over-parameterized regime]\label{lem:E-ntk-stab}
Assume Assumptions~\ref{assump:A-data}--\ref{assump:A-pd} hold, with ReLU
activation and random initialization as in Assumption~\ref{assump:A-init}.
Let $n$ be the number of training samples and let $\boldsymbol{u}(0)$ be the
initial prediction vector. This is a specialization of the stability
results proved in
\cite{ref67,ref35,ref32,ref16}
to our architecture with spherical normalization and Hadamard modulation.

Then there exist a polynomial $p(\cdot)$ and constants $C_1,C_2>0$ (depending
only on $R_x$, $C$, $\kappa$, $a$, and the normalization scale defined in
Appendix~A) such that for any $0<\delta<1$ and any polynomially bounded time
horizon $T$, if the width satisfies
\[
  m \;\ge\; p\bigl(n,1/\lambda_0,1/\delta,T\bigr),
\]
then with probability at least $1-\delta$ (over the random initialization)
we have, simultaneously for all $k=0,1,\dots,T$,
\begin{align}
  &\max_{r\in[m]}
   \big\|\boldsymbol{w}_r(k)-\boldsymbol{w}_r(0)\big\|_2
   \;\le\; \frac{C_1}{\sqrt{m}},
   \tag{E.32}\label{eq:E.stab-w}\\[0.2em]
  &\big\|\mathbf{H}(\mathbf{W}(k))-\mathbf{H}_0\big\|_2
   \;\le\; \frac{C_2}{\sqrt{m}}.
   \tag{E.33}\label{eq:E.stab-H}
\end{align}
\end{lemma}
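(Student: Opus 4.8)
The plan is to prove Lemma~\ref{lem:E-ntk-stab} by a coupled induction over the iterates $k=0,1,\dots,T$, maintaining three mutually reinforcing invariants: \textbf{(S1)} a geometric residual bound $\|\boldsymbol e(k)\|_2\le(1-\tfrac{\eta\lambda_0}{2})^{k/2}\|\boldsymbol e(0)\|_2$; \textbf{(S2)} a per-neuron displacement bound $\max_{r}\|\boldsymbol w_r(k)-\boldsymbol w_r(0)\|_2\le R_m$ with $R_m=O(\mathrm{poly}(n,1/\lambda_0)\,m^{-1/2})$; and \textbf{(S3)} the kernel-drift bound $\|\mathbf H(\mathbf W(k))-\mathbf H_0\|_2\le\tfrac12\lambda_0$. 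The base case $k=0$ is immediate, and Assumption~\textbf{A1} gives $\|\boldsymbol e(0)\|_2\le\sqrt n\,(C+\|\boldsymbol u(0)\|_\infty)=O(\sqrt n)$. The loop closes because \textbf{(S3)} plus Lemma~\ref{lem:E-eig-lower} keeps $\lambda_{\min}(\mathbf H(\mathbf W(k)))\ge\tfrac14\lambda_0$, which yields the one-step decay in \textbf{(S1)} exactly as in the proof of Theorem~\ref{thm:gd}; \textbf{(S1)} controls the gradient magnitude at step $k$, whose accumulation gives \textbf{(S2)}; and \textbf{(S2)} feeds an anti-concentration argument that re-establishes \textbf{(S3)} at step $k+1$.

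First I would bound the per-step gradient. From~\eqref{eq:C-grad-final}, $\partial\Phi/\partial\boldsymbol w_r=\sum_{i}(u_i-y_i)\,\tfrac{c_r}{\sqrt m}\,\tfrac{\mathbb I_{r,i}\beta_{r,i}}{\sqrt{S_i}}\,\tilde{\boldsymbol x}_i$, where the gates $\mathbb I_{r,i}\in\{0,1\}$ and the corrections $\beta_{r,i}\in[0,1]$ are bounded, $\|\tilde{\boldsymbol x}_i\|_2\le R_x$, and the fixed modulation $c_r=a_rp_r$ is bounded by $a$ since $p_r$ is $\tanh$-scaled. The key quantitative input is that $S_i=\Theta(m)$: at initialization $S_i/m$ concentrates around $\tfrac12\,\mathbb E_{\boldsymbol w}[(\boldsymbol w^\top\tilde{\boldsymbol x}_i)^2\mathbb I\{\boldsymbol w^\top\tilde{\boldsymbol x}_i\ge0\}]>0$ by a fourth-moment/concentration bound, and this persists along the trajectory once \textbf{(S2)} holds for small $R_m$. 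Hence $\|\partial\Phi/\partial\boldsymbol w_r\|_2=O(\sqrt n\,\|\boldsymbol e(k)\|_2/m)$, so $\|\boldsymbol w_r(k)-\boldsymbol w_r(0)\|_2\le\eta\sum_{\ell<k}\|\partial\Phi/\partial\boldsymbol w_r\|_2\le O(\sqrt n/m)\cdot\eta\sum_{\ell\ge0}\|\boldsymbol e(\ell)\|_2=O\!\big(\sqrt n\,\|\boldsymbol e(0)\|_2/(m\lambda_0)\big)$, which is $O(\mathrm{poly}(n,1/\lambda_0)\,m^{-1/2})$ and can be driven below any fixed threshold by taking $m$ polynomially large; this establishes \textbf{(S2)}.

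For the kernel-stability step \textbf{(S3)}, I would show $|H_{ij}(\mathbf W(k))-(\mathbf H_0)_{ij}|=O(R_m)$ entrywise, from which $\|\mathbf H(\mathbf W(k))-\mathbf H_0\|_2\le\|\mathbf H(\mathbf W(k))-\mathbf H_0\|_F=O(nR_m)\le\tfrac12\lambda_0$ once $m\ge p(n,1/\lambda_0,1/\delta,T)$. Two sources of change must be controlled. (a) \emph{Gate flips}: since $\boldsymbol w_r(0)^\top\tilde{\boldsymbol x}_i\sim\mathcal N(0,\kappa^2\|\tilde{\boldsymbol x}_i\|_2^2)$, Gaussian anti-concentration gives $\mathbb P(|\boldsymbol w_r(0)^\top\tilde{\boldsymbol x}_i|\le R_mR_x)=O(R_m/\kappa)$, so by a Bernstein-type bound at most $O(mR_m/\kappa)$ gates per sample flip over the trajectory, w.p. $1-\delta/2$. (b) \emph{Normalization drift}: $S_i=\Theta(m)$ and each single-neuron term is $O(1)$, so a displacement $R_m$ perturbs $S_i$ and $\sqrt{S_i}$ by a relative $O(R_m)$, while $\beta_{r,i}=1-O(1/m)$ changes by $O(R_m/m)$. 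Inserting these estimates into~\eqref{eq:C-Hij-precompact} and noting that the $1/\sqrt{S_i}\cdot1/\sqrt{S_j}$ factor preserves the $n$-independent per-entry scale gives the claimed $O(R_m)$ entrywise bound. A union bound over the at most $T+1$ iterations and the $O(1)$ good-initialization events, together with taking the final width threshold to be the maximum of all polynomials used, closes the induction and yields~\eqref{eq:E.stab-w}--\eqref{eq:E.stab-H}, with $C_1,C_2$ absorbing the $\mathrm{poly}(n,1/\lambda_0)$ and initialization-dependent factors.

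The main obstacle is the interaction between the non-smoothness of ReLU (discrete gate flips) and the spherical normalization: one must verify that $S_i$ stays bounded away from $0$ and $\infty$ along the \emph{entire} trajectory, not merely at initialization, so that the normalized gradients remain $O(1/m)$ and the normalized kernel entries remain approximately Lipschitz in $\mathbf W$; this propagation is precisely what forces $m$ to be polynomially large in $n$ and $1/\lambda_0$. The Hadamard modulation is benign — the $c_r$ are fixed and $\tanh$-bounded, so they enter only through $O(1)$ multiplicative constants. A cleaner route, if one prefers, is to quote the two-layer ReLU stability estimates of~\cite{ref67,ref35,ref32,ref16} verbatim for the pre-normalization network and then treat spherical normalization together with the fixed modulation as a Lipschitz post-processing of the first-layer features, transferring the bounds at the cost of exactly the constants tracked above.
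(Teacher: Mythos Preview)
Your proposal is correct and follows essentially the same approach as the paper's proof sketch: bounded per-neuron gradients $\Rightarrow$ small weight drift $\Rightarrow$ kernel stability via Gaussian anti-concentration on activation boundaries, with the coupled induction closing the loop through the residual decay. The paper's sketch is considerably terser and simply asserts that spherical normalization and Hadamard modulation ``affect only the Lipschitz constants,'' whereas you make the architecture-specific adaptation explicit by tracking $S_i=\Theta(m)$ along the trajectory and quantifying the normalization drift in~\eqref{eq:C-Hij-precompact}; this is exactly the content the paper leaves implicit, and your identification of the $S_i$ lower bound as the main technical obstacle is spot on.
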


\begin{proof}[Proof sketch]
The argument follows the NTK stability analyses for two-layer ReLU networks
in
\cite{ref67,ref35,ref32,ref16}. We outline the main steps and highlight
where our architecture enters.

\emph{(1) Bounded gradients at initialization.}
Using the boundedness of $(\tilde{\boldsymbol{x}}_i,y_i)$ and Gaussian
initialization (Assumption~\ref{assump:A-init}), one shows that the initial
loss and gradient norms are polynomially bounded in $n$; see, e.g., Lemma~3.1
of~\cite{ref67}.

\emph{(2) Monotonic loss decrease.}
Under the stepsize condition in Theorem~\ref{thm:gd}, the empirical loss
is nonincreasing along GD. Hence the gradient norms remain uniformly
bounded over $k\le T$.

\emph{(3) Small weight drift.}
Summing the GD updates for each neuron $\boldsymbol{w}_r$ over
$k=0,\dots,T-1$ and using the gradient bounds, we obtain
\[
  \|\boldsymbol{w}_r(k)-\boldsymbol{w}_r(0)\|_2
  \le \frac{C_1}{\sqrt{m}},
  \qquad r\in[m],\ 0\le k\le T,
\]
for a polynomially bounded constant $C_1$.

\emph{(4) Kernel Lipschitzness and activation stability.}
On any region where the activation patterns
\[
  \mathbb{I}_{r,i}
  =\mathbb{I}\{\boldsymbol{w}_r^\top\tilde{\boldsymbol{x}}_i\ge0\}
\]
are fixed, $\mathbf{H}(\mathbf{W})$ is a smooth (indeed, polynomial) function
of the weights and hence Lipschitz. Gaussian concentration and a union bound
over $(i,r)$ imply that, with high probability, only a small fraction of
neurons ever cross an activation boundary during training, so their
contribution to the kernel drift is negligible. Combining these facts with
the small weight drift in (3) yields the kernel stability bound
\eqref{eq:E.stab-H}. The spherical normalization and Hadamard modulation
affect only the Lipschitz constants and enter through $C_1,C_2$.
\end{proof}

We now use Lemma~\ref{lem:E-ntk-stab} to control the discrepancy between
the finite-width network and the frozen-NTK model, thereby justifying the
residual term $\epsilon(k)$ in Theorem~\ref{thm:gd}.

\begin{lemma}[Kernel drift and the residual term]\label{lem:E-kernel-drift}
Under the conditions of Lemma~\ref{lem:E-ntk-stab}, let
$\boldsymbol{u}_{\mathrm{net}}(k)$ denote the prediction vector of the
finite-width network at step $k$, and
$\boldsymbol{u}_{\mathrm{ker}}(k)$ denote the prediction of the frozen-NTK
model (i.e., gradient descent with fixed kernel $\mathbf{H}_0$).
Define
\[
  \epsilon(k)\;\triangleq\;
  \big\|\boldsymbol{u}_{\mathrm{net}}(k)-\boldsymbol{u}_{\mathrm{ker}}(k)\big\|_2.
\]
Then for any polynomially bounded horizon $T$, there exists a constant
$C_3>0$ such that if
$m\ge p(n,1/\lambda_0,1/\delta,T)$, then with probability at least
$1-\delta$,
\[
  \sup_{0\le k\le T}\,\epsilon(k)
  \;\le\;
  \frac{C_3}{\sqrt{m}}.
\]
\end{lemma}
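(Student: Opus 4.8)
The plan is to express both trajectories as (possibly time-varying) linear recursions on the prediction error, subtract them, and close a discrete Gr\"onwall inequality whose forcing term is governed by the kernel-drift bound \eqref{eq:E.stab-H}. Write $\boldsymbol{e}_{\mathrm{net}}(k)=\boldsymbol{u}_{\mathrm{net}}(k)-\boldsymbol{y}$ and $\boldsymbol{e}_{\mathrm{ker}}(k)=\boldsymbol{u}_{\mathrm{ker}}(k)-\boldsymbol{y}$, and set $\boldsymbol{\delta}(k)\triangleq\boldsymbol{e}_{\mathrm{net}}(k)-\boldsymbol{e}_{\mathrm{ker}}(k)=\boldsymbol{u}_{\mathrm{net}}(k)-\boldsymbol{u}_{\mathrm{ker}}(k)$, so that $\epsilon(k)=\|\boldsymbol{\delta}(k)\|_2$ and $\boldsymbol{\delta}(0)=\boldsymbol 0$ since the two models share the same initial predictions. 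Throughout I would work on the high-probability event of Lemma~\ref{lem:E-ntk-stab}.

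For the frozen-kernel model, \eqref{eq:E.gd-e} gives $\boldsymbol{e}_{\mathrm{ker}}(k+1)=(\mathbf I-\eta\mathbf H_0)\,\boldsymbol{e}_{\mathrm{ker}}(k)$. For the finite-width network, a first-order Taylor expansion of the prediction map along the gradient-descent step yields $\boldsymbol{e}_{\mathrm{net}}(k+1)=\bigl(\mathbf I-\eta\,\mathbf H(\mathbf W(k))\bigr)\boldsymbol{e}_{\mathrm{net}}(k)+\boldsymbol{\zeta}(k)$, where $\mathbf H(\mathbf W(k))$ is the Jacobian Gram matrix at step $k$ and $\boldsymbol{\zeta}(k)$ is the second-order remainder generated by the curvature of $f$ in the weights; for the two-layer normalized Hadamard model this remainder has two sources, namely neurons whose ReLU pattern changes during the step and the smooth $1/\sqrt{S(\tilde{\boldsymbol{x}})}$ factor. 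Subtracting the two recursions and inserting $\pm\,\eta\mathbf H_0\boldsymbol{e}_{\mathrm{net}}(k)$ gives
\[
\boldsymbol{\delta}(k+1)=(\mathbf I-\eta\mathbf H_0)\,\boldsymbol{\delta}(k)-\eta\bigl(\mathbf H(\mathbf W(k))-\mathbf H_0\bigr)\boldsymbol{e}_{\mathrm{net}}(k)+\boldsymbol{\zeta}(k).
\]

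Next I would bound each term on the event of Lemma~\ref{lem:E-ntk-stab}. The stepsize condition \eqref{eq:E.eta-cond} and Lemma~\ref{lem:E-eig-lower} give $\|\mathbf I-\eta\mathbf H_0\|_2\le\rho=1-\eta\lambda_0/2<1$; the kernel-drift bound \eqref{eq:E.stab-H} gives $\|\mathbf H(\mathbf W(k))-\mathbf H_0\|_2\le C_2/\sqrt m$; monotone loss decrease under the stepsize condition (step (2) in the proof of Lemma~\ref{lem:E-ntk-stab}) keeps $\|\boldsymbol{e}_{\mathrm{net}}(k)\|_2\le\|\boldsymbol{e}_{\mathrm{net}}(0)\|_2\triangleq R_0$, a quantity polynomial in $n$ by Assumption~\ref{assump:A-data} and the initialization bounds of \cite{ref67}; and the small weight drift \eqref{eq:E.stab-w} together with the boundedness of the residuals bounds the remainder by $\|\boldsymbol{\zeta}(k)\|_2\le C_\zeta/\sqrt m$ for all $k\le T$, as in \cite{ref67,ref35,ref32,ref16}. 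Writing $a_k\triangleq\epsilon(k)$, these estimates yield $a_0=0$ and $a_{k+1}\le\rho\,a_k+B/\sqrt m$ with $B\triangleq\eta C_2 R_0+C_\zeta$. Unrolling, $a_k\le(B/\sqrt m)\sum_{j=0}^{k-1}\rho^j\le B/\bigl((1-\rho)\sqrt m\bigr)=2B/(\eta\lambda_0\sqrt m)$ for every $k\le T$, so $\sup_{0\le k\le T}\epsilon(k)\le C_3/\sqrt m$ with $C_3\triangleq 2B/(\eta\lambda_0)$; the geometric sum caps the accumulation uniformly in $k$, so no dependence on $T$ survives beyond the width requirement $m\ge p(n,1/\lambda_0,1/\delta,T)$ inherited from Lemma~\ref{lem:E-ntk-stab}.

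The main obstacle is the remainder-control step: establishing $\|\boldsymbol{\zeta}(k)\|_2=O(m^{-1/2})$ per step while simultaneously keeping $\|\boldsymbol{e}_{\mathrm{net}}(k)\|_2$ bounded over all $k\le T$. Neither is self-contained here; both rest on the over-parameterized activation-stability argument behind Lemma~\ref{lem:E-ntk-stab} — only an $o(1)$ fraction of neurons ever cross an activation boundary during training, and each weight moves by $O(m^{-1/2})$ — combined with the observation that spherical normalization and Hadamard modulation enter only through the relevant Lipschitz constants (hence through $C_1,C_2,C_\zeta$) and not through the structure of the argument. Once those two facts are in place, the contraction $\rho<1$ reduces everything else to the routine geometric-series bound above.
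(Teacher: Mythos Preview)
Your proof is correct and follows essentially the same route as the paper's: both subtract the frozen-kernel and network recursions to obtain $\boldsymbol{\delta}(k+1)=(\mathbf I-\eta\mathbf H_0)\boldsymbol{\delta}(k)-\eta(\mathbf H(\mathbf W(k))-\mathbf H_0)\boldsymbol{e}_{\mathrm{net}}(k)+\cdots$, invoke the kernel-drift bound from Lemma~\ref{lem:E-ntk-stab}, and close a geometric-series (discrete Gr\"onwall) estimate using the contraction $\|\mathbf I-\eta\mathbf H_0\|_2\le\rho<1$. The only cosmetic differences are that you keep the second-order remainder $\boldsymbol{\zeta}(k)$ explicit and bound $\|\boldsymbol{e}_{\mathrm{net}}(k)\|_2$ directly by $R_0$ via monotone loss decrease, whereas the paper drops the remainder and instead writes $\|\boldsymbol{e}_{\mathrm{net}}(k)\|_2\le\|\boldsymbol{e}_{\mathrm{ker}}(k)\|_2+\|\boldsymbol{\delta}(k)\|_2$, which yields a decaying forcing term $\eta\varepsilon_m\rho^k\|\boldsymbol{u}(0)-\boldsymbol{y}\|_2$ and a slightly enlarged contraction factor $\tilde\rho=(1+\rho)/2$; both variants give the same $O(m^{-1/2})$ bound.
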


\begin{proof}
Let
$\boldsymbol{e}_{\mathrm{net}}(k)
  = \boldsymbol{u}_{\mathrm{net}}(k)-\boldsymbol{y}$
and
$\boldsymbol{e}_{\mathrm{ker}}(k)
  = \boldsymbol{u}_{\mathrm{ker}}(k)-\boldsymbol{y}$.
We consider their difference
\begin{equation}
  \Delta(k)
  \;\triangleq\;
  \boldsymbol{e}_{\mathrm{net}}(k)-\boldsymbol{e}_{\mathrm{ker}}(k),
  \qquad
  \epsilon(k)=\|\Delta(k)\|_2.
  \tag{E.34}\label{eq:E.Delta-def}
\end{equation}

The finite-width network obeys
\[
  \boldsymbol{e}_{\mathrm{net}}(k+1)
  =
  \big(\mathbf{I}-\eta\mathbf{H}(\mathbf{W}(k))\big)\boldsymbol{e}_{\mathrm{net}}(k),
\]
while the frozen-NTK model satisfies
\[
  \boldsymbol{e}_{\mathrm{ker}}(k+1)
  =
  \big(\mathbf{I}-\eta\mathbf{H}_0\big)\boldsymbol{e}_{\mathrm{ker}}(k).
\]
Subtracting, we obtain
\begin{align}
  \Delta(k+1)
  &= \boldsymbol{e}_{\mathrm{net}}(k+1)-\boldsymbol{e}_{\mathrm{ker}}(k+1)
     \nonumber\\
  &= \big(\mathbf{I}-\eta\mathbf{H}_0\big)\Delta(k)
   - \eta\big(\mathbf{H}(\mathbf{W}(k))-\mathbf{H}_0\big)\boldsymbol{e}_{\mathrm{net}}(k).
  \tag{E.35}\label{eq:E.Delta-rec-1}
\end{align}
Taking the 2-norm and using the submultiplicativity of the operator norm,
\begin{align}
  \|\Delta(k+1)\|_2
  &\le
   \|\mathbf{I}-\eta\mathbf{H}_0\|_2\,\|\Delta(k)\|_2
   \nonumber\\
  &\quad + \eta\big\|\mathbf{H}(\mathbf{W}(k))-\mathbf{H}_0\big\|_2
        \|\boldsymbol{e}_{\mathrm{net}}(k)\|_2.
  \tag{E.36}\label{eq:E.Delta-rec-2}
\end{align}
As in the previous subsection,
\[
  \|\mathbf{I}-\eta\mathbf{H}_0\|_2
  = \max_i |1-\eta\lambda_i|
  \le 1-\eta\frac{\lambda_0}{2}
  =: \rho < 1.
\]
By Lemma~\ref{lem:E-ntk-stab}, for all $k\le T$ we have
$\|\mathbf{H}(\mathbf{W}(k))-\mathbf{H}_0\|_2\le C_2/\sqrt{m}$.
Furthermore,
\[
  \|\boldsymbol{e}_{\mathrm{net}}(k)\|_2
  \le \|\boldsymbol{e}_{\mathrm{ker}}(k)\|_2 + \|\Delta(k)\|_2.
\]
The frozen-NTK error satisfies the linear rate from
\eqref{eq:E.gd-linear}:
\[
  \|\boldsymbol{e}_{\mathrm{ker}}(k)\|_2
  \le
  \rho^k\,\|\boldsymbol{u}(0)-\boldsymbol{y}\|_2.
\]
Substituting these bounds into \eqref{eq:E.Delta-rec-2}, and writing
$\varepsilon_m\triangleq C_2/\sqrt{m}$, yields
\begin{align}
  \|\Delta(k+1)\|_2
  &\le
  \rho\,\|\Delta(k)\|_2
  + \eta\,\varepsilon_m\big(
       \|\boldsymbol{e}_{\mathrm{ker}}(k)\|_2+\|\Delta(k)\|_2\big)
     \nonumber\\
  &\le
  (\rho+\eta\varepsilon_m)\,\|\Delta(k)\|_2
  + \eta\varepsilon_m\,\rho^k\|\boldsymbol{u}(0)-\boldsymbol{y}\|_2.
  \tag{E.37}\label{eq:E.Delta-rec-3}
\end{align}

For sufficiently large $m$, we can ensure
$\eta\varepsilon_m\le\frac{1}{2}(1-\rho)$. Then
\[
  \rho+\eta\varepsilon_m
  \le \rho+\tfrac12(1-\rho)
  = \frac{1+\rho}{2}
  =: \tilde\rho <1.
\]
Thus \eqref{eq:E.Delta-rec-3} simplifies to
\begin{equation}
  \|\Delta(k+1)\|_2
  \le
  \tilde\rho\,\|\Delta(k)\|_2
  + \eta\varepsilon_m\,\rho^k\|\boldsymbol{u}(0)-\boldsymbol{y}\|_2.
  \tag{E.38}\label{eq:E.Delta-rec-4}
\end{equation}
Since the initialization is shared between the network and the frozen-NTK
model, we have $\Delta(0)=\boldsymbol{0}$.

Iterating \eqref{eq:E.Delta-rec-4} and using a discrete Grönwall argument,
we obtain
\begin{align}
  \|\Delta(k)\|_2
  &\le
  \eta\varepsilon_m\,\|\boldsymbol{u}(0)-\boldsymbol{y}\|_2
  \sum_{\tau=0}^{k-1}\tilde\rho^{\,k-1-\tau}\rho^\tau.
  \tag{E.39}\label{eq:E.Delta-sum}
\end{align}
The finite sum
$\sum_{\tau=0}^{k-1}\tilde\rho^{k-1-\tau}\rho^\tau$ is uniformly bounded
in $k$ by a constant depending only on $(\rho,\tilde\rho)$, so there
exists $C_3>0$ (independent of $m$ and $k$) such that
\[
  \|\Delta(k)\|_2
  \le
  C_3\,\varepsilon_m
  = \frac{C_3 C_2}{\sqrt{m}},
  \qquad 0\le k\le T.
\]
Recalling $\epsilon(k)=\|\Delta(k)\|_2$ and absorbing $C_2$ into $C_3$
completes the proof.
\end{proof}

Putting Lemma~\ref{lem:E-ntk-stab} and Lemma~\ref{lem:E-kernel-drift}
together, we obtain the following picture in the over-parameterized NTK
regime:

\begin{itemize}
\item The frozen-NTK model enjoys a convergence rate fully controlled by
      the smallest eigenvalue of $\mathbf{H}_0$, with an exact spectral
      decomposition given in \eqref{eq:E.gd-norm-exact-u}.
\item For polynomially many iterations, the finite-width network stays
      $O(m^{-1/2})$-close (in prediction space) to this kernel model, as
      quantified by the residual term $\epsilon(k)$.
\end{itemize}

This justifies approximating the training dynamics in the main text by a
nearly frozen NTK and explains the appearance of $\epsilon(k)$ in
Theorem~\ref{thm:gd}. In particular, as $m\to\infty$, the residual
vanishes and the network converges at essentially the same rate as its
limiting kernel model.

\end{document}